\newtheorem{theorem}{Theorem}
\newtheorem{lemma}{Lemma}
\newtheorem{corollary}{Corollary}
\newcommand{\R}{\mathbb{R}}
\newcommand{\by}{\boldsymbol{y}}
\newcommand{\cA}{\mathcal{A}}
\newcommand{\cB}{\mathcal{B}}
\newcommand{\cD}{\mathcal{D}}
\newcommand{\cE}{\mathcal{E}}
\newcommand{\cF}{\mathcal{F}}
\newcommand{\cG}{\mathcal{G}}
\newcommand{\cH}{\mathcal{H}}
\newcommand{\cI}{\mathcal{I}}
\newcommand{\cJ}{\mathcal{J}}
\newcommand{\cK}{\mathcal{K}}
\newcommand{\cL}{\mathcal{L}}
\newcommand{\cN}{\mathcal{N}}
\newcommand{\cS}{\mathcal{S}}
\newcommand{\cX}{\mathcal{X}}
\newcommand{\cZ}{\mathcal{Z}}
\newcommand{\argmax}{\operatornamewithlimits{argmax}}
\newcommand{\argmin}{\operatornamewithlimits{argmin}}
\mathchardef \mhyphen="2D
\newcommand{\sbr}[1]{\left( #1 \right)}
\newcommand{\mbr}[1]{\left[ #1 \right]}
\newcommand{\lbr}[1]{\left\{ #1 \right\}}
\newcommand{\abr}[1]{\left| #1 \right|}
\newcommand{\ti}[1]{\tilde{ #1 }}
\newcommand{\bs}[1]{\boldsymbol{ #1 }}
\newcommand{\ex}{\mathbb{E}}
\newcommand{\kl}{\textup{KL}}
\newcommand{\tv}{\textup{TV}}
\newcommand{\trace}{\mathtt{Trace}}
\newcommand{\rank}{\mathtt{rank}}
\newcommand{\eff}{\textup{eff}}
\newcommand{\algkernelbai}{\mathtt{CoKernelFC}}
\newcommand{\algkernelbaiFB}{\mathtt{CoKernelFB}}
\newcommand{\round}{\mathtt{ROUND}}
\newcommand{\poly}{\textup{Poly}}
\newcommand{\err}{Err}
\newcommand{\algcomment}[1]{{\color{blue} \hfill // #1}}
\newcommand{\revision}[1]{{\color{black} #1}}
\newcommand{\compilehidecomments}{true} %HIDE comments
	\newcommand{\longbo}[1]{}
	\newcommand{\wei}[1]{}
	\newcommand{\yuko}[1]{}
	\newcommand{\yihan}[1]{}
	\newcommand{\longbo}[1]{{\color{purple}  [\text{Longbo:} #1]}}
	\newcommand{\wei}[1]{{\color{brown} [\text{Wei:} #1]}}
	\newcommand{\yuko}[1]{{\color{orange} [\text{Yuko:} #1]}}
	\newcommand{\yihan}[1]{{\color{teal} [\text{Yihan:} #1]}}
\newcommand{\compilefullversion}{full} %Compile Full Version
	\newcommand{\OnlyInFull}[1]{}
	\newcommand{\OnlyInShort}[1]{#1}
	\newcommand{\OnlyInFull}[1]{#1}%
	\newcommand{\OnlyInShort}[1]{}%
\title{Collaborative Pure Exploration in Kernel Bandit}
\author{Yihan Du\\
	Institute for Interdisciplinary Information Sciences\\
	Tsinghua University\\
	Beijing, China \\
	\texttt{duyh18@mails.tsinghua.edu.cn} \\
	\And
	Wei Chen \\
	%	Department of Computational Neuroscience \\
	Microsoft Research \\
	Beijing, China \\
	\texttt{weic@microsoft.com} \\
	\And
	Yuko Kuroki \\
%	Department of Computer Science\\
	The University of Tokyo \& RIKEN AIP, Tokyo, Japan\\
	%Tokyo, Japan \\
	% \texttt{yukok@is.s.u-tokyo.ac.jp} \\
	%\\
	CENTAI Institute, Turin, Italy \\
	\texttt{yuko.kuroki@centai.eu} \\
	\And
	Longbo Huang \thanks{Corresponding author.} \\
	Institute for Interdisciplinary Information Sciences \\
	Tsinghua University\\
	Beijing, China \\
	\texttt{longbohuang@tsinghua.edu.cn}
}
\begin{document}
	
	\maketitle
	
	\begin{abstract}
		In this paper, we propose a novel Collaborative Pure Exploration in Kernel Bandit model (CoPE-KB), where multiple agents collaborate to complete different but related tasks with limited communication.
		Our model generalizes prior CoPE formulation with the single-task and classic MAB setting to allow multiple tasks and general reward structures.
		We propose a novel communication scheme with an efficient kernelized estimator, and design algorithms $\algkernelbai$ and $\algkernelbaiFB$ for CoPE-KB with fixed-confidence and fixed-budget objectives, respectively. 
		%Nearly matching upper and lower bounds 
		Sample and communication complexities are provided to demonstrate the efficiency of our algorithms.
		Our theoretical results explicitly quantify how task similarities influence learning speedup, and only depend on the effective dimension of feature space. 
		%	Our novel techniques including an efficient kernelized estimator and linear structured instance transformation, which overcome the communication difficulty in high-dimensional feature space and derive communication round lower bounds, can be of independent interests. 
		Our novel techniques, such as an efficient kernelized estimator and decomposition of task similarities and arm features, which overcome the communication difficulty in high-dimensional feature space and reveal the impacts of task similarities on sample complexity, can be of independent interests.
	\end{abstract}
	
	\section{Introduction}
	
	Pure exploration~\citep{even2006action,kalyanakrishnan2012pac,kaufmann2016complexity} is a fundamental online learning problem in multi-armed bandits~\citep{thompson1933,lai_robbins1985,UCB_auer2002}, where an agent chooses options (often called arms) and observes random feedback with the objective of identifying the best arm. This formulation has found many important applications, such as web content optimization~\citep{web_content_optimization} and online advertising~\citep{online_advertising}.
	%and neural architecture search~\cite{neural_architecture_search}.
	%
	%This problem has various applications such as neural architecture search~\cite{neural_architecture_search}, web content optimization~\cite{web_content_optimization} and online advertising~\cite{online_advertising}.
	%However, traditional single-agent pure exploration problems~\cite{even2006action,kalyanakrishnan2012pac,kaufmann2016complexity} cannot be directly applied to real-world distributed online learning systems, which often face heavy computation load and need multiple computing devices to process tasks in parallel, e.g., distributed web servers~\cite{web_servers2003} and distributed data centers~\cite{zhenhua2011}.
	%
	However, traditional pure exploration~\citep{even2006action,kalyanakrishnan2012pac,kaufmann2016complexity} only considers single-agent decision making, and cannot be applied to prevailing distributed systems in real world, which often face a heavy computation load and require multiple parallel devices to process tasks, e.g., distributed web servers~\citep{web_servers2003} and data centers~\citep{zhenhua2011}.

	To handle such distributed applications, prior works \citep{distributed2013,tao2019collaborative,top_collaborative2020} have developed the Collaborative Pure Exploration (CoPE) model, where multiple agents communicate and cooperate to identify the best arm with learning speedup. Yet, existing results focus only on the classic multi-armed bandit (MAB) setting with \emph{single} task, i.e., all agents solve a common task and the rewards of arms are \emph{individual} values (rather than generated by a reward function). 
	%the rewards of arms are associated with individual values rather than generated by a reward function.
	However, in many distributed applications such as multi-task neural architecture search~\citep{multi_task_nas}, different devices can face \emph{different but related} tasks, and there exists similar dependency of rewards on option features among different tasks.
	% is often similar
	Therefore, it is important to develop a more general CoPE model to allow  heterogeneous tasks and structured reward dependency, and further theoretically understand how task correlation impacts learning.

	Motivated by the above fact, we propose a novel Collaborative Pure Exploration in Kernel Bandit (CoPE-KB) model. 
	Specifically, in CoPE-KB, each agent is given a set of arms, and the expected reward of each arm is generated by a task-dependent reward function in a high and possibly infinite dimensional Reproducing Kernel Hilbert Space (RKHS)~\citep{RKHS1990,learning_with_kernel2002}. %Considering RKHS allows us to go beyond linear rewards and represent general reward dependency as in the literature, e.g., \cite{GP_UCB_Srinivas2010,kernelUCB_valko2013,high_dimensional_ICML2021}. 
	Each agent sequentially chooses arms to sample and observes random outcomes in order to identify the best arm. Agents can broadcast and receive messages to and from others in communication rounds, so that they can collaborate and exploit the task similarity to expedite learning processes. 
	%The task of each agent is to find the best arm from her arm set. 

	%Our CoPE-KB formulation is a novel generalization of prior CoPE  problem~\citep{distributed2013,tao2019collaborative,top_collaborative2020}. CoPE-KB not only extends prior models from the single-task setting to  multiple tasks, but also goes beyond classic MAB setting and allows general (linear or nonlinear) reward structures. 
	%%  by powerful kernel representation.
	%Specifically, classic MAB setting~\citep{distributed2013,tao2019collaborative,top_collaborative2020} considers \emph{individually} learning the rewards of arms, which ignores common reward structures among arms and probably induces sample inefficiency.
	%%is inefficient when the number of arms is large.
	%%On the other hand,  linear bandit~\cite{rage2019,peace2020}  cannot handle nonlinear reward dependency, which is important in many applications such as neural architecture search~\cite{neural_architecture_search} and clinical trials~\cite{clinical_trial}.
	
	Our CoPE-KB model is a novel generalization of prior CoPE  problem~\citep{distributed2013,tao2019collaborative,top_collaborative2020}, which not only extends prior models from the single-task setting to multiple tasks, but also goes beyond classic MAB setting and allows general (linear or nonlinear) reward structures. CoPE-KB is most suitable for applications involving multiple tasks and complicated reward structures.
	For example, in multi-task neural architecture search~\citep{multi_task_nas}, one wants to search for best architectures for different but related tasks on multiple devices, e.g., the object detection~\citep{object_detection_nas} and object tracking~\citep{object_tracking_nas} tasks in computer vision, which often use similar neural architectures. Instead of individually evaluating each possible architecture, one prefers to directly learn the relationship (reward function) between the accuracy results achieved and the features of used architectures (e.g., the type of neural networks), and exploit the similarity of reward functions among tasks to accelerate the search.
	
	%As another example, in web content optimization~\cite{web_content_optimization}, we  want to identify the best item for different user groups. Since there are numerous possible items, it is inefficient to estimate the reward of each item. Instead, we can directly learn the dependency of user clicks on item features, and utilize the similarity of such dependency among different user groups to attain learning speedup. 

	Our CoPE-KB generalization faces a unique challenge on \emph{communication}. Specifically, in prior CoPE works with classic MAB setting~\citep{distributed2013,tao2019collaborative,top_collaborative2020}, agents only need to learn \emph{scalar} rewards, which are easy to transmit. 
	However, under the kernel model, agents need to estimate a \emph{high or even infinite dimensional} reward parameter, which is inefficient to directly transmit. 
	%If one naively adapts prior federated linear bandit algorithms~\cite{dubey2020differentially,huang2021federated} which transmit the whole reward parameter, he/she will suffer a huge communication cost $O(\dim(\cH))$, where $\cH$ is the high-dimensional RKHS.
	Also, if one naively adapts existing reward estimators for kernel bandits~\citep{GP_UCB_Srinivas2010,high_dimensional_ICML2021} to learn this high-dimensional reward parameter, he/she will suffer an expensive communication cost dependent on the number of samples $N^{(r)}$, since the reward estimators there need \emph{all raw sample outcomes} to be transmitted. To tackle this challenge, we develop an efficient kernelized estimator, which only needs \emph{average outcomes} on $nV$ arms and reduces the required transmitted messages from $O(N^{(r)})$ to $O(nV)$. Here $V$ is the number of agents, and $n$ is the number of arms for each agent. The number of samples $N^{(r)}$ depends on the inverse of the minimum reward gap, and is often far larger than the number of arms $nV$.
	
	%Unlike prior CoPE~\cite{distributed2013,tao2019collaborative,top_collaborative2020} or federated linear bandit~\cite{dubey2020differentially,huang2021federated,li2022asynchronous} works which only need to communicate scalar rewards of individual arms or directly transmit the whole vectors of reward parameters (low-dimensional), our CoPE-KB model faces unique challenges in how to \emph{communicate high-dimensional (possibly infinite) reward parameters} under kernel representation, and how to quantify the impacts of task similarities on learning speedup.
	%To tackle these challenges, we develop an efficient kernelized estimator to simplify the required  communication information in RKHS, and an instance transformation analytical technique to derive the essential communication rounds under high-dimensional reward structures. 

	Under the CoPE-KB model, we study two popular objectives, i.e.,  Fixed-Confidence (FC), where we aim to minimize the number of samples used under a given confidence, and Fixed-Budget (FB), where the goal is to minimize the error probability under a given sample budget. 
	We design two algorithms $\algkernelbai$ and $\algkernelbaiFB$, which adopt an efficient kernelized estimator to 
	simplify the required data transmission and enjoy a $O(nV)$ communication cost, instead of
	% $O(\dim(\cH))$ as in adaptions of prior federated linear bandit works~\cite{dubey2020differentially,huang2021federated,li2022asynchronous} or 
	$O(N^{(r)})$ as in adaptions of existing kernel bandit algorithms~\citep{GP_UCB_Srinivas2010,high_dimensional_ICML2021}. 
	%Here $\cH$ denotes the high-dimensional RKHS. 
	We provide sampling and communication guarantees, and also interpret them by standard kernel measures, e.g., maximum information gain and effective dimension.
	Our results rigorously quantify the influences of task similarities on learning acceleration, and hold for both finite and infinite dimensional feature space. 
	
	The contributions of this paper are  summarized as follows:
	\begin{itemize}
		\item We formulate a collaborative pure exploration in kernel bandit (CoPE-KB) model, which generalizes prior single-task CoPE formulation to allow multiple tasks and general reward structures, and consider two objectives, i.e., fixed-confidence (FC) and fixed-budget (FB).  
		% finds many applications including neural architecture search
		
		\item For the FC objective, we propose algorithm $\algkernelbai$, which adopts an efficient kernelized estimator to simplify the required data transmission and enjoys only a $O(nV)$ communication cost.
		%, instead of $O(N^{(r)})$ as in adaptions of existing algorithms. 
		We derive sample complexity $\tilde{O}( \frac{\rho^*(\xi)}{V} \log \delta^{-1})$ and communication rounds $O(\log \Delta^{-1}_{\min})$.
		%	, and interpret our results by maximum information gain and effective dimension. 
		Here $\xi$ is the regularization parameter, and $\rho^*(\xi)$ is the problem hardness (see Section~\ref{sec:fc_ub}). 
		%We also interpret our sample complexity by standard kernel measures, i.e., maximum information gain and effective dimension.
		
		\item For the FB objective, we design a novel algorithm $\algkernelbaiFB$ with error probability $\tilde{O}( \exp(-\frac{ T V }{ \rho^*(\xi)}) n^2 V )$ and communication rounds $O( \log(\omega(\xi,\ti{\cX})) )$. \revision{Here $T$ is the sample budget, $\ti{\cX}$ is the set of task-arm feature pairs, and $\omega(\xi,\ti{\cX})$ is the principle dimension of data projections in $\ti{\cX}$ to RKHS (see Section~\ref{sec:alg_fb}). }
		%The proposed algorithms and their theoretical guarantees hold for both finite and infinite dimensional RKHS.
		%We also establish a lower bound  to validate the communication optimality of $\algkernelbaiFB$. % (up to double-logarithmic factors). 
		
		\item Our algorithms offer an efficient communication scheme for information exchange in high dimensional feature space. Our results explicitly quantify how task similarities impact learning acceleration, and only depend on the effective dimension of feature space. 
		%	Moreover, we develop novel techniques, e.g., an efficient kernelized estimator and linear structured instance transformation, to simplify message transmission and derive the intrinsically needed communication rounds, which can be of independent interests. 
	\end{itemize}
	
	Due to the space limit, we defer all proofs to the appendix. 
	
	\section{Related Work} 
	
	Below we review the most related works, and defer a complete literature review to Appendix~\ref{apx:related_work}.
	
	\textbf{Collaborative Pure Exploration (CoPE).}
	\cite{distributed2013,tao2019collaborative} initiate the CoPE literature with the single-task and classic MAB setting, where all agents solve a common classic best arm identification problem (without reward structures). 
	\cite{top_collaborative2020} further extend the formulation in \citep{distributed2013,tao2019collaborative} to best $m$ arm identification. 
	% \revision{In contrast to prior CoPE works~\citep{tao2019collaborative,top_collaborative2020} which consider minimizing the maximum number of samples used by individual agents, our goal is to minimize the average (total) number of samples, which is particularly useful for the applications where obtaining a sample is expensive, e.g., clinical trails~\citep{weninger2019multi}. }
	Our CoPE-KB generalizes prior CoPE works to allow multiple tasks and general reward structures, and faces a unique challenge on communication due to the high-dimensional feature space.
	\revision{Recently, \cite{wang2019distributed} study distributed multi-armed and linear bandit problems, and \cite{he2022simple} investigate federated linear bandits with asynchronous communication. Their algorithms directly communicate the estimated reward parameters and cannot be applied to solve our problem, since under kernel representation, the reward parameter is high-dimensional and expensive to explicitly transmit.}
	
	%, since the reward parameter we need to learn is high-dimensional and inefficient to transmit. 

	%There are also other works studying collaborative or distributed bandits with the regret minimization objective. For example, \citet{liu2010distributed,rosenski2016multi,bistritz2018distributed} study multi-player bandits with collisions, where players choose arms from the same set and need to minimize collisions. 
	%\citet{bubeck2020coordination,bubeck2021cooperative} investigate a variant problem, where players cannot communicate but have access to shared randomness.
	%%\citet{chakraborty2017coordinated} introduce another distributed bandit problem, where each agent decides either to pull an arm or to broadcast a message in order to maximize the total reward. \citet{szorenyi2013gossip,korda2016distributed} adapt bandit algorithms to peer-to-peer networks, where the peers pick arms from the same set and can only communicate with a few random others along network links. 
	%The above works consider largely different objectives and communication protocols from ours, and do not involve the challenges on multi-task coordination and round-speedup trade-off.

	\textbf{Kernel Bandits.}
	\cite{GP_UCB_Srinivas2010,kernelUCB_valko2013,scarlett2017lower,li2022gaussian} investigate the single-agent kernel bandit problem and establish regret bounds dependent on maximum information gain.
	\cite{krause2011contextual,multi_task_2017} study multi-task kernel bandits based on composite kernel functions. 
	\cite{dubey2020kernel,li2022communication} consider multi-agent kernel bandits with local-broadcast and client-server communication protocols, respectively.
	\cite{high_dimensional_ICML2021,neural_pure_exploration2021} investigate single-agent pure exploration in kernel space. 
	The above kernel bandit works consider either regret minimization or single-agent formulation, which cannot be applied to resolve our challenges on round-speedup analysis and communication.

	\section{Collaborative Pure Exploration in Kernel Bandit (CoPE\mbox{-}KB)} \label{sec:formulation}
	
	In this section, we define the Collaborative Pure Exploration in Kernel Bandit (CoPE-KB) problem.

	\textbf{Agents and Rewards.} There are $V$ agents indexed by $[V]:=\{1, \dots, V\}$, who collaborate to solve different but possibly related instances (tasks) of the Pure Exploration in Kernel Bandit (PE-KB) problem. For each agent $v \in [V]$, she is given a set of $n$ arms $\cX_v=\{x_{v,1}, \dots, x_{v,n}\} \subseteq \R^{d_{\cX}}$, where $x_{v,i}$ ($i \in [n]$) describes the arm feature, and $d_{\cX}$ is the dimension of arm feature vectors. 
	The expected reward of each arm $x \in \cX_v$ is $f_v(x)$, where $f_v:\cX_v \mapsto \R$ is an unknown reward function. 
	Let $\cX:=\cup_{v \in [V]} \cX_v$.
	At each timestep $t$, each agent $v$ pulls an arm $x_{v}^{t} \in \cX_v$ and observes a random reward $y_{v}^{t}=f_v(x_{v}^{t})+\eta_{v}^{t}$. Here $\eta_{v}^{t}$ is a zero-mean and $1$-sub-Gaussian noise, and it is independent among different $t$ and $v$.
	%\footnote{A random variable $\eta$ is called $1$-sub-Gaussian if it satisfies that $\ex[\exp(\lambda \eta-\lambda \ex[\eta])] \leq \exp(\lambda^2/2)$ for any $\lambda \in \R$. This assumption is made without loss of generality.}
	We assume that the best arms $x_{v,*}:=\argmax_{x \in \cX_v} f_v(x)$ are unique for all $v \in [V]$, which is a common assumption in the pure exploration literature~\citep{even2006action,best_arm_COLT2010,kaufmann2016complexity}.

	\textbf{Multi-Task Kernel Composition.}
	%To capture the relationship between different tasks,
	We assume that the functions $f_v$ are parametric functionals of a global function $F:\cZ \times \cX \mapsto \R$, which satisfies that, for each agent $v \in [V]$, there exists a task feature vector $z_v \in \cZ$ such that
	\begin{eqnarray}
		f_v(x)=F(z_v, x), \ \forall x \in \cX_v. \label{global_function}
	\end{eqnarray}
	Here $\cZ$ and $\cX$ denote the task feature space and arm feature space, respectively. Note that
	Eq.~\eqref{global_function} allows tasks to be different for agents (by having different task features), whereas prior CoPE works~\citep{distributed2013,tao2019collaborative,top_collaborative2020} restrict the tasks ($\cX_v$ and $f_v$) to be the same for all agents $v \in [V]$.
	%, and their results do not directly generalize to the multi-task setting. 

	\revision{We denote a task-arm feature pair (i.e., overall input of function $F$) by $\ti{x}:=(z_v,x)$, and denote the space of task-arm feature pairs (i.e., overall input space of function $F$) by $\ti{\cX}:=\cZ \times \cX$.} 
	% For simplicity, we also call $\ti{x}$ an arm in the following part.
	As a standard assumption in kernel bandits~\citep{krause2011contextual,multi_task_2017,dubey2020kernel}, we assume that $F$ has a bounded norm in a high (possibly infinite) dimensional Reproducing Kernel Hilbert Space (RKHS) $\cH$ specified by the kernel $k:\ti{\cX} \times \ti{\cX} \mapsto \R$, and 
	% according to the reproducing property of RKHS $\cH$~\cite{RKHS1990,learning_with_kernel2002} 
	there exists a feature mapping $\phi: \ti{\cX} \mapsto \cH$ and an unknown parameter $\theta^* \in \cH$ such that\footnote{For any $h,h' \in \cH$, we denote their inner-product by $h^\top h':=\langle h,h' \rangle_{\cH}$ and denote the norm of $h$ by $\|h\|:=\sqrt{h^\top h}$. For any $h \in \cH$ and matrix $M$, we denote $\|h\|_{M^{-1}}:=\sqrt{h^\top M^{-1} h}$.} 
	\begin{align*}
		F(\ti{x}) &=  \phi(\ti{x})^\top \theta^*, \  \forall \ti{x} \in \ti{\cX} ,
		\\
		k(\ti{x}, \ti{x}') &=  \phi(\ti{x})^\top \phi(\ti{x}'), \  \forall \ti{x}, \ti{x}' \in \ti{\cX} .
	\end{align*}
	%
	%Here $\|\theta^*\|:=\sqrt{{\theta^*}^\top \theta^*} \leq B$ for some known constant $B>0$. 
	Here $k(\cdot, \cdot)$ is a product composite kernel, which satisfies that for any $z,z' \in \cZ$ and $x,x' \in \cX$, 
	$$
	k((z,x),(z', x'))=k_{\cZ}(z,z') \cdot k_{\cX}(x,x'),
	$$
	where $k_{\cZ}: \cZ \times \cZ \mapsto \R$ is the task feature kernel which measures the similarity of functions $f_v$, and
	$k_{\cX}: \cX \times \cX \mapsto \R$ is the arm feature kernel which depicts the feature structure of arms. 
	%Let $K_{\cZ}:=[k_{\cZ}(z_i,z_j)]_{i,j\in [V]}$ and $K_{\cX}:=[k_{\cX}(x_i,x_j)]_{i,j \in [nV]}$ denote the kernel matrices of $k_{\cZ}$ and $k_{\cX}$, respectively (here we use $x_i$ with a single subscript to denote the $i$-th arm in $\cX$ for any $i \in [nV]$ for convenience).
	%$k_{\cZ}$ allows us to characterize the influences of task similarities ($1 \leq \rank(k_{\cZ}) \leq V$) on learning.
	The computation of kernel function $k(\cdot,\cdot)$ operates only on low-dimensional input data. By expressing all calculations via $k(\cdot,\cdot)$, it enables us to avoid maintaining high-dimensional $\phi(\cdot)$ and attain computation efficiency. In addition, the composite structure of $k(\cdot,\cdot)$ allows us to model the similarity among tasks.

	\begin{wrapfigure}[15]{r}{0.6\textwidth}
		\centering  
		\vspace*{-2em}
		\includegraphics[width=0.6\textwidth]{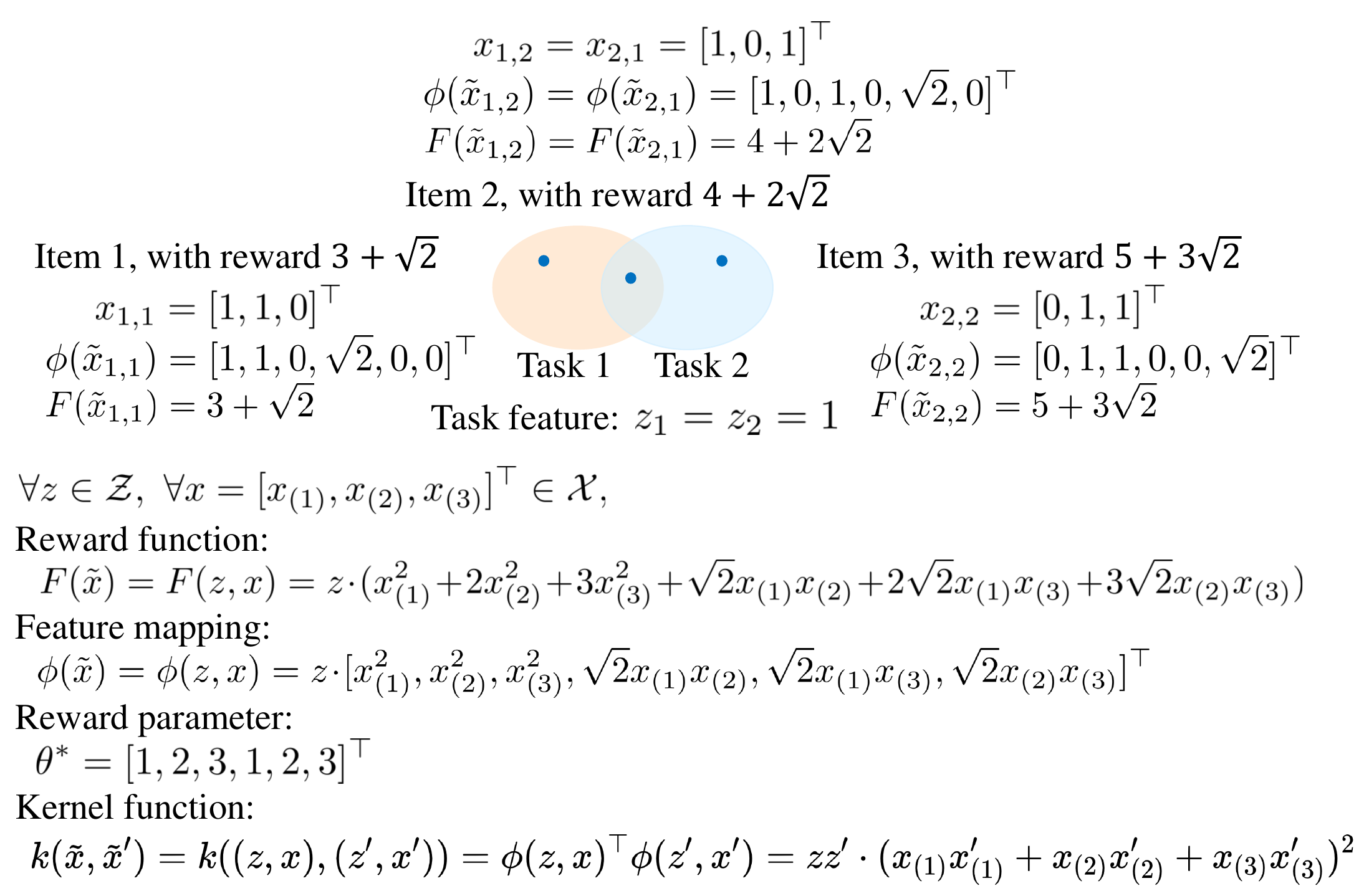}  
		\vspace*{-2em}
		\caption{Illustrating example for CoPE-KB.}     
		\label{fig:example}     
	\end{wrapfigure}
	
	Let $K_{\cZ}:=[k_{\cZ}(z_v,z_{v'})]_{v,v'\in [V]}$ denote the kernel matrix of task features.
	$\rank(K_{\cZ})$ characterizes how much the tasks among agents are similar. 
	For example, if agents solve a common task, i.e., the arm set $\cX_v$ and reward function $f_v$ are the same for all agents, we have that the task feature $z_v$ is the same for all $v \in [V]$, and $\rank(K_{\cZ})=1$; If all tasks are totally different, $\rank(K_{\cZ})=V$.

	%\begin{figure}[t]
	%	\centering  
	%	\includegraphics[width=0.6\textwidth]{}  
	%	\caption{Illustrating example of CoPE-KB.}     
	%	\label{fig:example}     
	%\end{figure}
	
	To better illustrate the model, we provide a $2$-agent example in Figure~\ref{fig:example}. 
	There are Items $1, 2, 3$ with expected rewards $3+\sqrt{2}, 4+2\sqrt{2}, 5+3\sqrt{2}$, respectively. 
	Agent 1 is given Items $1, 2$, denoted by $\cX_1=\{x_{1,1},x_{1,2}\}$, and Agent 2 is given Items $2, 3$, denoted by $\cX_2=\{x_{2,1},x_{2,2}\}$, where both $x_{1,2}$ and $x_{2,1}$ refer to Item 2.
	The task feature $z_1=z_2=1$, which means that the expected rewards of all items are generated by a common function.
	The reward function $F$ is nonlinear with respect to $x$, but can be represented in a linear form in a higher-dimensional feature space, i.e.,  $F(\tilde{x})=F(z,x)=\phi(\tilde{x})^\top \theta^*$ with $\tilde{x}:=(z,x)$ for any $z \in \cZ$ and $x\in \cX$.  %which is represented in a linear form in a higher-dimensional space.
	Here $\phi(\tilde{x})$ is the feature embedding, and $\theta^*$ is the reward parameter.
	The computation of kernel function $k(\tilde{x},\tilde{x}')$ only involves low-dimensional input data $\tilde{x}$ and $\tilde{x}'$, rather than higher-dimensional feature embedding $\phi(\cdot)$.
	$F,\phi,\theta^*$ and $k$ are specified in Figure~\ref{fig:example}. 
	The two agents can share the learned information on $\theta^*$ to accelerate learning.

	%\begin{figure}[t]
	%	\centering    
	%	\includegraphics[width=0.7\columnwidth]{fig/fig_example_nonlinear.pdf}  
	%%	\vspace*{-2em}
	%	\caption{A $2$-agent illustrating example. Agent 1 is given Items 1,2 and Agent 2 is given Items 2,3, where the expected rewards of items are generated by function $F$. Agents 1,2 share learned information on $\theta^*$ to accelerate learning.}
	%	\label{fig:example}     
	%\end{figure}

	\textbf{Communication.} 
	Following the popular communication protocol in the CoPE literature~\citep{distributed2013,tao2019collaborative,top_collaborative2020}, we allow these $V$ agents to exchange information via \emph{communication rounds}, in which each agent can broadcast and receive messages from others. 
	Following existing CoPE works~\citep{distributed2013,tao2019collaborative,top_collaborative2020}, we restrict the length of each message within $O(n)$ bits for practicability, where $n$ is the number of arms for each agent, and we consider the number of bits for representing a real number as a constant. We want agents to cooperate and complete all tasks using as few communication rounds as possible.
	%We measure the communication complexity by the number of communication rounds used.

	%For practical implementation, the length of each message should be bounded by $O(n)$ bits. Here $n$ is the number of arms for each agent, and we consider the required number of bits for representing a real number as a constant. 

	%Since the synchronization among agents is often expensive in large distributed online systems, we aim to let agents cooperate to simultaneously complete multiple tasks using few communication rounds and few bits for a message in each round.

	\textbf{Objectives.}
	We consider two objectives of the CoPE-KB problem, one with Fixed-Confidence (FC) and the other with Fixed-Budget (FB). In the FC setting, given a confidence parameter $\delta \in (0,1)$, the agents aim to identify $x_{v,*}$ for all $v \in [V]$ with probability at least $1-\delta$, and minimize the average number of samples used per agent. 
	In the FB setting, the agents are given an overall $T V$ sample budget ($T$ average samples per agent), and aim to use at most $T V$ samples to identify $x_{v,*}$ for all $v \in [V]$ and minimize the error probability.
	In both FC and FB settings, the agents 
	are requested to minimize the number of communication rounds and control the length of each message within $O(n)$ bits, as in the CoPE literature~\citep{distributed2013,tao2019collaborative,top_collaborative2020}. 
	
	\revision{
	Different from prior CoPE-KB works~\citep{tao2019collaborative,top_collaborative2020} which consider minimizing the maximum number of samples used by individual agents, we aim to minimize the average (total) number of samples used. 
	Our objective is motivated by the fact that in many applications, obtaining a sample is expansive, e.g., clinical trials~\citep{weninger2019multi}, and thus it is important to minimize the average (total) number of samples required. 
	For example, consider that a medical institution wants to conduct multiple clinical trials to identify the best treatments for different age groups of patients (different tasks), and share the obtained data to accelerate the development. Since conducting a trial can consume significant medical resources and funds (e.g., organ transplant surgeries and convalescent plasma treatments for COVID-19), the institution wants to minimize the total number of trials required. Our CoPE-KB model is most suitable for such scenarios.
	}

	To sum up, in CoPE-KB, we let agents collaborate to simultaneously complete multiple related best arm identification tasks using few communication rounds. 
	In particular, when all agents solve the same task and $\cX$ is the canonical basis,
	% (does not have common reward structures), 
	our CoPE-KB reduces to existing CoPE with classic MAB setting~\citep{distributed2013,tao2019collaborative}.
	
	%Now we are ready to introduce our algorithms and results.

	\section{Fixed-Confidence CoPE-KB}
	We start with the fixed-confidence setting. We present algorithm $\algkernelbai$ equipped with an efficient kernelized estimator, and provide theoretical guarantees in sampling and communication.

	\begin{algorithm*}[t]
		\caption{Collaborative Multi-agent Algorithm $\algkernelbai$: for Agent $v \in [V]$} \label{alg:kernel_bai}
		\begin{algorithmic}[1]
			\STATE {\bfseries Input:} $\delta$, $\ti{\cX}$, 
			$k(\cdot,\cdot): \ti{\cX} \times \ti{\cX} \mapsto \R $, regularization parameter $\xi$, rounding procedure $\round$, rounding approximation parameter $\varepsilon = \frac{1}{10}$
			\STATE {\bfseries Initialization:}  $\delta_r:= \frac{\delta}{2 r^2}$ for all $r\geq 1$. $\cB_{v'}^{(1)} \leftarrow \ti{\cX}_{v'}$ for all $v' \in [V]$.  \label{line:initialize} 
			\FOR{\revision{\textup{round} $r=1,2,\dots$}} 
			\STATE  Let $\lambda^*_r$ and $\rho^*_r$ be the optimal solution and optimal value of
			\\
			$\min \limits_{\lambda \in \triangle_{\ti{\cX}}} \max \limits_{\ti{x}_i,\ti{x}_j \in \cB_{v'}^{(r)}, v' \in [V]}  \| \phi(\ti{x}_i)-\phi(\ti{x}_j) \|^2_{(\xi I + \sum_{\ti{x} \in \ti{\cX}} \lambda_{\ti{x}} \phi(\ti{x}) \phi(\ti{x})^\top)^{-1}} $
			\label{line:min_max_lambda}  \algcomment{compute the optimal sample allocation}
			% \STATE Let $\tau(\xi,\lambda^*_r, \varepsilon)=O(\frac{\tilde{d}(\xi,\lambda^*_r)}{\varepsilon^2})$ denote the number of samples needed by $\round$, where $\tilde{d}(\xi,\lambda^*_r)$ is the number of eigenvalues of $\sum_{\ti{x} \in \ti{\cX}} \lambda^*_r(\ti{x})  \phi(\ti{x}) \phi(\ti{x})^\top$ that are greater than $\xi$.
			\STATE  $N^{(r)} \leftarrow \max \{\lceil 32 (2^r)^2 (1+\varepsilon)^2 \rho^*_r \log \sbr{ 2  n^2 V / \delta_r} \rceil , \ \tau(\xi,\lambda^*_r, \varepsilon) \} $, where $\tau(\xi,\lambda^*_r, \varepsilon)$ is the number of samples needed by $\round$ \label{line:compute_N_t}   
			%\algcomment{compute the number of required samples}
			\STATE    $(\ti{s}_1, \dots, \ti{s}_{N^{(r)}}) \leftarrow \round(\xi, \lambda^*_r, N^{(r)}, \varepsilon)$
			\label{line:round}
			\STATE  Extract a sub-sequence $\ti{\bs{s}}_v^{(r)}$ from $(\ti{s}_1, \dots, \ti{s}_{N^{(r)}})$ which only contains the arms in $\tilde{\cX}_v$ \label{line:extract_subsequence} 
			\STATE  Sample the arms in $\ti{\bs{s}}_v^{(r)}$ and observe random rewards $\by_v^{(r)}$ \label{line:sample_observe}
			\STATE  Let $N^{(r)}_{v,i}$ and $\bar{y}^{(r)}_{v,i}$ be the number of samples and the average sample outcome on arm $\tilde{x}_{v,i}$
			\STATE  Broadcast $\{(N^{(r)}_{v,i}, \bar{y}^{(r)}_{v,i}) \}_{i \in [n]}$, and receive $\{(N^{(r)}_{v',i}, \bar{y}^{(r)}_{v',i}) \}_{i \in [n]}$ from all other agents $v' \neq v$ \label{line:receive}
			\STATE 
			$k_r(\ti{x}) \leftarrow [\sqrt{N^{(r)}_1} k(\ti{x}, \ti{x}_1), \dots, \sqrt{N^{(r)}_{nV}} k(\ti{x}, \ti{x}_{nV})]^\top$ for any $\ti{x} \in \ti{\cX}$. $K^{(r)} \leftarrow [\sqrt{N^{(r)}_i N^{(r)}_j} k(\ti{x}_i,\ti{x}_j)]_{i,j \in [nV]}$.  $\bar{y}^{(r)} \leftarrow [ \sqrt{N^{(r)}_1} \bar{y}^{(r)}_1, \dots, \sqrt{N^{(r)}_{nV}} \bar{y}^{(r)}_{nV} ]^\top$ \label{line:organize_observation}
			\algcomment{organize overall sample information} 
			%		\wei{The one-dimensional index scheme seems not explicitly mentioned before, such as $N^{(r)}_1$, $\ti{x}_1$, $\ti{x}_{nV}$.
				%			Should we always use two-dimensional indexing? Then they could be $N^{(r)}_{1,1}$, $\ti{x}_{1,1}$, $\ti{x}_{n,V}$? $(\ti{x}_i,\ti{x}_j)_{i,j \in [nV]}$ would be
				%			$(\ti{x}_{v',i},\ti{x}_{v'',j})_{v',v''\in [V], i,j \in [n]}$.}
			%		\STATE  \textbf{for} all $v' \in [V]$ \textbf{do}
			\FOR{\textup{all $v' \in [V]$}}
			\STATE  $\hat{\Delta}_r(\ti{x}, \ti{x}') \leftarrow (k_r(\ti{x})-k_r(\ti{x}'))^\top  (K^{(r)}+ N^{(r)} \xi I)^{-1} \bar{y}^{(r)}, \ \forall \ti{x}, \ti{x}' \in \cB_{v'}^{(r)} $ \label{line:estimated_reward_gap} \algcomment{use a kernelized estimator (described in Section~\ref{sec:computation_efficiency}) to estimate reward gaps} 
			\STATE  $\cB_{v'}^{(r+1)} \leftarrow \cB_{v'}^{(r)} \setminus \{ \ti{x} \in \cB_{v'}^{(r)}  |\ \exists \ti{x}' \in \cB_{v'}^{(r)} : \hat{\Delta}_r(\ti{x}', \ti{x}) \geq 2^{-r} \}$\label{line:eliminate} \algcomment{discard sub-optimal arms}
			\ENDFOR
			\STATE  {\bfseries if} $\forall v' \in [V], |\cB_{v'}^{(r+1)}|=1$, {\bfseries return} 
			$\cB_1^{(r+1)}, \dots, \cB_V^{(r+1)}$
			%		$\cB_{v}^{(r+1)}$
			\ENDFOR
		\end{algorithmic}
	\end{algorithm*}

	\subsection{Algorithm $\algkernelbai$}

	%\subsubsection{Algorithm Procedure}
	$\algkernelbai$ (Algorithm~\ref{alg:kernel_bai}) is an elimination-based multi-agent algorithm. 
	%which maintains alive arm sets and successively eliminates sub-optimal arms in each round. Agents compute a global optimal sample allocation to minimize  estimation error, and perform sampling individually.
	%The key component of $\algkernelbai$ is 
	%$\algkernelbai$ has three key components: (i) maintain alive arm sets for all agents, (ii) perform sampling individually according to the globally optimal sample allocation, and (iii) exchange the distilled observation information to estimate reward gaps and eliminate sub-optimal arms, via efficient kernelized computation and communication schemes.
	%%Under $\algkernelbai$, agents simultaneously interact with the environment to obtain samples and exchange the distilled observation information to expedite the learning process. 
	The procedure for each agent $v$ is as follows. 
	Agent $v$ maintains candidate arm sets $\cB_{v'}^{(r)}$ for all $v' \in [V]$.
	\revision{In each round $r$,} agent $v$ solves a global min-max optimization (Line~\ref{line:min_max_lambda}) to find the optimal sample allocation $\lambda^*_r \in \triangle_{\ti{\cX}}$, which achieves the minimum estimation error.
	Here $\triangle_{\ti{\cX}}$ denotes the collection of all distributions on $\ti{\cX}$, and $\rho^*_r$ is the factor of optimal estimation error. 
	%This min-max optimization adapts the experimental design method in single-agent linear bandits~\citep{rage2019,peace2020} to the multi-agent kernel model. 
	In practice, the high-dimensional feature embedding $\phi(\ti{x})$ is only implicitly maintained, and this optimization can be efficiently solved by kernelized gradient descent~\citep{high_dimensional_ICML2021} (see Appendix~\ref{apx:kernel_computation}).
	After solving this optimization, agent $v$ uses $\rho^*_r$ to compute the number of samples $N^{(r)}$ to ensure the estimation error of reward gaps to be smaller than $2^{-(r+1)}$ (Line~\ref{line:compute_N_t}). 
	
	Next, we call a rounding procedure $\round(\xi,\lambda^*_r,N^{(r)},\varepsilon)$~\citep{allen2021round,high_dimensional_ICML2021}, which rounds a weighted sample allocation $\lambda^*_r \in \triangle_{\ti{\cX}}$ into a discrete sample sequence $(\ti{s}_1,\dots,\ti{s}_{N^{(r)}}) \in \ti{\cX}^{N^{(r)}}$, and ensures the rounding error within $\varepsilon$ (Line~\ref{line:round}). 
	This rounding procedure requires the number of samples $N^{(r)} \geq \tau(\xi,\lambda^*_r,\varepsilon)=O(\frac{\tilde{d}(\xi,\lambda^*_r)}{\varepsilon^2})$ (Line~\ref{line:compute_N_t}). Here $\tau(\xi,\lambda^*_r, \varepsilon)$ is the number of samples needed by $\round$. $\tilde{d}(\xi,\lambda^*_r)$ is the number of the eigenvalues of matrix $\sum_{\ti{x} \in \ti{\cX}} \lambda^*_r(\ti{x})  \phi(\ti{x}) \phi(\ti{x})^\top$ which are greater than $\xi$. It stands for the effective dimension of feature space (see Appendix~\ref{apx:rounding_procedure} for details of this rounding procedure).
	%$=O(\frac{\tilde{d}(\xi,\lambda)}{\varepsilon^2})$, where $\tilde{d}(\xi,\lambda)$ is the number of eigenvalues of $\sum_{\ti{x} \in \ti{\cX}} \lambda_{\ti{x}}  \phi(\ti{x}) \phi(\ti{x})^\top$ that are greater than $\xi$.
	
	%By calling $\round(\xi,\lambda^*_r,N^{(r)},\varepsilon)$, agent $v$ generates a sample sequence $(\ti{s}_1, \dots, \ti{s}_{N^{(r)}})$ according to $\lambda^*_r$, 
	Obtaining sample sequence $(\ti{s}_1, \dots, \ti{s}_{N^{(r)}})$, agent $v$ extracts a sub-sequence $\ti{\bs{s}}_v^{(r)}$ which only contains the arms in her arm set $\tilde{\cX}_v$. 
	Then, she sample the arms in $\ti{\bs{s}}_v^{(r)}$ and observe sample outcomes $\by_v^{(r)}$.
	After sampling, she only communicates the number of samples $N^{(r)}_{v,i}$ and \emph{average} sample outcome $\bar{y}^{(r)}_{v,i}$ on each arm $\tilde{x}_{v,i}$ with other agents (Line~\ref{line:receive}).
	Receiving overall sample information, she uses a kernelized estimator (discussed shortly) to estimate the reward gap $\hat{\Delta}_r(\ti{x}, \ti{x}')$ between any two arms $\ti{x}, \ti{x}' \in \cB_{v'}^{(r)}$ for all $v' \in [V]$, and discards sub-optimal arms (Lines~\ref{line:estimated_reward_gap},\ref{line:eliminate}). 
	In Line~\ref{line:organize_observation}, for any $i \in [nV]$, we use $\ti{x}_i$, $\bar{y}^{(r)}_i$ and $N^{(r)}_i$ with a single subscript to denote the $i$-th arm in $\ti{\cX}$, the average sample outcome on this arm and the number of samples allocated to this arm, respectively.
	% for convenience.

	\subsection{Kernelized Estimator, Communication and Computation} \label{sec:computation_efficiency}
	
	%Now we discuss the kernelized estimator (Line~\ref{line:estimated_reward_gap}) in detail. First note that \emph{communication and computation} are two main challenges  for a multi-agent kernel problem, i.e., how to design a novel communication scheme such that the length of each message is bounded by $O(n)$ bits, and how to simplify the burdensome computation in high (possibly infinite) dimensional RKHS. To handle these challenges, $\algkernelbai$ uses a novel kernelized estimator to merge repetitive information in samples, and reduces communication and computation costs from $\poly(\dim(\cH))$~\cite{neural_ucb_2020,neural_pure_exploration2021,dubey2020differentially,huang2021federated} to only $\poly(nV)$. 
	
	%\paragraph{Kernelized Estimator.}
	%Prior kernel bandit works~\cite{neural_ucb_2020,neural_pure_exploration2021} or adaptions of federated linear bandits~\cite{dubey2020differentially,huang2021federated,li2022asynchronous} explicitly maintain a high-dimensional estimate $\hat{\theta}_r$ of reward parameter $\theta^*$, which leads to $O(\dim(\cH))$ communication and computation costs.
	%Other kernel bandit works~\cite{high_dimensional_ICML2021,dubey2020kernel} use a redundant kernelized form of $\hat{\theta}_r$, which requires all raw outcomes of sample sequence $(\ti{s}_1, \dots, \ti{s}_{N^{(r)}})$ and incurs a $O(N^{(r)})$ communication cost (see Appendix~\ref{apx:comparison_estimator} for a detailed comparison).

	Now we introduce the kernelized estimator (Line~\ref{line:estimated_reward_gap}), which boosts communication and computation efficiency of $\algkernelbai$.
	%enables algorithm $\algkernelbai$ to reduce the communication costs  from $O(\dim(\cH))$ or $O(N^{(r)})$ (cost by adaptions of prior works) to only $O(nV)$.
	First note that, our CoPE-KB generalization 
	%the generalization of CoPE from classic MAB setting~\cite{distributed2013,tao2019collaborative,top_collaborative2020} to the kernel bandit formulation 
	faces a unique challenge on communication, 
	i.e., how to let agents efficiently share their learned information on the high-dimensional reward parameter $\theta^{*}$.
	Naively adapting existing federated linear bandit or kernel bandit algorithms~\citep{dubey2020differentially,huang2021federated,dubey2020kernel}, which transmit the \emph{whole} estimated reward parameter or \emph{all raw sample outcomes}, will suffer a $O(\dim(\cH))$ or $O(N^{(r)})$ communication cost, respectively. Here, the number of samples $N^{(r)} = \tilde{O} (d_{\eff}/\Delta_{\min}^2)$, where $d_{\eff}$ is the effective dimension of feature space and $\Delta_{\min}$ is the minimum reward gap. Thus, $N^{(r)}$ is far larger than the number of arms $nV$ when $\Delta_{\min}$ is small.
	
	%This communication challenge cannot be resolved by applying existing algorithms.
	%If one naively adapts prior federated linear bandit algorithms~\cite{dubey2020differentially,huang2021federated,li2022asynchronous}, which directly transmit the whole estimated reward parameter $\hat{\theta}_r$, he/she will suffer $O(\dim(\cH))$ communication costs.
	%On the other hand, if one naively adapts prior kernel bandit algorithms~\cite{dubey2020kernel,high_dimensional_ICML2021}, where the estimator $\hat{\theta}_r$ needs all raw sample outcomes as inputs, he/she will suffer $O(N^{(r)})$ communication costs. Here, $N^{(r)} = \tilde{O} (d_{\eff}/\Delta_{\min}^2)$ is the number of samples in round $r$ (Line~\ref{line:compute_N_t}), where $d_{\eff}$ is the effective dimension of feature space, and $\Delta_{\min}$ is the minimum reward gap. Thus, $N^{(r)}$ is larger than the number of arms $nV$ if $\Delta_{\min}$ is small.
	
	\textbf{Kernelized Estimator.}
	To handle the communication challenge, we develop a novel kernelized estimator (Eq.~\eqref{eq:kernel_estimator_reward_gap}) to significantly simplify the required transmitted data.
	% and avoid transmitting all raw sample outcomes.
	Specifically, we make a key observation: since the sample sequence $(\ti{s}_1, \dots, \ti{s}_{N^{(r)}})$ are constituted by arms $\ti{x}_1, \dots, \ti{x}_{nV}$, one can \emph{merge repetitive computations} for same arms. 
	Then, for all $i \in [nV]$, we merge the repetitive feature embeddings $\phi(\ti{x}_i)$ for same arms $\ti{x}_i$, and condense the $N^{(r)}$ raw sample outcomes to \emph{average outcome} $\bar{y}^{(r)}_i$ on each arm $\ti{x}_i$. 
	As a result, we express $\hat{\theta}_r$ in a simplified kernelized form, and use it to estimate the reward gap $\hat{\Delta}_r(\ti{x},\ti{x}')$ between any two arms $\ti{x},\ti{x}'$ as 
	% of Eqs.~\eqref{eq:kernel_estimator_theta} and \eqref{eq:kernel_estimator_reward_gap}
	\begin{align}
		\hat{\theta}_r &:= \Phi_r^\top \big( N^{(r)} \xi I + K^{(r)} \big)^{-1} \bar{y}^{(r)}, \label{eq:kernel_estimator_theta}
		\\
		\hat{\Delta}_r(\ti{x},\ti{x}') 
		&:= \sbr{ \phi(\ti{x})- \phi(\ti{x}')}^\top \hat{\theta}_r 
		= \sbr{k_r(\ti{x})- k_r(\ti{x}')}^\top  \big( N^{(r)} \xi I + K^{(r)} \big)^{-1}  \bar{y}^{(r)}. \label{eq:kernel_estimator_reward_gap}
	\end{align}
	Here 
	$\Phi_r:=[\sqrt{N^{(r)}_1} \phi(\ti{x}_1)^\top; \dots; \sqrt{N^{(r)}_{nV}} \phi(\ti{x}_{nV})^\top]$, $K^{(r)} := [\sqrt{N^{(r)}_i N^{(r)}_j} k(\ti{x}_i,\ti{x}_j)]_{i,j \in [nV]}$,   $k_r(\ti{x}):=\Phi_r \phi(\ti{x})=[\sqrt{N^{(r)}_1} k(\ti{x}, \ti{x}_1), \dots, \sqrt{N^{(r)}_{nV}} k(\ti{x}, \ti{x}_{nV})]^\top$ and $\bar{y}^{(r)}:=[ \sqrt{N^{(r)}_1} \bar{y}^{(r)}_1, \dots, \sqrt{N^{(r)}_{nV}} \bar{y}^{(r)}_{nV} ]^\top$ stand for the feature embeddings, kernel matrix, correlations and average outcomes of the $nV$ arms, respectively, which \emph{merge} repetitive information on same arms. We refer interested reader to Appendix~\ref{apx:kernel_computation},~\ref{apx:comparison_estimator} for a detailed derivation of our kernelized estimator and a comparison with existing estimators~\citep{dubey2020kernel,high_dimensional_ICML2021}.
	
	\textbf{Communication.}
	%Our kernelized estimator (Eq.~\eqref{eq:kernel_estimator_reward_gap}) only needs the average outcomes $\bar{y}^{(r)}_{1},\dots,\bar{y}^{(r)}_{nV}$ on the $nV$ arms, instead of all $N^{(r)}$ raw sample outcomes. 
	Thanks to the kernelized estimator, we only need to transmit the $nV$ average outcomes $\bar{y}^{(r)}_{1},\dots,\bar{y}^{(r)}_{nV}$
	% scalar tuples $\{(N^{(r)}_{v,i}, \bar{y}^{(r)}_{v,i}) \}_{i \in [n], v \in [V]}$ 
	(Line~\ref{line:receive}), instead of the whole $\hat{\theta}_r$ or all $N^{(r)}$ raw outcomes as in existing federated linear bandit or kernel bandit algorithms~\citep{dubey2020differentially,dubey2020kernel}. This significantly reduces the number of transmission bits from $O(\dim(\cH))$ or $O(N^{(r)})$ to only $O(nV)$, and satisfies the $O(n)$-bit per message requirement.

	\textbf{Computation.}
	In $\algkernelbai$, $\phi(\tilde{x})$ and $\hat{\theta}_r$ are maintained implicitly, and all steps (e.g., Lines~\ref{line:min_max_lambda}, \ref{line:estimated_reward_gap}) can be implemented efficiently by only querying kernel function $k(\cdot,\cdot)$ (see Appendix~\ref{apx:kernel_computation} for implementation details). Thus, the computation complexity for reward estimation is only $\poly(nV)$, instead of $\poly(\dim(\cH))$ as in prior kernel bandit algorithms~\citep{neural_ucb_2020,neural_pure_exploration2021}.
	
	% We remark that while we focus on the collaborative setting, the communication challenge we tackle is \emph{fundamental} in distributed bandits~\citep{liu2010decentralized_distributed_player,distributed2013,dubey2020kernel}. Our algorithm offers an efficient communication scheme for information exchange in high (possibly infinite) dimensional RKHS, which can also be applied to other multi-agent environments, e.g., the master-slave setting~\citep{revisit_master_slave17,wang2017cooperative}, where a master agent does all computations and sends sampling commands, and the slave agents conduct sampling in local. 

	\subsection{Theoretical Performance of $\algkernelbai$} \label{sec:fc_ub}
	
	To formally state our results, we define the speedup and hardness as in the literature~\citep{distributed2013,tao2019collaborative,rage2019}.

	%\textbf{Speedup.}
	%To evaluate the learning acceleration, following the CoPE literature, e.g., \cite{distributed2013,tao2019collaborative,top_collaborative2020}, we define the speedup for multi-agent algorithms.  
	For a CoPE-KB instance $\cI$, let $T_{\cA_M,\cI}$ denote the average number of samples used per agent in a multi-agent algorithm $\cA_M$ to identify all best arms.
	%$x_{v,*}$ for all $v \in [V]$, 
	Let $T_{\cA_S,\cI}$ denote the average number of samples used per agent, by replicating $V$ copies of a single-agent algorithm $\cA_S$ to complete all tasks (without communication).
	Then, the speedup of $\cA_M$ on instance $\cI$ is defined as  
	\begin{align}
		\beta_{\cA_M,\cI}= \inf_{\cA_S} \frac{T_{\cA_S,\cI}}{T_{\cA_M,\cI}}. \label{eq:def_beta}
	\end{align}
	
	It holds that $1 \leq \beta_{\cA_M,\cI} \leq V$.  When all tasks are the same, $\beta_{\cA_M,\cI}$ can approach $V$. 
	When all tasks are totally different, communication brings no benefit and $\beta_{\cA_M,\cI}=1$.
	This speedup can be similarly defined for error probability results (see Section~\ref{sec:fb_ub}), by taking $T_{\cA_M,\cI}$ and $T_{\cA_S,\cI}$ as the smallest numbers of samples needed to meet the confidence constraint. 
	
%	\revision{
%		Different from prior CoPE-KB works~\citep{tao2019collaborative,top_collaborative2020} which define the speedup based on the maximum number of samples used by individual agents, our speedup definition is based on the average number of samples. 
%		Our definition is motivated by the fact that in many applications, obtaining a sample is expansive, e.g., clinical trials~\citep{weninger2019multi}, and thus it is important to minimize the average (total) number of samples required. 
%		For example, consider that a medical institution wants to conduct multiple clinical trials to identify the best treatments for different age groups of patients (different tasks), and share the obtained data to accelerate treatment development. Since conducting a trial consumes many medical resources and funds (e.g., organ transplant surgeries), the institution wants to minimize the total number of trials. Our CoPE-KB model is most suitable for such scenarios.
%	}

	%\textbf{Hardness.}
	The hardness for CoPE-KB is defined as 
	\begin{align}
		\rho^*(\xi)=\min_{\lambda \in \triangle_{\ti{\cX}}} \max_{\ti{x} \in \ti{\cX}_v \setminus \{\ti{x}_{v,*}\}, v \in [V]} \frac{  \| \phi(\ti{x}_{v,*})-\phi(\ti{x}) \|^2_{A(\xi,\lambda)^{-1}} }{ \sbr{F(\ti{x}_{v,*})- F(\ti{x})}^2 }, \label{eq:rho_star}
	\end{align}
	where $\xi \geq 0$ is a regularization parameter, and $A(\xi,\lambda):=\xi I + \sum_{\ti{x} \in \ti{\cX}} \lambda_{\ti{x}} \phi(\ti{x}) \phi(\ti{x})^\top$.
	This definition of $\rho^*(\xi)$ is adapted from prior linear bandit work~\citep{rage2019}. 
	Here $F(\ti{x}_{v,*})- F(\ti{x})$ is the reward gap between the best arm $\ti{x}_{v,*}$ and a suboptimal arm $\ti{x}$, and $\| \phi(\ti{x}_{v,*})-\phi(\ti{x}) \|^2_{A(\xi,\lambda)^{-1}}$ 
	is a dimension-related factor of estimation error.
	%, and $\min_{\lambda \in \triangle_{\ti{\cX}}} \max_{\ti{x} \in \ti{\cX}_v \setminus \{\ti{x}_{v,*}\}, v \in [V]} \| \phi(\ti{x}_{v,*})-\phi(\ti{x}) \|^2_{A(\xi,\lambda)^{-1}}$ stands for the effective dimension of feature space. 
	Intuitively, $\rho^*(\xi)$ indicates how many samples it takes to make the estimation error smaller than the reward gap under regularization parameter $\xi$.

	Let $\Delta_{\min}:=\min_{\ti{x} \in \tilde{\cX}_v \setminus \{\ti{x}_{v,*}\}, v \in [V]}(F(\ti{x}_{v,*})-F(\ti{x}))$ denote the minimum reward gap between the best arm and suboptimal arms among all tasks.
	Let $S$ denote the average number of samples used by each agent, i.e., per-agent sample complexity. 
	Below we present the performance of $\algkernelbai$. 
	
	%Here $\|\theta^*\|:=\sqrt{{\theta^*}^\top \theta^*} \leq B$ for some known constant $B>0$. 
	
	\begin{theorem}[Fixed-Confidence Upper Bound]  \label{thm:coop_kernel_bai_ub}
		Suppose that $\xi$ satisfies
		$\sqrt{\xi} \max_{\ti{x}_i,\ti{x}_j \in \tilde{\cX}_v, v \in [V]} \| \phi(\ti{x}_i)-\phi(\ti{x}_j) \|_{A(\xi,\lambda^*_1)^{-1}} \leq \frac{\Delta_{\min}}{32(1+\varepsilon) \|\theta^*\|}$.
		With probability at least $1-\delta$,  $\algkernelbai$ returns the best arms $\ti{x}_{v,*}$ for all $v \in [V]$, with per-agent sample complexity
		\begin{align*}
			S = O \bigg(  \frac{ \rho^*(\xi)}{V} \cdot  \log \Delta^{-1}_{\min} \sbr{\log \sbr{ \frac{ n V }{\delta} } + \log\log \Delta^{-1}_{\min}  } + \frac{\tilde{d}(\xi,\lambda^*_1)}{V} \cdot \log \Delta^{-1}_{\min} \bigg) 
		\end{align*}
		and communication rounds $O( \log \Delta^{-1}_{\min} )$.
	\end{theorem}

	\textbf{Remark 1.}
	The condition on regularization parameter $\xi$ implies that $\algkernelbai$ needs a small regularization parameter such that the bias due to regularization is smaller than $\frac{\Delta_{\min}}{2}$. Such conditions are similarly needed in prior kernel bandit work~\citep{high_dimensional_ICML2021}, and can be dropped in the extended PAC setting (allowing a gap between the identified best arm and true best arm).  
	%$\tilde{d}(\xi,\lambda^*_r)$ is a cost for using rounding procedure $\round$, which scales like the number of eigenvalues of $\sum_{\ti{x} \in \ti{\cX}} \lambda^*_r(\ti{x})  \phi(\ti{x}) \phi(\ti{x})^\top$ that are greater than $\xi$.
	The $\tilde{d}(\xi,\lambda^*_1) \log (\Delta^{-1}_{\min})/V$ term is a cost for using rounding procedure $\round$.
	% term scales like the number of eigenvalues of $\sum_{\ti{x} \in \ti{\cX}} \lambda^*_r(\ti{x})  \phi(\ti{x}) \phi(\ti{x})^\top$ that are greater than $\xi$, and 
	This is a second order term when the reward gaps $\Delta_{v,i}:=F(\ti{x}_{v,*})-F(\ti{x}_{v,i})<1$ for all $\ti{x}_{v,i} \in \tilde{\cX}_v \setminus \{\ti{x}_{v,*}\}$ and $v \in [V]$, which is the common case in pure exploration~\citep{rage2019,neural_pure_exploration2021}.

	When all tasks are the same, Theorem~\ref{thm:coop_kernel_bai_ub} achieves a $V$-speedup, since replicating $V$ copies of single-agent algorithms~\citep{high_dimensional_ICML2021,neural_pure_exploration2021} to complete all tasks without communication will cost $\tilde{O}(\rho^*(\xi))$ samples per agent.
	When tasks are totally different,  there is no speedup in Theorem~\ref{thm:coop_kernel_bai_ub}, since each copy of single-agent algorithm solves a task in her own sub-dimension and costs $\tilde{O}(\frac{\rho^*(\xi)}{V})$ samples ($\rho^*(\xi)$ stands for the effective dimension of \emph{all} tasks). This result matches the restriction of speedup. 
	
	%Theorem~\ref{thm:coop_kernel_bai_ub} shows that, when all tasks are the same, $\algkernelbai$  achieves the maximum $V$-speedup compared to  single-agent algorithms~\cite{high_dimensional_ICML2021,neural_pure_exploration2021} (which have $O(\rho^*(\xi) \cdot \log \delta^{-1})$ sample complexity), using only logarithmic communication rounds. 
	
%	We also provide sample complexity and communication-round speedup lower bounds to validate the optimality of $\algkernelbai$ in Appendix~\ref{apx:fc_lb}.
%	Regarding the optimality, when $\{\phi(\ti{x})\}_{\ti{x} \in \ti{\cX}}$ spans $\cH$ and $\xi=0$,
%	Theorem~\ref{thm:coop_kernel_bai_ub} nearly matches the sample complexity lower bound $\Omega(\rho^*(0) \log \delta^{-1})$ (see Appendix~\ref{apx:sample_complexity_lb}).
%	%In this case, algorithm $\algkernelbai$ achieves the optimal sample complexity using only logarithmic communication rounds.
%	In addition, when all tasks are the same, Theorem~\ref{thm:coop_kernel_bai_ub} matches the communication round-speedup lower bound $\Omega ( \frac{\log \Delta^{-1}_{\min}}{ \log \log \Delta^{-1}_{\min} } )$ up to double-logarithmic factors (see Appendix~\ref{apx:fc_round_lb}).

	\textbf{Interpretation.}
	Now, we interpret Theorem~\ref{thm:coop_kernel_bai_ub} by standard measures in kernel bandits and a decomposition with respect to task similarities and task features. 
	Below we first introduce the definitions of maximum information gain and effective dimension, which are adapted from kernel bandits with regret minimization~\citep{GP_UCB_Srinivas2010,kernelUCB_valko2013} to the pure exploration setting.
	
	We define the \emph{maximum information gain} as
	$
	\Upsilon : = \max_{\lambda \in \triangle_{\ti{\cX}}} \log\det\sbr{ I + {\xi}^{-1} K_{\lambda} } , 
	$
	where $K_{\lambda}:=[\sqrt{\lambda_i \lambda_j} k(\ti{x}_i,\ti{x}_j)]_{i,j \in [nV]}$ denotes the kernel matrix under sample allocation $\lambda$.
	$\Upsilon$ stands for the maximum information gain obtained from the samples generated according to sample allocation $\lambda$.
	% reward parameter $\theta^*$ and the samples generated according to sample allocation $\lambda$.
	
	Let $\lambda^* := \argmax_{\lambda \in \triangle_{\ti{\cX}}} \log\det \sbr{ I + {\xi}^{-1} K_{\lambda} } $ denote the sample allocation which achieves the maximum information gain, and let $\alpha_1 \geq \dots \geq \alpha_{nV}$ denote the eigenvalues of $K_{\lambda^*}$. Then, we define the \emph{effective dimension} as 
	$
	d_{\eff}:=\min \{ j\in [nV] :j \xi\log(nV) \geq \sum_{i=j+1}^{nV} \alpha_i \}. 
	$
	$d_{\eff}$ stands for the number of principle directions that data projections in RKHS spread. 
	
	Recall that $K_{\cZ}:=[k_{\cZ}(z_v,z_{v'})]_{v,v'\in [V]}$ denotes the kernel matrix of task similarities. Let $K_{\cX,\lambda^*}:=[\sqrt{\lambda^*_i \lambda^*_j} k_{\cX}(x_i,x_j)]_{i,j \in [nV]}$ denote the kernel matrix of arm features under sample allocation $\lambda^*$.
	
	\begin{corollary} \label{corollary:ub_fc}	
		The per-agent sample complexity $S$ of algorithm $\algkernelbai$ can be bounded by
		\begin{align*}
			\textup{(a) } S=\tilde{O} \bigg(\frac{ \Upsilon }{\Delta^2_{\min} V} \bigg),
			\quad
			\textup{(b) } S=\tilde{O} \bigg(\frac{  d_{\eff}  }{\Delta^2_{\min} V} \bigg),
			\quad
			\textup{(c) } S=\tilde{O} \bigg( \frac{  \rank(K_{\cZ}) \cdot \rank(K_{\cX,\lambda^*}) }{\Delta^2_{\min} V} \bigg) ,
		\end{align*}
		where $\tilde{O}(\cdot)$ omits the rounding cost term $\tilde{d}(\xi,\lambda^*_1) \log (\Delta^{-1}_{\min})/V$ and logarithmic factors.
	\end{corollary}
	
	\textbf{Remark 2.}
	Corollaries~\ref{corollary:ub_fc}(a),(b) show that, our sample complexity can be bounded by the maximum information gain, and only depends on the effective dimension of kernel representation.
	%, which avoids the curse of dimensionality.
	% rather than the raw (high) dimension of RKHS. 
	%
	Corollary~\ref{corollary:ub_fc}(c) 
	%decomposes the sample complexity into components with respect to task similarities and arm features, which 
	reveals that the more tasks are similar (i.e., the smaller $\rank(K_{\cZ})$ is), the fewer samples agents need. 
	For example, when all tasks are the same, i.e., $\rank(K_{\cZ})=1$, each agent only needs a $\frac{1}{V}$ fraction of the samples required by single-agent algorithms~\citep{high_dimensional_ICML2021,neural_pure_exploration2021} (which need $\tilde{O} ( \rank(K_{\cX,\lambda^*}) / \Delta^2_{\min} )$ samples). Conversely, when all tasks are totally different, i.e., $\rank(K_{\cZ})=V$, no advantage can be obtained from communication, since the information from other agents is useless for solving local tasks. 
	Our experimental results also reflect this relationship between task similarities and speedup, which match our theoretical bounds (see Section~\ref{sec:experiments}).
	We note that these theoretical results hold for both finite and infinite RKHS.
	
	%Besides this decomposition, we also interpret Theorem~\ref{thm:coop_kernel_bai_ub} via standard expressive tools in kernel bandits~\cite{GP_UCB_Srinivas2010,kernelUCB_valko2013,multi_task_2017}, i.e., maximum information gain and effective dimension, to exhibit the relationship between sample complexity and effective dimension. We refer interested readers to Appendix~\ref{apx:interpretation_fc_ub}.

	\section{Fixed-Budget CoPE-KB}
	
	For the fixed-budget objective, we propose algorithm $\algkernelbaiFB$ and error probability guarantees. Due to space limit, we defer the pseudo-code and detailed description to Appendix \ref{apx:pseudocode_description}. 
	%Due to space limit, we only present main ideas and results of $\algkernelbaiFB$ here, and defer detailed description to Appendix~[].
	
	\subsection{Algorithm $\algkernelbaiFB$}\label{sec:alg_fb}
	
	%$\algkernelbaiFB$ is also an elimination-based algorithm which maintains alive arm sets,  according to a global optimal sample allocation.

	%$\algkernelbaiFB$ consists of three key steps: (i)  pre-determine the numbers of rounds and samples according to data dimension, (ii) maintain alive arm sets for all agents, plan a globally optimal sample allocation, (iii) communicate observation information and cut down alive arms to a half in the dimension sense. 
	
	%In order to introduce $\algkernelbaiFB$, we first define the principle dimension for any $\ti{\cS} \in \ti{\cX}$ as
	%$$
	%\omega(\xi,\ti{\cS}) := \min \limits_{\lambda \in \triangle_{\ti{\cX}}} \max \limits_{\ti{x}_i,\ti{x}_j \in \ti{\cS}}  \| \phi(\ti{x}_i)-\phi(\ti{x}_j) \|^2_{A(\xi,\lambda)^{-1}},
	%% \  \forall \ti{\cS} \subseteq \ti{\cX},
	%$$
	%where $A(\xi,\lambda)\!:=\!\xi I + \sum_{\ti{x} \in \ti{\cX}} \! \lambda_{\ti{x}} \phi(\ti{x}) \phi(\ti{x})^{\!\top} \!\!$.
	%$\omega(\xi,\ti{\cS})$ represents the principle dimension of data projections in $\ti{\cS}$ to RKHS under regularization parameter $\xi$. 

	%$\algkernelbaiFB$ is also an elimination-based algorithm as $\algkernelbai$. 
	%However, different from $\algkernelbai$, to efficiently use given sample budget,
	$\algkernelbaiFB$ pre-determines the number of rounds and the number of samples according to the principle dimension of arms, and successively cut down candidate arms to a half based on principle dimension.
	%, rather than based on reward gaps as in $\algkernelbai$.
	%
	%The procedure of $\algkernelbaiFB$ is as follows. 
	%During initialization, we pre-determine the number of rounds and the number of samples for each round according to data dimension $\omega(\xi,\ti{\cX})$.
	%For each agent $v \in [V]$, in each round, she maintains alive arm sets for all agents, calculates a global optimal sample allocation, and performs sampling according to the optimal sample allocation. 
	%During communication, she only transmits the number of samples and average observation for each arm to others.
	%With the shared information, she efficiently estimates the rewards of alive arms via the kernelized estimator (in Section~\ref{sec:computation_efficiency}), and only keeps the best half of them in the dimension sense. 
	%Due to space limit, the pseudo-code and detailed description of $\algkernelbaiFB$ are deferred to Appendix~\ref{apx:pseudocode_description}.
	%
	$\algkernelbaiFB$ also adopts the efficient kernelized estimator (in Section~\ref{sec:computation_efficiency}) to estimate the rewards of arms, so that agents only need to transmit average outcomes rather than all raw sample outcomes or the whole estimated reward parameter.
	%, and avoid explicit calculations in high (possibly infinite) dimension RKHS.
	Thus, $\algkernelbaiFB$ only requires a $O(nV)$ communication cost, instead of $O(N^{(r)})$ or $O(\dim(\cH))$ as in adaptions of prior single-agent and fixed-budget algorithm~\citep{peace2020}.

	\subsection{Theoretical Performance of $\algkernelbaiFB$}\label{sec:fb_ub}
	
	Define the principle dimension of $\tilde{\cX}$ as
	$
	\omega(\xi,\ti{\cX}) := \min_{\lambda \in \triangle_{\ti{\cX}}} \max_{\ti{x}_i,\ti{x}_j \in \ti{\cX}}  \| \phi(\ti{x}_i)-\phi(\ti{x}_j) \|^2_{A(\xi,\lambda)^{-1}},
	$
	where $A(\xi,\lambda)\!:=\!\xi I + \sum_{\ti{x} \in \ti{\cX}} \! \lambda_{\ti{x}} \phi(\ti{x}) \phi(\ti{x})^{\!\top} \!\!$. $\omega(\xi,\ti{\cX})$ represents the principle dimension of data projections in $\ti{\cX}$ to RKHS. 
	Now we provide the error probability guarantee for $\algkernelbaiFB$.
	
	\begin{theorem}[Fixed-Budget Upper Bound] \label{thm:kernel_bai_fb_ub}
		Suppose that $\xi$ satisfies 
		$\xi \leq \frac{1}{16(1+\varepsilon)^2 (\rho^*(\xi))^2 \|\theta^*\|^2}$.
		%	, the budget satisfies $T\geq \frac{ \max \{\rho^*(\xi), \ \tilde{d}(\xi,\lambda^*_1)\} }{V} \log(\omega(\xi,\ti{\cX}))$, and the arm set satisfies
		%	$\omega(\xi, \{\ti{x}_{v,*}, \ti{x}\}) \geq 1$ for all $\ti{x} \in \ti{\cX}_v \setminus \{\ti{x}_{v,*}\}$, $v \in [V]$.
		Algorithm $\algkernelbaiFB$ uses at most $T$ samples per agent, and returns the best arms $\ti{x}_{v,*}$ for all $v \in [V]$ with error probability 
		%	\vspace*{-0.4em}
		$$
		%	\err = 
		O \sbr{ \exp \sbr{ - \frac{ T V }{ \rho^*(\xi) \cdot \log(\omega(\xi,\ti{\cX})) } } \cdot n^2 V \log(\omega(\xi,\ti{\cX})) }
		$$
		\vspace*{-0.5em}
		and $O( \log(\omega(\xi,\ti{\cX})) )$ communication rounds.
	\end{theorem}
	
	%The conditions on $\xi,T,\omega(\cdot)$ are also similarly needed by prior linear/kernel bandit works~\cite{peace2020,high_dimensional_ICML2021}.
	To guarantee an error probability  $\delta$, $\algkernelbaiFB$ only requires $\tilde{O}(\frac{\rho^*(\xi)}{V} \log \delta^{-1} )$ sample budget, which attains full speedup when all tasks are the same. 
	Theorem~\ref{thm:kernel_bai_fb_ub} can be decomposed into components related to task similarities and arm features as in Corollary~\ref{corollary:ub_fc} (see Appendix~\ref{apx:fb_corollary}).
	
%	Moreover, we also provide a lower bound to demonstrate that when all tasks are the same, $\{\phi(\ti{x})\}_{\ti{x} \in \ti{\cX}}$ spans $\cH$ and $\xi=0$, to achieve full speedup, agents must use $\Omega(\frac{\log(\omega(0,\ti{\cX})) }{ \log  \log(\omega(0,\ti{\cX}))})$ communication rounds (see Appendix~\ref{apx:fb_lb}). This corroborates the communication optimality of $\algkernelbaiFB$.

	\begin{figure} [t]
		\centering     
		\subfigure[\revision{FC, $\rank(K_{\cZ})=1$}] {  \label{fig:fc_fc} 
			\includegraphics[width=0.28\columnwidth]{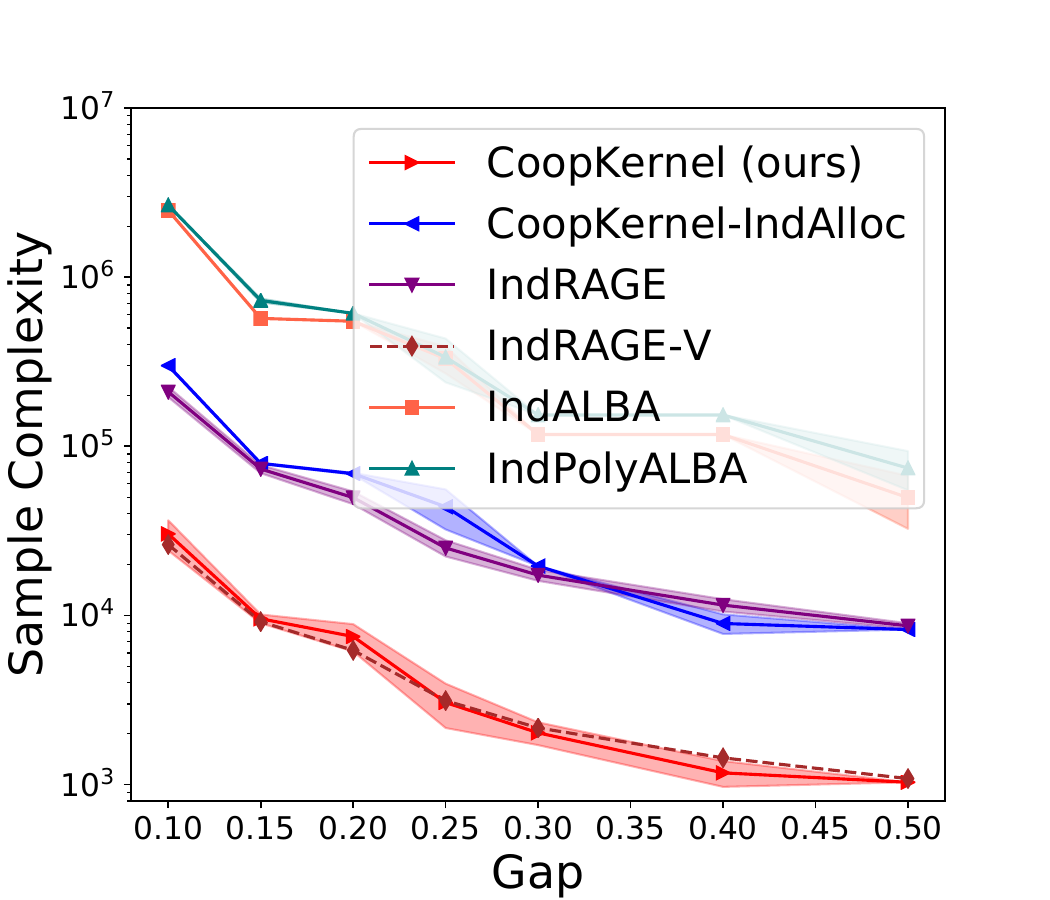} 
		}   
		\subfigure[\revision{FC, $\rank(K_{\cZ}) \in (1,V)$}] {   \label{fig:fc_mt}
			\includegraphics[width=0.28\columnwidth]{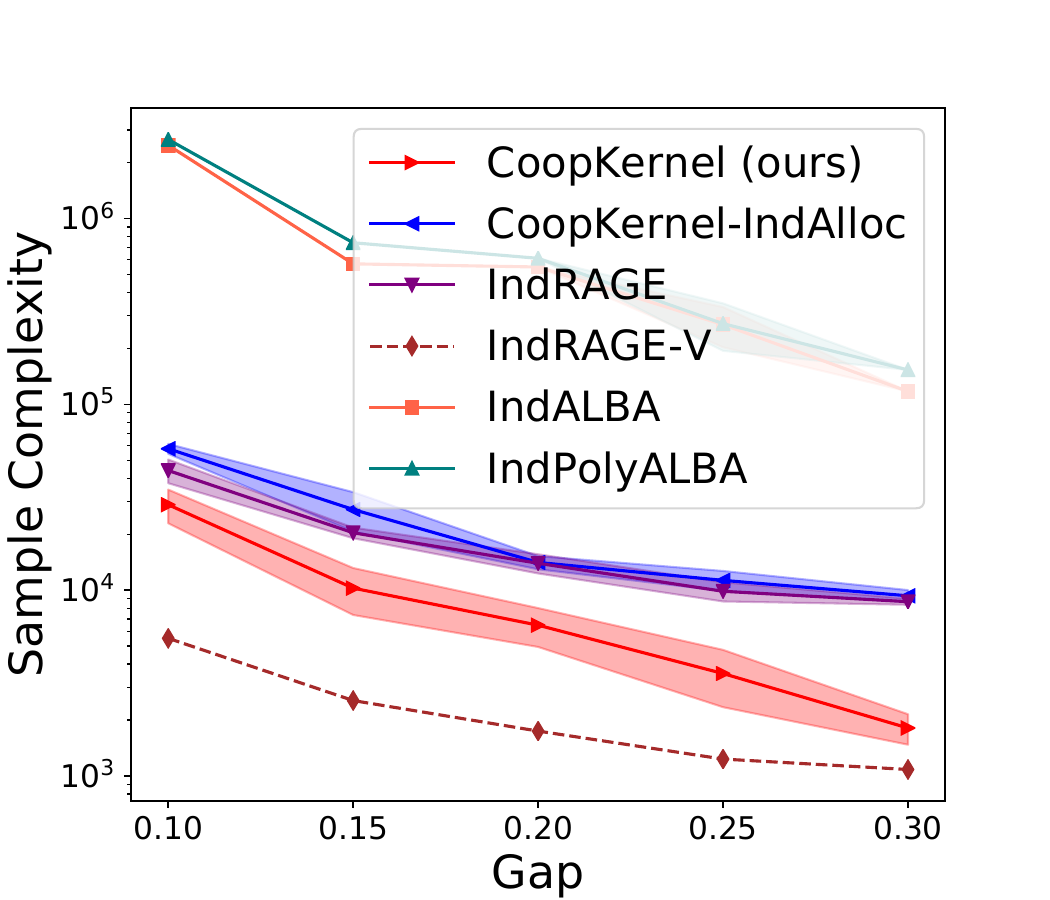} 
		}   
		\subfigure[\revision{FC, $\rank(K_{\cZ})=V$}] {    \label{fig:fc_dt}
			\includegraphics[width=0.28\columnwidth]{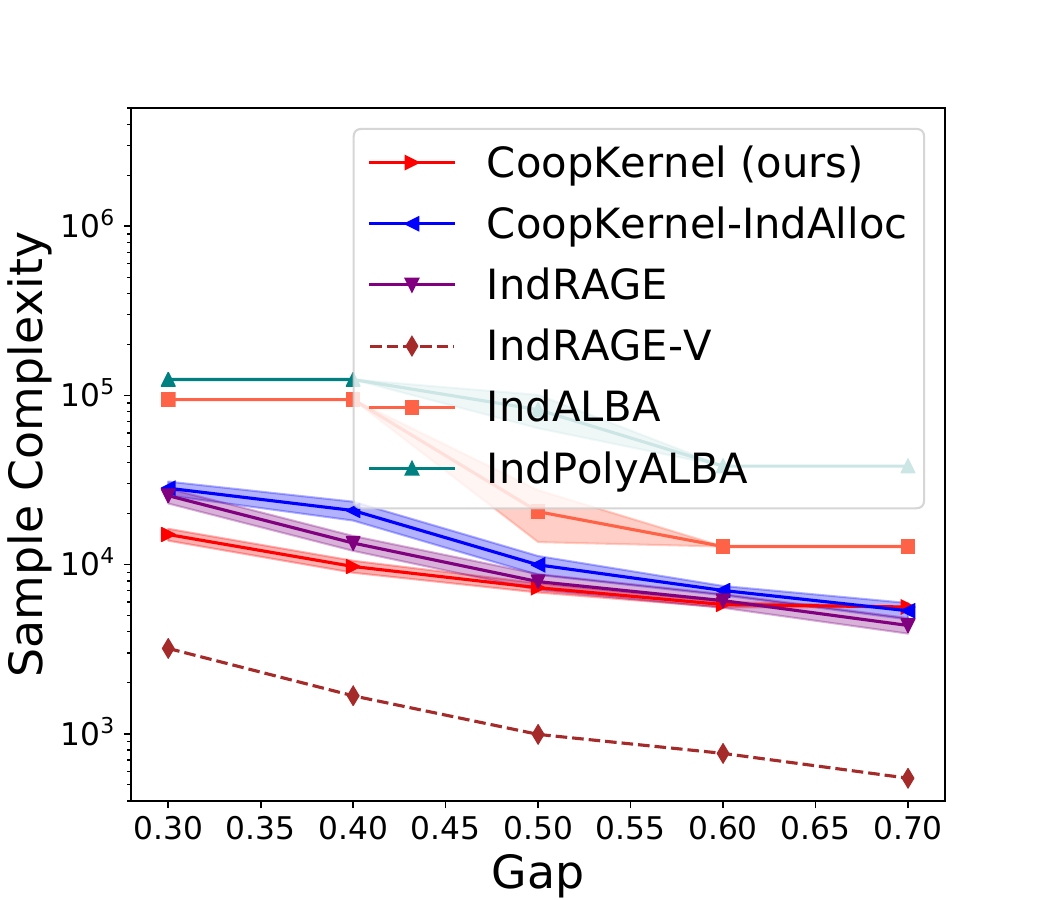} 
		}  
		\vspace*{-0.5em}
		\caption{\revision{Experimental results for CoPE-KB in the FC setting.}} \label{fig:experiment}
	\end{figure}

	\revision{
		\vspace*{-0.2em}
		\section{Experiments}\label{sec:experiments}
		
		In this section, we provide the experimental results. 
		%we only give a brief description of experimental setup and part of experimental results here. Please see Appendix~\ref{apx:experiment} for a complete setup description and more experimental results. 
		%
		Here we set $V = 5$, $n = 6$, $\delta = 0.005$, $\cH=\R^{d}$, $d \in \{4,8,20\}$, $\theta^*=[0.1,0.1+\Delta_{\min},\dots,0.1+(d-1)\Delta_{\min}]^\top$, $\Delta_{\min} \in [0.1,0.8]$ and $\rank(K_{\mathcal{Z}}) \in [1,V]$.
		% that will be changed to generate different instances.
		%We consider three different cases of task similarities, i.e., $\rank(K_{\mathcal{Z}}) = 1$ (tasks are the same), $\rank(K_{\mathcal{Z}}) \in (1,V)$ (tasks are similar), and $\rank(K_{\mathcal{Z}}) = V$ (tasks are totally different), to show how task similarities impact learning performance.
		We run $50$ independent simulations and plot the average sample complexity with $95\%$ confidence intervals
		(see Appendix~\ref{apx:experiment} for a complete setup description and more results).
		
		We compare our algorithm $\algkernelbai$ with five baselines, i.e., $\mathtt{CoKernel\mbox{-}IndAlloc}$, $\mathtt{IndRAGE}$, $\mathtt{IndRAGE}/V$, $\mathtt{IndALBA}$ and $\mathtt{IndPolyALBA}$. $\mathtt{CoKernel\mbox{-}IndAlloc}$ is an ablation variant of $\algkernelbai$, where agents use locally optimal sample allocations.
		%, instead of a globally optimal sample allocation. 
		$\mathtt{IndRAGE}$, $\mathtt{IndALBA}$ and $\mathtt{IndPolyALBA}$ are adaptions of existing single-agent algorithms $\mathtt{RAGE}$~\citep{rage2019}, $\mathtt{ALBA}$~\citep{tao2018best} and $\mathtt{PolyALBA}$~\citep{du2021combinatorial}, respectively. $\mathtt{IndRAGE}/V$ is a $V$-speedup baseline, which divides the sample complexity of the best single-agent adaption $\mathtt{IndRAGE}$ by $V$.
		
		Figures~\ref{fig:fc_fc}, \ref{fig:fc_mt} show that $\algkernelbai$ achieves the best sample complexity, which demonstrates the effectiveness of our sample allocation and cooperation schemes.
		Moreover, the empirical results reflect that the more tasks are similar, the higher learning speedup agents attain, which matches our theoretical analysis. 
		Specifically, in the $\rank(K_{\cZ})=1$ case (Figure~\ref{fig:fc_fc}), i.e., tasks are the same, $\algkernelbai$ matches the $V$-speedup baseline $\mathtt{IndRAGE}\mbox{-}V$.
		In the $\rank(K_{\cZ})\in(1,V)$ case (Figure~\ref{fig:fc_mt}), i.e., tasks are similar, the sample complexity of $\algkernelbai$ lies between $\mathtt{IndRAGE}/V$ and $\mathtt{IndRAGE}$, which indicates that $\algkernelbai$ achieves a speedup lower than $V$.
		In the $\rank(K_{\cZ})=V$ case (Figure~\ref{fig:fc_dt}), i.e., tasks are totally different, $\algkernelbai$ performs similar to $\mathtt{IndRAGE}$, since information sharing brings no advantage in this case.
		
	}

	\vspace*{-0.2em} 
	\section{Conclusion}
	In this paper, we propose a collaborative pure exploration in kernel bandit (CoPE-KB) model with fixed-confidence and fixed-budget objectives. CoPE-KB generalizes prior CoPE formulation from the single-task and classic MAB setting to allow multiple tasks and general reward structures.
	We propose novel algorithms with an efficient kernelized estimator and a novel communication scheme. 
	%Nearly matching upper and lower bounds 
	Sample and communication complexities are provided to corroborate the efficiency of our algorithms. 
	Our results explicitly quantify the influences of task similarities on learning speedup, and only depend on the effective dimension of feature space.
	
%	\section{Reproducibility Statement}
%	
%	To ensure that our results can be easily reproduced, we present complete proofs for all theoretical results in Appendix. Furthermore, the source code and implementation instructions for our experiments are also provided in the supplementary material.

	\section*{Acknowledgements}

	The work of Yihan Du and Longbo Huang is supported  by the Technology and Innovation Major Project of the Ministry of Science and Technology of China under Grant 2020AAA0108400 and 2020AAA0108403, the Tsinghua University Initiative Scientific Research Program, and Tsinghua Precision Medicine Foundation 10001020109.
	Yuko Kuroki was supported by Microsoft Research Asia and JST ACT-X JPMJAX200E.

	\bibliographystyle{iclr2023_conference}
	\bibliography{iclr2023_CoPE_KB_revision_ref}
	
	\OnlyInFull{
		\clearpage
		\appendix
%\section*{\large Appendix}

\renewcommand{\appendixpagename}{\centering \Large \textsc{Appendix}}
\appendixpage

%\tableofcontents

%\renewcommand{\appendixpagename}{\centering \LARGE Appendix}
%\appendixpage
%\startcontents[section]
%\printcontents[section]{l}{1}{\setcounter{tocdepth}{2}}
%\clearpage

\revision{
\section{More Experiments} \label{apx:experiment}

In this section, we give a complete description of experimental setup, and present the results for the FB setting. 

\textbf{Experimental Setup.} 
Our experiments are run on Intel Xeon E5-2660 v3 CPU at 2.60GHz.
We set $V = 5$, $n = 6$, $\delta = 0.005$ and $\cH=\R^{d}$, where $d$ is a dimension parameter that will be specified later. 
We consider three different cases of task similarities, i.e., $\rank(K_{\mathcal{Z}}) = 1$ (tasks are the same), $\rank(K_{\mathcal{Z}}) \in (1,V)$ (tasks are similar), and $\rank(K_{\mathcal{Z}}) = V$ (tasks are totally different), to show how task similarities impact learning performance in practice.

For the $\rank(K_{\mathcal{Z}})=1$ case, we set $d=4$. For any $v \in [V]$, $\{\phi(\tilde{x})\}_{\tilde{x} \in \tilde{\mathcal{X}}_v}$ is the set of all $\binom{4}{2}$ vectors in $\R^{4}$, where each vector has two entries $0$ and two entries $1$.
For the $\rank(K_{\cZ})\in(1,V)$ case, we set $d=8$. For any $v \in \{1,2\}$ and $v' \in \{3,4,5\}$, $\{\phi(\tilde{x})\}_{\tilde{x} \in \tilde{\mathcal{X}}_v}$ and $\{\phi(\tilde{x})\}_{\tilde{x} \in \tilde{\mathcal{X}}_{v'}}$ are the two sets of all $\binom{4}{2}$ vectors in the first and second subspaces $\R^{4}$ of the whole space $\R^{8}$, respectively, where each vector has two entries $0$ and two entries $1$.
For the $\rank(K_{\cZ})=V$ case, we set $d=20$. For any $v \in [V]$, $\{\phi(\tilde{x})\}_{\tilde{x} \in \tilde{\mathcal{X}}_v}$ is the set of all $\binom{4}{2}$ vectors in the $v$-th subspace $\R^{4}$ of the whole space $\R^{20}$, where each vector has two entries $0$ and two entries $1$.
For all cases, we set $\theta^*=[0.1,0.1+\Delta_{\min},\dots,0.1+(d-1)\Delta_{\min}]^\top \in \R^{d}$, where $\Delta_{\min}$ is a reward gap parameter that will be tuned in the experiments.

In the FC setting, we change the reward gap $\Delta_{\min} \in [0.1,0.8]$ to generate different instances, and run $50$ independent simulations to plot the average sample complexity with $95\%$ confidence intervals.
In the FB setting, we change the sample budget $T \in [7000, 300000]$ to obtain different instances, and perform $100$ independent runs to report the average error probability across runs. 
% The specific values of $\Delta_{\min}$ and $T$ can be seen in the X-axis of Figures~\ref{fig:experiment} and \ref{fig:experiment_apx}, respectively.

\textbf{Results for the Fixed-Budget Setting.} 
%Figure~\ref{fig:experiment_apx} presents the experimental results for the FB setting.
In the FB setting (Figure~\ref{fig:experiment_apx}), we compare our algorithm $\algkernelbaiFB$ with three baselines, i.e., $\mathtt{CoKernelFB\mbox{-}IndAlloc}$, $\mathtt{IndPeaceFB}$~\citep{peace2020} and $\mathtt{IndUniformFB}$. 
$\mathtt{CoKernelFB\mbox{-}IndAlloc}$ is an ablation variant of $\algkernelbaiFB$, where agents use locally optimal sample allocations, instead of a globally optimal sample allocation. 
$\mathtt{IndPeaceFB}$ and $\mathtt{IndUniformFB}$ run $V$ copies of existing single-agent algorithms, $\mathtt{PeaceFB}$~\citep{peace2020} and the uniform sampling strategy, to independently complete the $V$ tasks, respectively.

Figures~\ref{fig:fb_fc},~\ref{fig:fb_mt} show that, our $\algkernelbaiFB$ enjoys a lower error probability than the baselines. 
% $\mathtt{CoKernelFB\mbox{-}IndAlloc}$ fails to plan a globally optimal allocation, and thus suffers higher error probability. The centralized algorithms $\mathtt{IndPeaceFB}$ and $\mathtt{IndUniformFB}$ perform poorly since the agents solve individual tasks independently without any information sharing.
Moreover, the experimental results also reflect the influences of task similarities on learning speedup, and match our theoretical bounds. 
Specifically, from Figures~\ref{fig:fb_fc} to \ref{fig:fb_dt}, the error probability of $\algkernelbaiFB$ gets closer and closer to that of the single-agent adaption $\mathtt{IndPeaceFB}$, which shows that the learning speedup of $\algkernelbaiFB$ slows down as the task similarity decreases.
}

\section{Related Work}\label{apx:related_work}

In this section, we present a complete review of related works.

\textbf{Collaborative Pure Exploration (CoPE).}
The collaborative pure exploration literature is initiated by \citep{distributed2013}, where all agents solve a common classic best arm identification problem. \cite{distributed2013} design fixed-confidence algorithms based on majority vote and provides upper bound analysis.  
\cite{tao2019collaborative} further develop a fixed-budget algorithm and complete the analysis of communication round-speedup lower bounds. 
\cite{top_collaborative2020} extend the best arm identification formulation of \citep{distributed2013,tao2019collaborative} to best $m$ arm identification, and show complexity separations between these two formulations. 
Our CoPE-KB generalizes prior CoPE works~\citep{distributed2013,tao2019collaborative,top_collaborative2020} from single-task and classic MAB setting to allow multiple tasks and general reward structures, and faces unique challenges on communication and computation due to high (possibly infinite) dimensional feature space.

%There are also other works studying collaborative or distributed bandits with the regret minimization objective. For example, \citet{liu2010distributed,rosenski2016multi,bistritz2018distributed} study multi-player bandits with collisions, where players choose arms from the same set and need to minimize collisions. 
%\citet{bubeck2020coordination,bubeck2021cooperative} investigate a variant problem, where players cannot communicate but have access to shared randomness.
%%\citet{chakraborty2017coordinated} introduce another distributed bandit problem, where each agent decides either to pull an arm or to broadcast a message in order to maximize the total reward. \citet{szorenyi2013gossip,korda2016distributed} adapt bandit algorithms to peer-to-peer networks, where the peers pick arms from the same set and can only communicate with a few random others along network links. 
%The above works consider largely different objectives and communication protocols from ours, and do not involve the challenges on multi-task coordination and round-speedup trade-off.

\textbf{Kernel Bandits.}
For kernel bandits with the regret minimization objective, 
\cite{GP_UCB_Srinivas2010} and \cite{kernelUCB_valko2013} design algorithms from Bayesian and frequentist perspectives, respectively. 
\cite{IGP_UCB2017} improve the regret bound in \citep{GP_UCB_Srinivas2010} by building a new self-normalized concentration inequality for kernel bandits. \cite{scarlett2017lower} develop regret lower bounds for the squared-exponential kernel and Mat\'ern kernel, and \cite{li2022gaussian} establish a near-optimal regret upper bound.
\cite{krause2011contextual,multi_task_2017} study multi-task kernel bandits, which consider a composite kernel constituted by two sub-kernels with respect to tasks and items. 
\cite{dubey2020kernel} investigate multi-agent kernel bandits with a local communication protocol, where agents can immediately share the observed data with their neighbors in a network. \cite{li2022communication} study distributed kernel bandits with a client-server communication protocol, where agents only communicate with a central server. The communication costs in \citep{dubey2020kernel,li2022communication} depend on the total number of timesteps $T$ in the regret minimization game, while our communication costs depend only on the number of arms in the pure exploration setting. 
%All of the above works are designed for regret minimization, which has a large difference from our pure exploration problem, and their regret analysis techniques cannot be applied to our sample complexity/error probability analysis either.

For kernel bandits with the pure exploration objective, there are only a few related works~\citep{scarlett2017lower,vakili2021optimal,high_dimensional_ICML2021,neural_pure_exploration2021}, and all of them study the single-agent formulation. \cite{scarlett2017lower,vakili2021optimal} consider the simple regret setting in pure exploration, where an agent reports an arm at each timestep and aims to minimize the suboptimality of the reported arm (simple regret) after $T$ timesteps. In contrast, \cite{high_dimensional_ICML2021,neural_pure_exploration2021} investigate the best arm identification setting in pure exploration, where an agent aims to identify the best arm and minimizes the number of samples used (our work also falls in the best arm identification line). \cite{high_dimensional_ICML2021} propose a robust inverse propensity score estimator to save the samples needed for rounding procedures. 
\cite{neural_pure_exploration2021} use neural networks to approximate nonlinear reward functions. 
%Our work differs from the aforementioned works as follows. (i) We consider a general multi-task framework, which allows tasks among agents to be different and characterizes how task similarities impact learning acceleration. (ii) We study general (nonlinear) reward functions in an infinite-dimensional RKHS, which model the nonlinear dependency of rewards on arm features in real-world applications and pose unique challenges in the computation and communication efficiency of algorithms. (iii) Unlike prior lower bound analysis~\cite{tao2019collaborative,top_collaborative2020}, which only considers a simple $2$-armed or non-structured reward case, we analyze general multi-armed and high-dimensional linear structured instances using novel techniques, e.g., the measure transformation under linear reward structures and construction of counter reward functions. 
%In addition, in the fully-collaborative and $K$-armed bandit settings~\cite{distributed2013,tao2019collaborative}, a degenerated version of our problem, our lower and upper bounds also match the state-of-the-art results of prior settings. 
%(iv) We derive novel (communication) round-speedup trade-off analysis for kernel bandit. 
%\yihan{Not sure whether we should move this paragraph to Introduction.}\longbo{I think so. at least part of it}
%
The above kernel bandit works consider either the regret minimization or single-agent setting, which is different from our CoPE-KB problem and does not involve our challenges on round-speedup trade-off analysis and communication. 
Thus, these works cannot be applied to solve our problem. 
%due to the explicit expression of high-dimensional parameters and lack of efficient estimators. 
%In contrast, we investigate the multi-agent (multi-task) setting, develop fast kernelized estimators to attain both computation and communication efficiency, and derive novel (communication) round-speedup trade-off analysis for kernel bandit.

\revision{Recently, there are several works which study federated/distributed bandits with the regret minimization objective.
\cite{wang2019distributed} study distributed multi-armed and linear bandit problems, and \cite{he2022simple} investigate federated linear bandits with asynchronous communication.} \cite{wang2019distributed,he2022simple,dubey2020differentially,huang2021federated,li2022asynchronous} develop algorithms for (low-dimensional) federated linear bandits, where agents directly communicate the whole vectors of reward parameters. These works cannot be adapted to resolve our challenges, because under kernel representation, the reward parameter is high-dimensional and expensive to explicitly calculate or transmit.

\begin{figure}[t]
	\centering     
	\subfigure[\revision{FB, $\rank(K_{\cZ})=1$}] {   \label{fig:fb_fc}
		\includegraphics[width=0.28\columnwidth]{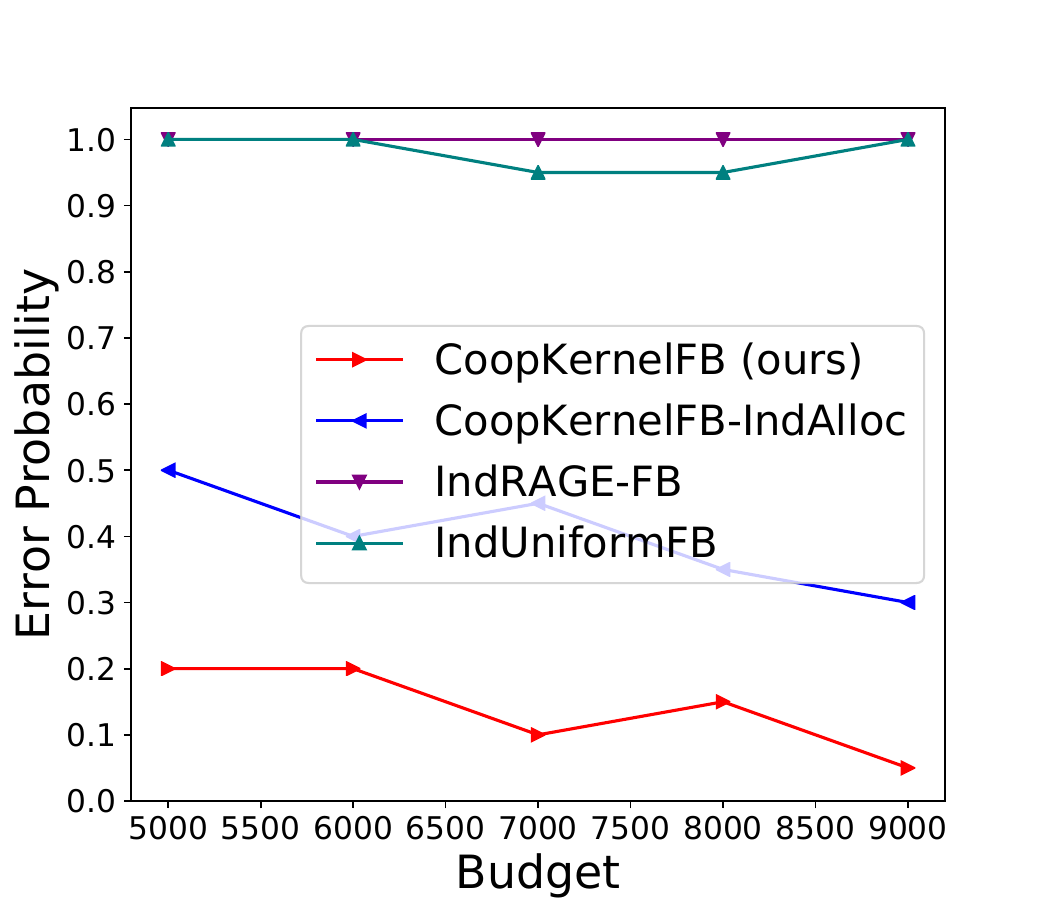} 
	}   
	\subfigure[\revision{FB, $\rank(K_{\cZ}) \in (1,V)$}] {     \label{fig:fb_mt}
		\includegraphics[width=0.28\columnwidth]{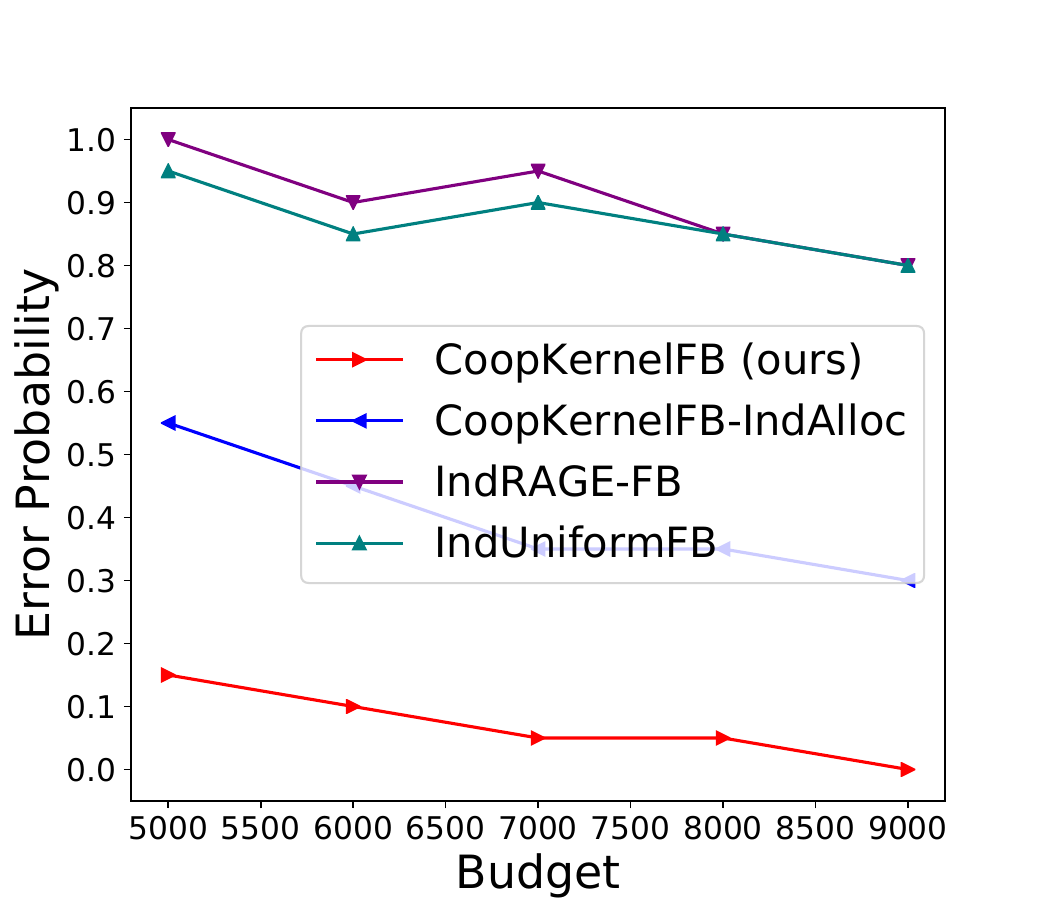} 
	}   
	\subfigure[\revision{FB, $\rank(K_{\cZ})=V$}] {    \label{fig:fb_dt}
		\includegraphics[width=0.28\columnwidth]{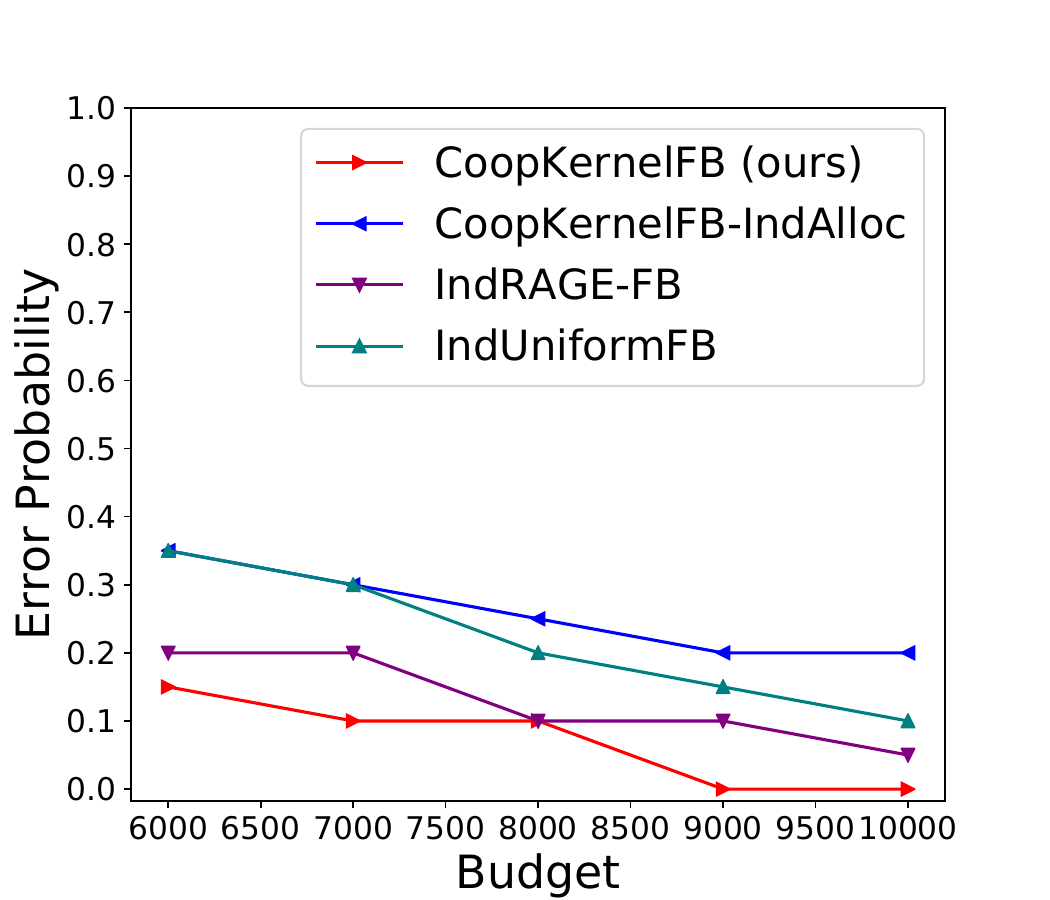} 
	}   
	\caption{\revision{Experimental results for CoPE-KB in the FB setting.}} \label{fig:experiment_apx}
\end{figure}

\section{Omitted Discussion for Algorithms}

\subsection{Rounding Procedure} \label{apx:rounding_procedure}

In algorithms $\algkernelbai$ and $\algkernelbaiFB$, we use a rounding procedure $\round(\xi,\lambda,N,\varepsilon)$~ (Algorithm~2 in \citep{high_dimensional_ICML2021}). For any weighted sample allocation $\lambda \in \triangle_{\ti{\cX}}$,  regularization parameter $\xi$ and approximation parameter $\varepsilon$, with $N \geq \tau(\xi,\lambda,\varepsilon)=O(\frac{\tilde{d}(\xi,\lambda)}{\varepsilon^2})$ samples, $\round(\xi,\lambda,N,\varepsilon)$ returns a discrete sample allocation $(\ti{s}_1,\dots,\ti{s}_{N}) \in \ti{\cX}^{N}$
such that
\begin{align}
&\max \limits_{\ti{x}_i,\ti{x}_j \in \ti{\cX}_{v}, v \in [V]}  \| \phi(\ti{x}_i)-\phi(\ti{x}_j) \|^2_{(N \xi I + \sum_{i=1}^{N} \phi(\ti{s}_i) \phi(\ti{s}_i)^\top)^{-1}} 
\nonumber\\
\leq & 2(1+\epsilon) \max \limits_{\ti{x}_i,\ti{x}_j \in \ti{\cX}_{v}, v \in [V]}  \| \phi(\ti{x}_i)-\phi(\ti{x}_j) \|^2_{(N \xi I + \sum_{\ti{x} \in \ti{\cX}}  N \lambda_{\ti{x}}  \phi(\ti{x}) \phi(\ti{x})^\top)^{-1}} . \label{eq:rounding_procedure}
\end{align}

This rounding procedure requires the number of samples $N$ to satisfy that $N \geq \tau(\xi,\lambda,\varepsilon)=O(\frac{\tilde{d}(\xi,\lambda)}{\varepsilon^2})$. Here $\tilde{d}(\xi,\lambda)$ is the number of the eigenvalues of matrix $\sum_{\ti{x} \in \ti{\cX}} \lambda_{\ti{x}}  \phi(\ti{x}) \phi(\ti{x})^\top$ which are greater than $\xi$. $\tilde{d}(\xi,\lambda)$ stands for the effective dimension of the feature space spanned by data projections of $\ti{\cX}$ under regularization parameter $\xi$.

%In algorithm $\algkernelbai$, by calling $\round(\xi,\lambda^*_r,N^{(r)},\varepsilon)$, the agent generates a sample sequence $(\ti{s}_1, \dots, \ti{s}_{N^{(r)}})$ according to $\lambda^*_r$ (Line~\ref{line:round} in Algorithm~\ref{alg:kernel_bai}). Algorithm $\algkernelbaiFB$ also conducts a similar step (Line~\ref{line:fb_round} in Algorithm~\ref{alg:kernel_bai_fb}).

\subsection{Kernelized Computation in Algorithm $\algkernelbai$} \label{apx:kernel_computation}

\textbf{Kernelized Estimator.}
Below we present a derivation of our kernelized estimator (Eq.~\eqref{eq:kernel_estimator_reward_gap}),
%  and Line~\ref{line:estimated_reward_gap} of Algorithm~\ref{alg:kernel_bai}
which plays an important role in boosting the communication and computation efficiency.

Let $\hat{\theta}_r$ denote the minimizer of the following regularized least square loss function:
$$
\cL(\theta)= N^{(r)} \xi \| \theta \|^2 + \sum_{j=1}^{N^{(r)}} (y_j - \phi(\ti{s}_j)^\top \theta)^2 .
$$
Letting the derivative of $\cL(\theta)$ equal to zero, we have 
\begin{align*}
	N^{(r)} \xi \hat{\theta}_r + \sum_{j=1}^{N^{(r)}}  \phi(\ti{x}_j) \phi(\ti{x}_j)^\top \hat{\theta}_r = \sum_{j=1}^{N^{(r)}} \phi(\ti{x}_j) y_j .
\end{align*}
Merging repetitive computations for the same arms, we can obtain
\begin{align}
	N^{(r)} \xi \hat{\theta}_r + \sbr{\sum_{i=1}^{nV} N^{(r)}_i \phi(\ti{x}_i) \phi(\ti{x}_i)^\top} \hat{\theta}_r  = & \sum_{i=1}^{nV} N^{(r)}_i \phi(\ti{x}_i) \bar{y}^{(r)}_{i}, \label{eq:derivative_zero}
\end{align}
where $N^{(r)}_i$ is the number of samples and $\bar{y}^{(r)}_i$ is the average observation on arm $\ti{x}_i$ for any $i \in [nV]$. 
Let $\Phi_r=[\sqrt{N^{(r)}_1} \phi(\ti{x}_1)^\top; \dots; \sqrt{N^{(r)}_{nV}} \phi(\ti{x}_{nV})^\top]$, $K^{(r)}=\Phi_r \Phi_r^\top=[\sqrt{N^{(r)}_i N^{(r)}_j} k(\ti{x}_i,\ti{x}_j)]_{i,j \in [nV]}$ and $\bar{y}^{(r)}=[ \sqrt{N^{(r)}_1} \bar{y}^{(r)}_1, \dots, \sqrt{N^{(r)}_{nV}} \bar{y}^{(r)}_{nV} ]^\top$.
Then, we can write Eq.~\eqref{eq:derivative_zero} as
\begin{align*}
	\sbr{N^{(r)} \xi I  + \Phi_r^\top \Phi_r } \hat{\theta}_r = & \Phi_r^\top \bar{y}^{(r)} .
\end{align*}
Since $\sbr{N^{(r)} \xi I + \Phi_r^\top \Phi_r } \succ 0$ and $\sbr{N^{(r)} \xi I + \Phi_r \Phi_r^\top } \succ 0$,
\begin{align*}
	\hat{\theta}_r & = \sbr{N^{(r)} \xi I + \Phi_r^\top \Phi_r }^{-1} \Phi_r^\top \bar{y}^{(r)}
	\\
	& = \Phi_r^\top \sbr{N^{(r)} \xi I + \Phi_r \Phi_r^\top }^{-1} \bar{y}^{(r)} 
	\\
	& = \Phi_r^\top \sbr{N^{(r)} \xi I + K^{(r)} }^{-1} \bar{y}^{(r)} .
\end{align*}
Let $k_r(\ti{x})=\Phi_r \phi(\ti{x})=[\sqrt{N^{(r)}_1} k(\ti{x}, \ti{x}_1), \dots, \sqrt{N^{(r)}_{nV}} k(\ti{x}, \ti{x}_{nV})]^\top$ for any $\ti{x} \in \ti{\cX}$.
Then, for any arms $\ti{x}_i, \ti{x}_j \in \tilde{\cX}$, we obtain the efficient kernelized estimators of the expected reward $F(\ti{x}_i)$ and expected reward gap $F(\ti{x}_i)-F(\ti{x}_j)$ as 
\begin{align*}
	\hat{f}_r(\ti{x}_i) & = \phi(\ti{x}_i)^\top \hat{\theta}_r
	=  k_r(\ti{x}_i)^\top \sbr{N^{(r)} \xi I + K^{(r)} }^{-1}  \bar{y}^{(r)} , 
	\\
	\hat{\Delta}_r(\ti{x}_i,\ti{x}_j) 
	& = \sbr{k_r(\ti{x}_i)- k_r(\ti{x}_j)}^\top \sbr{N^{(r)} \xi I + K^{(r)} }^{-1}  \bar{y}^{(r)} . 
\end{align*}

\textbf{Kernelized Optimization Solver.} 
Following \citep{high_dimensional_ICML2021}, we use a kernelized optimization solver for the min-max optimization in Line~\ref{line:min_max_lambda} of Algorithm~\ref{alg:kernel_bai}. The optimization problem is as follows.
\begin{align}
	\min \limits_{\lambda \in \triangle_{\ti{\cX}}} \max \limits_{\ti{x}_i,\ti{x}_j \in \cB_v^{(r)}, v \in [V]}  \| \phi(\ti{x}_i)-\phi(\ti{x}_j) \|^2_{A(\xi,\lambda)^{-1}} \label{eq:min_max_opt} ,
\end{align}
where $A(\xi,\lambda) := \xi I + \sum_{\ti{x} \in \ti{\cX}} \lambda_{\ti{x}} \phi(\ti{x}) \phi(\ti{x})^\top$ for any $\xi \geq 0$ and $\lambda \in \triangle_{\ti{\cX}}$.

Define function $h(\lambda)=\max \limits_{\ti{x}_i,\ti{x}_j \in \cB_v^{(r)}, v \in [V]}  \| \phi(\ti{x}_i)-\phi(\ti{x}_j) \|^2_{A(\xi,\lambda)^{-1}}$, and define $\ti{x}^*_i(\lambda)$, $\ti{x}^*_j(\lambda)$ as the optimal solution of $h(\lambda)$.
Then, the gradient of $h(\lambda)$ with respect to $\lambda$ is
\begin{align}
	[\nabla_{\lambda} h(\lambda)]_{\ti{x}}= - \sbr{ \sbr{\phi(\ti{x}^*_i(\lambda))-\phi(\ti{x}^*_j(\lambda))}^\top A(\xi,\lambda)^{-1} \phi(\ti{x}) }^2, \forall \ti{x}\in \ti{\cX} . \label{eq:gradient_h_lambda}
\end{align}
Next, we show how to efficiently compute gradient $[\nabla_{\lambda} h(\lambda)]_{\ti{x}}$ with kernel function $k(\cdot,\cdot)$.

Since $\sbr{\xi I + \Phi_{\lambda}^\top \Phi_{\lambda} } \phi(\ti{x}) = \xi \phi(\ti{x}) + \Phi_{\lambda}^\top k_{\lambda}(\ti{x})$ for any $\ti{x} \in \ti{\cX}$, we have
\begin{align*}
	\phi(\ti{x}) = & \xi \sbr{\xi I + \Phi_{\lambda}^\top \Phi_{\lambda} }^{-1} \phi(\ti{x}) + \sbr{\xi I + \Phi_{\lambda}^\top \Phi_{\lambda} }^{-1} \Phi_{\lambda}^\top k_{\lambda}(\ti{x}) 
	\\
	= & \xi \sbr{\xi I + \Phi_{\lambda}^\top \Phi_{\lambda} }^{-1} \phi(\ti{x}) + \Phi_{\lambda}^\top \sbr{\xi I + K_{\lambda} }^{-1} k_{\lambda}(\ti{x}) 
\end{align*}
Multiplying $\sbr{\phi(\ti{x}^*_i(\lambda))-\phi(\ti{x}^*_j(\lambda))}^\top$ on both sides, we have
\begin{align*}
	& \sbr{\phi(\ti{x}^*_i(\lambda))-\phi(\ti{x}^*_j(\lambda))}^\top \phi(\ti{x}) 
	\\
	= & \xi \sbr{\phi(\ti{x}^*_i(\lambda))-\phi(\ti{x}^*_j(\lambda))}^\top \sbr{\xi I + \Phi_{\lambda}^\top \Phi_{\lambda} }^{-1} \phi(\ti{x}) \\&+ \sbr{k_{\lambda}(\ti{x}^*_i(\lambda))-k_{\lambda}(\ti{x}^*_j(\lambda))}^\top \sbr{\xi I + K_{\lambda} }^{-1} k_{\lambda}(\ti{x}) 
\end{align*}
Then,
\begin{align}
	& \sbr{\phi(\ti{x}^*_i(\lambda))-\phi(\ti{x}^*_j(\lambda))}^\top \sbr{\xi I + \Phi_{\lambda}^\top \Phi_{\lambda} }^{-1} \phi(\ti{x})
	\nonumber\\
	= &\xi^{-1}  \sbr{\phi(\ti{x}^*_i(\lambda))-\phi(\ti{x}^*_j(\lambda))}^\top \phi(\ti{x})  - \xi^{-1} \sbr{k_{\lambda}(\ti{x}^*_i(\lambda))-k_{\lambda}(\ti{x}^*_j(\lambda))}^\top \sbr{\xi I + K_{\lambda} }^{-1} k_{\lambda}(\ti{x})  
	\nonumber\\
	= & \xi^{-1} \sbr{ k(\ti{x}^*_i(\lambda), \ti{x})- k(\ti{x}^*_j(\lambda), \ti{x})  -  \sbr{k_{\lambda}(\ti{x}^*_i(\lambda))-k_{\lambda}(\ti{x}^*_j(\lambda))}^\top \sbr{\xi I + K_{\lambda} }^{-1} k_{\lambda}(\ti{x}) }
	\label{eq:apx_kernelized_gradient}
\end{align}
Therefore, we can compute gradient $\nabla_{\lambda} h(\lambda)$ (Eq.~\eqref{eq:gradient_h_lambda}) using the equivalent kernelized expression Eq.~\eqref{eq:apx_kernelized_gradient}, and then the optimization (Eq.~\eqref{eq:min_max_opt}) can be efficiently solved by projected gradient descent.

\subsection{Comparison of Estimators} \label{apx:comparison_estimator}

In the following, we compare our kernelized estimator (Eq.~\eqref{eq:kernel_estimator_reward_gap}) to those in prior kernel bandit works~\citep{neural_ucb_2020,neural_pure_exploration2021,dubey2020kernel,high_dimensional_ICML2021} and adaptions of federated linear bandits~\citep{dubey2020differentially,huang2021federated,li2022asynchronous}.

First, prior kernel bandit works~\citep{neural_ucb_2020,neural_pure_exploration2021} or adaptions of federated linear bandits~\citep{dubey2020differentially,huang2021federated,li2022asynchronous} explicitly maintain the regularized least square estimator of reward parameter $\theta^*$ as 
\begin{align}
	\hat{\theta}_r = & \Big( N^{(r)} \xi I + \sum_{j=1}^{N^{(r)}}  \phi(\tilde{s}_j) \phi(\tilde{s}_j)^\top \Big)^{-1} \sum_{j=1}^{N^{(r)}} \phi(\tilde{s}_j) y_j \label{eq:standard_estimator_theta} .
\end{align}
Since both $\hat{\theta}_r$ and $\phi(\tilde{s}_j)$ lie in the high-dimensional feature space $\cH$, this estimator will incur $O(\dim(\cH))$ computation and communication costs. 

Second, \citep{dubey2020kernel} and Algorithm 6 in \citep{high_dimensional_ICML2021} use a redundant kernelized form of $\hat{\theta}_r$ as
\begin{align*}
	\hat{\theta}_r=(\Psi^\top \Psi + N^{(r)} \xi I)^{-1} \Psi^\top Y ,
\end{align*}
where $\Psi:=[\phi(\tilde{s}_1)^\top;\dots;\phi(\tilde{s}_{N^{(r)}})^\top] \in \R^{N^{(r)} \times \dim(\cH)}$, $Y:=[y_1,\dots,y_{N^{(r)}}]^\top \in \R^{N^{(r)}} $, and $y_1,\dots,y_{N^{(r)}}$ are the observed raw outcomes of sample sequence $(\ti{s}_1, \dots, \ti{s}_{N^{(r)}})$.
This estimator needs all $N^{(r)}$ raw sample outcomes $y_1,\dots,y_{N^{(r)}}$ as inputs, rather than only $nV$ average outcomes $\bar{y}^{(r)}_{1},\dots,\bar{y}^{(r)}_{nV}$ as our estimator (Eq.~\eqref{eq:kernel_estimator_theta}),
%\eqref{eq:kernel_estimator_reward_gap}
which will incur a $O(N^{(r)})$ communication cost.
Here the number of samples $N^{(r)}$ is often far larger than the number of arms $nV$.

Finally, Algorithm 4 in \citep{high_dimensional_ICML2021} adopts a robust inverse propensity score estimator as
\begin{align*}
	\hat{\theta}_r&:=\argmin_{\theta} \max_{\ti{x}_i,\ti{x}_{i'} \in \cB_{v}^{(r)}, v \in [V]} \frac{\abr{(\phi(\ti{x}_i)-\phi(\ti{x}_{i'}))^\top \theta - W^{i,i'}}}{\| \phi(\ti{x}_i)-\phi(\ti{x}_{i'}) \|_{A(\xi,\lambda^*_r)^{-1}}} ,
	\\
	W^{i,i'}&:=\hat{\mu}\sbr{ \lbr{ (\phi(\ti{x}_i)-\phi(\ti{x}_{i'}))^{\top}  A(\xi,\lambda^*_r)^{-1}  \phi(\tilde{s}_j) y_j }_{j=1}^{N^{(r)}} } ,
\end{align*}
Here $A(\xi,\lambda):=\xi I + \sum_{\ti{x} \in \ti{\cX}} \lambda_{\ti{x}} \phi(\ti{x}) \phi(\ti{x})^\top$ for any $\xi \geq 0$ and $\lambda \in \triangle_{\ti{\cX}}$.  $\lambda^*_r$ is the optimal sample allocation defined in Line~\ref{line:min_max_lambda} of algorithm $\algkernelbai$ (Algorithm~\ref{alg:kernel_bai}). $\hat{\mu}(\cdot)$ is the median-of-means estimator or Catoni's estimator~\citep{lugosi2019mean}.
This estimator also requires all $N^{(r)}$ raw sample outcomes $y_1,\dots,y_{N^{(r)}}$, and will cause a $O(N^{(r)})$ communication cost.

\subsection{Pseudo-code and Description of Algorithm $\algkernelbaiFB$} \label{apx:pseudocode_description}

%, the budget satisfies $T\geq \frac{ \max \{\rho^*(\xi), \ \tilde{d}(\xi,\lambda^*_1)\} }{V} \log(\omega(\xi,\ti{\cX}))$, and the arm set satisfies
%$\omega(\xi, \{\ti{x}_{v,*}, \ti{x}\}) \geq 1$ for all $\ti{x} \in \ti{\cX}_v \setminus \{\ti{x}_{v,*}\}$, $v \in [V]$.

\begin{algorithm*}[t]
	\caption{Collaborative Multi-agent Algorithm $\algkernelbaiFB$: for Agent $v \in [V]$} \label{alg:kernel_bai_fb}
	\begin{algorithmic}[1]
		\STATE {\bfseries Input:} regularization parameter $\xi \leq \frac{1}{16(1+\varepsilon)^2 (\rho^*(\xi))^2 \|\theta^*\|^2}$, per-agent budget $T\geq \frac{ \max \{\rho^*(\xi), \ \tilde{d}(\xi,\lambda^*_1)\} }{V} \log(\omega(\xi,\ti{\cX}))$,
		arm set $\ti{\cX}$ which satisfies $\omega(\xi, \{\ti{x}_{v,*}, \ti{x}\}) \geq 1$ for all $\ti{x} \in \ti{\cX}_v \setminus \{\ti{x}_{v,*}\}$ and $v \in [V]$, $k(\cdot,\cdot): \ti{\cX} \times \ti{\cX} \mapsto \R $, rounding procedure $\round$, rounding approximation parameter $\varepsilon=\frac{1}{10}$ \label{line:fb_input}
		\STATE {\bfseries Initialization:}  $R \leftarrow \lceil \log_2(\omega(\xi,\ti{\cX})) \rceil$. $N \leftarrow \left \lfloor TV/R \right \rfloor$.
		$\cB_{v'}^{(1)} \leftarrow \ti{\cX}_{v'}$ for all $v' \in [V]$. $r \leftarrow 1$.
		\algcomment{pre-determine the number of rounds and samples}
		\label{line:fb_initialization}
		\WHILE{$r \leq R$ \textup{\textbf{and}} $\exists v' \in [V], |\cB_{v'}^{(r)}| >1$}
		\STATE Let $\lambda^*_r$ and $\rho^*_r$ be the optimal solution and optimal value of\\
		$\min \limits_{\lambda \in \triangle_{\ti{\cX}}} \max \limits_{\ti{x}_i,\ti{x}_j \in \cB_{v'}^{(r)}, v' \in [V]}  \| \phi(\ti{x}_i)-\phi(\ti{x}_j) \|^2_{(\xi I + \sum_{\ti{x} \in \ti{\cX}} \lambda_{\ti{x}} \phi(\ti{x}) \phi(\ti{x})^\top)^{-1}} $ \label{line:fb_min_max_opt} \algcomment{compute the optimal sample allocation}
		\STATE $(\ti{s}_1, \dots, \ti{s}_{N}) \leftarrow \round(\xi,\lambda^*_r, N, \varepsilon)$ \label{line:fb_round}
		\STATE Extract a sub-sequence $\ti{\bs{s}}_v^{(r)}$ from $(\ti{s}_1, \dots, \ti{s}_{N})$ which only contains arms in $\tilde{\cX}_v$ \label{line:fb_subsequence}
		\STATE Sample $\ti{\bs{s}}_v^{(r)}$ and observe random rewards $\by_v^{(r)}$
		\STATE Broadcast $\{(N^{(r)}_{v,i}, \bar{y}^{(r)}_{v,i}) \}_{i \in [n]}$, and receive $\{(N^{(r)}_{v',i}, \bar{y}^{(r)}_{v',i}) \}_{i \in [n]}$ from all other agents $v' \in [V]\setminus\{v\}$  \label{line:fb_receive}
		%		\STATE $k_r(\ti{x}) \leftarrow [\sqrt{N^{(r)}_1} k(\ti{x}, \ti{x}_1), \dots, \sqrt{N^{(r)}_{nV}} k(\ti{x}, \ti{x}_{nV})]^\top$ for any $\ti{x} \in \ti{\cX}$. $K^{(r)} \leftarrow [\sqrt{N^{(r)}_i N^{(r)}_j} k(\ti{x}_i,\ti{x}_j)]_{i,j \in [nV]}$. $\bar{\by}^{(r)} \leftarrow [ \sqrt{N^{(r)}_1} \bar{y}^{(r)}_1, \dots, \sqrt{N^{(r)}_{nV}} \bar{y}^{(r)}_{nV} ]^\top$ \algcomment{organize overall observation information}
		\FOR{\textup{all $v' \in [V]$}}
		\STATE $\hat{F}_r(\ti{x}) \leftarrow k_r(\ti{x})^\top  (K^{(r)}+ N^{(r)} \xi I)^{-1} \bar{\by}^{(r)}$ for all $\ti{x} \in \cB_{v'}^{(r)}$  \label{line:fb_estimate_reward}
		\algcomment{estimate the rewards of alive arms}
		\STATE Sort all $\ti{x} \in \cB_{v'}^{(r)}$ by $\hat{F}_r(\ti{x})$ in decreasing order, and denote the sorted sequence by $\ti{x}_{(1)}, \dots, \ti{x}_{(|\cB_{v'}^{(r)}|)}$
		\STATE Let $i_{r+1}$ be the largest index such that $\omega(\xi, \{ \ti{x}_{(1)}, \dots, \ti{x}_{(i_{r+1})} \} ) \leq \omega(\xi,\cB_{v'}^{(r)})/2$ \label{line:fb_elimination1}
		\STATE $\cB_{v'}^{(r+1)} \leftarrow \{ \ti{x}_{(1)}, \dots, \ti{x}_{(i_{r+1})} \}$ \label{line:fb_elimination2} \algcomment{cut down the alive arm set to half dimension}
		\ENDFOR
		\STATE $r \leftarrow r+1$
		\ENDWHILE
		\STATE {\bfseries return} $\cB_1^{(r)}, \dots, \cB_V^{(r)}$
	\end{algorithmic}
\end{algorithm*}

In this section, we present the algorithm pseudo-code of $\algkernelbaiFB$ (in Algorithm~\ref{alg:kernel_bai_fb}), and give a detailed algorithm description.

%$\algkernelbaiFB$ is also an elimination-based algorithm using a global optimal sample allocation.
%The main differences fall on that $\algkernelbaiFB$ pre-determines the numbers of rounds and samples according to data dimension, and successively cut down  alive arms to a half with respect to dimension rather than reward gaps.
%
%%$\algkernelbaiFB$ consists of three key steps: (i)  pre-determine the numbers of rounds and samples according to data dimension, (ii) maintain alive arm sets for all agents, plan a globally optimal sample allocation, (iii) communicate observation information and cut down alive arms to a half in the dimension sense. 

The procedure of $\algkernelbaiFB$ is as follows. 
During initialization (Line~\ref{line:fb_initialization}), we pre-determine the number of rounds $R$ and the number of samples $N$ for each round according to data dimension $\omega(\xi,\ti{\cX})$, which is formally defined as 
% (Line~\ref{line:fb_initialization})
$$
\omega(\xi,\ti{\cS}) := \min \limits_{\lambda \in \triangle_{\ti{\cX}}} \max \limits_{\ti{x}_i,\ti{x}_j \in \ti{\cS}}  \| \phi(\ti{x}_i)-\phi(\ti{x}_j) \|^2_{A(\xi,\lambda)^{-1}}, \  \forall \ti{\cS} \subseteq \ti{\cX},
$$
where $A(\xi,\lambda):=\xi I + \sum_{\ti{x} \in \ti{\cX}} \lambda_{\ti{x}} \phi(\ti{x}) \phi(\ti{x})^\top$.
$\omega(\xi,\ti{\cS})$ represents the \emph{principle dimension} of data projections in $\ti{\cS}$ to the RKHS under regularization parameter $\xi$. 
Each agent $v$ maintains alive arm sets $\cB_{v'}^{(r)}$ for all agents $v' \in [V]$, and calculates a global optimal sample allocation $\lambda^*_r$ (Line~\ref{line:fb_min_max_opt}). 
Then, she generates a sample sequence $(\ti{s}^{(r)}_1, \dots, \ti{s}^{(r)}_{N})$ according to $\lambda^*_r$, and selects the sub-sequence $\ti{\bs{s}}_v^{(r)}$ that only contains her available arms to perform sampling (Lines~\ref{line:fb_round},\ref{line:fb_subsequence}). 
Similar to $\algkernelbai$, she only communicates the number of samples $N^{(r)}_{v,i}$ and average observed reward $\bar{y}^{(r)}_{v,i}$ for each arm $\ti{x}_{v,i}$ with other agents (Line~\ref{line:fb_receive}).
With the shared information, she estimates rewards of alive arms and only keeps the best half of them in the dimension sense  (Lines~\ref{line:fb_estimate_reward}-\ref{line:fb_elimination2}). 

Regarding the conditions on the input parameters (Line~\ref{line:fb_input}), the condition on regularization parameter $\xi$ implies that $\algkernelbai$ needs a small regularization parameter such that the bias due to regularization is small. Such conditions are similarly needed in prior kernel bandit work~\citep{high_dimensional_ICML2021}, and can be dropped in the extended PAC setting (allowing a gap between the identified best arm and true best arm). 
The condition on $T$ is to ensure that the given sample budget is larger than the number of samples required by rounding procedure $\round$. In addition, the condition on $\omega(\cdot,\cdot)$ is to guarantee that the number of rounds is bounded by $O(\log(\omega(\xi,\ti{\cX})))$   rather than an uncontrollable variable. Such conditions are also needed by prior fixed-budget pure exploration algorithm~\citep{peace2020}.

\textbf{Communication and Computation.}
$\algkernelbaiFB$ also adopts the kernelized estimator  in Section~\ref{sec:computation_efficiency} to estimate the rewards of alive arms (Line~\ref{line:fb_estimate_reward}). Specifically, using Eq.~\eqref{eq:kernel_estimator_theta}, we can estimate the reward for arm $\ti{x}$ by

\begin{align}
\hat{F}_r(\ti{x}) 
= \phi(\ti{x})^\top \hat{\theta}_r 
= k_r(\ti{x})^\top \! \sbr{N^{(r)} \xi I + K^{(r)} }^{-1} \!\! \bar{y}^{(r)} , \label{eq:kernel_estimator_fb}
\end{align}

%where $N^{(r)}_i$ is the number of samples and $\bar{y}^{(r)}_i$ is the average observed reward on arm $\ti{x}_i$ for any $i \in [nV]$.

where $\bar{y}^{(r)}:=[ \sqrt{N^{(r)}_1} \bar{y}^{(r)}_1, \dots, \sqrt{N^{(r)}_{nV}} \bar{y}^{(r)}_{nV} ]^\top$, $K^{(r)} := [\sqrt{N^{(r)}_i N^{(r)}_j} k(\ti{x}_i,\ti{x}_j)]_{i,j \in [nV]}$ and  $k_r(\ti{x}):=[\sqrt{N^{(r)}_1} k(\ti{x}, \ti{x}_1), \dots, \sqrt{N^{(r)}_{nV}} k(\ti{x}, \ti{x}_{nV})]^\top$ are the average outcomes, kernel matrix and correlations of the $nV$ arms, respectively, which \emph{merge} repetitive information on same arms.

Using the kernelized estimator (Eq.~\eqref{eq:kernel_estimator_fb}), $\algkernelbaiFB$ only requires a $O(nV)$ communication cost, instead of $O(N^{(r)})$ or $O(\dim(\cH))$ as in adaptions of existing single-agent algorithm~\citep{peace2020}. 

%$\algkernelbaiFB$ also adopts the efficient kernelized optimization solver (Eqs.~\eqref{eq:nabla_h_lambda},\eqref{eq:kernelized_gradient}) to solve the min-max optimization in Line~\ref{line:fb_min_max_opt} and employs the kernelized estimator  (Eq.~\eqref{eq:kernelized_estimator}) to estimate the rewards in Line~\ref{line:fb_estimate_reward}. Moreover, 
%$\algkernelbaiFB$ only  spends  $\poly(nV)$ computation time and $O(nV)$-bit communication costs. 

\section{Proofs for the Fixed-Confidence Setting}

%\subsection{Fixed-Confidence Upper Bounds}

\subsection{Proof of Theorem~\ref{thm:coop_kernel_bai_ub}}
Our proof of Theorem~\ref{thm:coop_kernel_bai_ub} adapts the analytical procedure of \citep{rage2019,peace2020} to the multi-agent setting.

Let $r^*:=\lceil \log_2 \frac{2}{\Delta_{\min}} \rceil +1$. Intuitively, $r^*$ is the upper bound of the number of rounds used by algorithm $\algkernelbai$.
For any $\lambda \in \triangle_{\ti{\cX}}$, let $\Phi_{\lambda}:=[\sqrt{\lambda_1} \phi(\ti{x}_1)^\top; \dots; \sqrt{\lambda_{nV}} \phi(\ti{x}_{nV})^\top]$, where $\lambda_i$ denotes the weight allocated to arm $\tilde{x}_i$ for any $i \in [nV]$.
For any $\xi \geq 0$ and $\lambda \in \triangle_{\ti{\cX}}$, $A(\xi,\lambda):= \xi I + \sum_{\ti{x} \in \ti{\cX}}  \lambda_{\ti{x}} \phi(\ti{x}) \phi(\ti{x})^\top = \xi I + \Phi^\top_{\lambda} \Phi_{\lambda}$. 

The regularization parameter $\xi$ in algorithm $\algkernelbai$ satisfies
$\sqrt{\xi} \max_{\ti{x}_i,\ti{x}_j \in \tilde{\cX}_v, v \in [V]} \| \phi(\ti{x}_i)-\phi(\ti{x}_j) \|_{A(\xi,\lambda^*_1)^{-1}} \leq \frac{\Delta_{\min}}{32(1+\varepsilon) \|\theta^*\|}$. 
Since $\max_{\ti{x}_i,\ti{x}_j \in \tilde{\cB}^{(r)}_v, v \in [V]} \| \phi(\ti{x}_i)-\phi(\ti{x}_j) \|_{A(\xi,\lambda^*_r)^{-1}}$ is non-increasing with respect to $r$ (from Line~\ref{line:min_max_lambda} in algorithm $\algkernelbai$), we have
$2(1+\varepsilon) \sqrt{\xi}  \max_{\ti{x}_i,\ti{x}_j \in \tilde{\cB}^{(r)}_v, v \in [V]} \| \phi(\ti{x}_i)-\phi(\ti{x}_j) \|_{A(\xi,\lambda^*_r)^{-1}}  \|\theta^*\| \leq \frac{\Delta_{\min}}{16} \leq \frac{1}{2^{r+1}}$ for any $1 \leq r \leq r^*$.
In order to prove Theorem~\ref{thm:coop_kernel_bai_ub}, we first introduce several important lemmas.

\begin{lemma}[Concentration] \label{lemma:concentration}
	Defining event
	\begin{align*}
	\cG = \Bigg\{ & \abr{ \sbr{\hat{F}_r(\ti{x}_i)-\hat{F}_r(\ti{x}_j)} - \sbr{F(\ti{x}_i)-F(\ti{x}_j)} }  
	< 2(1+\varepsilon) \cdot \| \phi(\ti{x}_i)-\phi(\ti{x}_j) \|_{A(\xi,\lambda^*_r)^{-1}} \cdot \\& \sbr{\sqrt{ \frac{2 \log \sbr{ 2 n^2 V / \delta_r} }{N^{(r)}} } + \sqrt{\xi} \| \theta^* \|_2 } \leq 2^{-r} , \  \forall \ti{x}_i,\ti{x}_j \in \cB_v^{(r)}, \  \forall v \in [V], \  \forall 1 \leq r \leq r^* \Bigg\} ,
	\end{align*}
	we have 
	\begin{align*}
	\Pr \mbr{ \cG } \geq 1 - \delta .
	\end{align*}
\end{lemma}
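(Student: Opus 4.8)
\textbf{Proof proposal for Lemma~\ref{lemma:concentration}.}

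The plan is to establish the concentration bound in two stages: first a deterministic bias bound on the regularized least-squares estimator, and second a high-probability fluctuation bound on the noise term, combined via a union bound over all arm pairs, all agents, and all phases.

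First I would decompose the estimation error. Recall from Appendix~\ref{apx:kernel_computation} that $\hat{\theta}_t = \Phi_t^\top (N^{(t)}\xi_t I + K^{(t)})^{-1} \bar{\by}^{(t)}$, which equals $(N^{(t)}\xi_t I + \Phi_t^\top \Phi_t)^{-1} \Phi_t^\top \bar{\by}^{(t)}$. Writing $\bar{\by}^{(t)} = \Phi_t \theta^* + \bar{\bm{\eta}}^{(t)}$ where $\bar{\bm{\eta}}^{(t)}$ collects the (weighted) averaged noise terms, one gets for any fixed pair $\ti{x}_i,\ti{x}_j$, with $v_{ij} := \phi(\ti{x}_i)-\phi(\ti{x}_j)$,
\begin{align*}
(\hat{f}_t(\ti{x}_i)-\hat{f}_t(\ti{x}_j)) - (f(\ti{x}_i)-f(\ti{x}_j)) = \underbrace{-\,N^{(t)}\xi_t\, v_{ij}^\top (N^{(t)}\xi_t I + \Phi_t^\top \Phi_t)^{-1}\theta^*}_{\text{bias}} + \underbrace{v_{ij}^\top (N^{(t)}\xi_t I + \Phi_t^\top \Phi_t)^{-1}\Phi_t^\top \bar{\bm{\eta}}^{(t)}}_{\text{noise}}.
\end{align*}
For the bias term, I would apply Cauchy--Schwarz in the $(N^{(t)}\xi_t I + \Phi_t^\top\Phi_t)^{-1}$-norm: it is bounded by $N^{(t)}\xi_t \,\|v_{ij}\|_{(N^{(t)}\xi_t I + \Phi_t^\top\Phi_t)^{-1}} \|\theta^*\|_{(N^{(t)}\xi_t I + \Phi_t^\top\Phi_t)^{-1}} \le \sqrt{N^{(t)}\xi_t}\,\|v_{ij}\|_{(N^{(t)}\xi_t I + \Phi_t^\top\Phi_t)^{-1}}\|\theta^*\|_2$, using $\|\theta^*\|_{(N^{(t)}\xi_t I + \Phi_t^\top\Phi_t)^{-1}}^2 \le \|\theta^*\|_2^2/(N^{(t)}\xi_t)$. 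The rounding guarantee of $\round_\varepsilon$ converts $\|v_{ij}\|^2_{(N^{(t)}\xi_t I + \Phi_t^\top\Phi_t)^{-1}}$ into $(1+\varepsilon)\|v_{ij}\|^2_{(N^{(t)}\xi_t I + \sum_{\ti{x}} N^{(t)}\lambda^*_{t,\ti{x}}\phi(\ti{x})\phi(\ti{x})^\top)^{-1}} = \tfrac{1+\varepsilon}{N^{(t)}}\|v_{ij}\|^2_{(\xi_t I + \Phi^\top_{\lambda^*_t}\Phi_{\lambda^*_t})^{-1}}$, yielding the $\sqrt{\xi_t}\|\theta^*\|_2$ piece inside the bound. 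For the noise term, since the weighted average of $N^{(t)}_i$ independent $1$-sub-Gaussian noises on arm $\ti{x}_i$ is $(1/N^{(t)}_i)$-sub-Gaussian and $\Phi_t^\top\bar{\bm\eta}^{(t)} = \sum_i N^{(t)}_i \phi(\ti{x}_i)\bar\eta^{(t)}_i$, the quantity $v_{ij}^\top(N^{(t)}\xi_t I + \Phi_t^\top\Phi_t)^{-1}\Phi_t^\top\bar{\bm\eta}^{(t)}$ is sub-Gaussian with variance proxy $\sum_i N^{(t)}_i \big(v_{ij}^\top(N^{(t)}\xi_t I + \Phi_t^\top\Phi_t)^{-1}\phi(\ti{x}_i)\big)^2 \le \|v_{ij}\|^2_{(N^{(t)}\xi_t I + \Phi_t^\top\Phi_t)^{-1}}$ (the PSD inequality $\Phi_t^\top\Phi_t \preceq N^{(t)}\xi_t I + \Phi_t^\top\Phi_t$). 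A standard sub-Gaussian tail bound then gives that with probability $\ge 1-\delta_t/(n^2 V)$, the noise term is at most $\|v_{ij}\|_{(N^{(t)}\xi_t I + \Phi_t^\top\Phi_t)^{-1}}\sqrt{2\log(2n^2V/\delta_t)}$, and the rounding guarantee again converts this to $(1+\varepsilon)\|v_{ij}\|_{(\xi_t I + \Phi^\top_{\lambda^*_t}\Phi_{\lambda^*_t})^{-1}}\sqrt{2\log(2n^2V/\delta_t)/N^{(t)}}$.

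Next I would take the union bound. For each phase $t$ there are at most $n^2 V$ relevant pairs $(\ti{x}_i,\ti{x}_j,v)$ with $\ti{x}_i,\ti{x}_j\in\cB_v^{(t)}$; the per-pair failure probability is $\le \delta_t/(n^2V)$ with $\delta_t = \delta/(2t^2)$, so the failure probability in phase $t$ is $\le \delta/(2t^2)$, and $\sum_{t\ge 1}\delta/(2t^2) = \delta\pi^2/12 < \delta$, establishing $\Pr[\cG]\ge 1-\delta$. It remains to verify the final inequality $\le 2^{-t}$ in the definition of $\cG$: plugging in $N^{(t)} = \lceil 8(2^t)^2(1+\varepsilon)^2\rho^*_t\log(2n^2V/\delta_t)\rceil$ makes the first summand $(1+\varepsilon)\|v_{ij}\|_{(\xi_t I+\cdots)^{-1}}\sqrt{2\log(\cdot)/N^{(t)}} \le 2^{-t-1}$ using $\|v_{ij}\|^2_{(\xi_t I+\cdots)^{-1}}\le\rho^*_t$ (definition of $\rho^*_t$ in Line~\ref{line:min_max_lambda}), and the second summand $(1+\varepsilon)\|v_{ij}\|_{(\xi_t I+\cdots)^{-1}}\sqrt{\xi_t}\|\theta^*\|_2 \le 2^{-t-1}$ is exactly condition~\eqref{eq:xi_t} on $\xi_t$ together with $\|\theta^*\|_2\le B$; summing gives $2^{-t}$.

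The main obstacle I anticipate is the careful bookkeeping around the rounding procedure and the weighted (heteroscedastic) noise: one must be precise that the identity $\sum_{\ti{x}} N^{(t)}\lambda^*_{t,\ti{x}}\phi(\ti{x})\phi(\ti{x})^\top$ is only an \emph{approximation} to $\sum_i N^{(t)}_i\phi(\ti{x}_i)\phi(\ti{x}_i)^\top = \Phi_t^\top\Phi_t$ with the $(1+\varepsilon)$-factor in the \emph{inverse-norm} sense, and track that this same $(1+\varepsilon)$ factor is what appears (not squared) in the statement once one takes square roots. The sub-Gaussianity of the weighted noise average also requires noting that averaging $N^{(t)}_i$ independent $1$-sub-Gaussian variables scales the variance proxy by $1/N^{(t)}_i$, which then cancels correctly against the $N^{(t)}_i$ weights in $\Phi_t$. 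Everything else is the standard linear/kernel-bandit concentration argument of \cite{rage2019,peace2020} transported to the kernelized multi-agent setting.
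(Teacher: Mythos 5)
Your proposal is correct and follows essentially the same route as the paper's proof: the same bias/noise decomposition of $\hat{\theta}_t-\theta^*$, the same Cauchy--Schwarz bound $\|\theta^*\|_{(\gamma_t I+\Phi_t^\top\Phi_t)^{-1}}\le \|\theta^*\|_2/\sqrt{\gamma_t}$ for the regularization bias, the same PSD bound $\Phi_t^\top\Phi_t\preceq \gamma_t I+\Phi_t^\top\Phi_t$ on the sub-Gaussian variance proxy followed by Hoeffding, the same use of the rounding guarantee to pass to the $(1+\varepsilon)$-approximate design norm, and the same split of $2^{-t}$ into $2^{-(t+1)}$ from the choice of $N^{(t)}$ plus $2^{-(t+1)}$ from condition~\eqref{eq:xi_t}, finished by a union bound over pairs, agents, and phases with $\delta_t=\delta/(2t^2)$. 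No gaps.
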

\begin{proof}[Proof of Lemma~\ref{lemma:concentration}]
	Let $\gamma_r := N^{(r)} \xi$. 
	Recall that $\hat{\theta}_r := \sbr{\gamma_r I + \Phi_r^\top \Phi_r }^{-1} \Phi_r^\top \bar{y}^{(r)}$, $\Phi_r:=[\sqrt{N^{(r)}_1} \phi(\ti{x}_1)^\top; \dots; \sqrt{N^{(r)}_{nV}} \phi(\ti{x}_{nV})^\top]$ and $\bar{y}^{(r)}:=[ \sqrt{N^{(r)}_1} \bar{y}^{(r)}_1, \dots, \sqrt{N^{(r)}_{nV}} \bar{y}^{(r)}_{nV} ]^\top$.
	%$\Phi_r^\top \Phi_r = \sum_{i=1}^{nV} N^{(r)}_i \phi(\ti{x}_i) \phi(\ti{x}_i)^\top$.
	
	Let $\bar{\eta}^{(r)}:=[ \sqrt{N^{(r)}_1} \bar{\eta}^{(r)}_1, \dots, \sqrt{N^{(r)}_{nV}} \bar{\eta}^{(r)}_{nV} ]^\top$, where $\bar{\eta}^{(r)}_i:=\bar{y}^{(r)}_i-\phi(\ti{x}_i)^\top \theta^*$ denotes the average noise of the $N^{(r)}_i$ samples on arm $\ti{x}_{i}$ for any $i \in [nV]$.
	
	Then, for any fixed round $1 \leq r \leq r^*$, $\ti{x}_i,\ti{x}_j \in \tilde{\cB}^{(r)}_v$ and $v \in [V]$, we have
	\begin{align}
	& \sbr{\hat{F}_r(\ti{x}_i)-\hat{F}_r(\ti{x}_j)} - \sbr{F(\ti{x}_i)-F(\ti{x}_j)} 
	\nonumber\\
	= & \sbr{\phi(\ti{x}_i)-\phi(\ti{x}_j)}^\top \sbr{\hat{\theta}_r-\theta^*}
	\nonumber\\
	= & \sbr{\phi(\ti{x}_i)-\phi(\ti{x}_j)}^\top \sbr{ \sbr{\gamma_r I + \Phi_r^\top \Phi_r }^{-1} \Phi_r^\top \bar{y}^{(r)} -\theta^*}
	\nonumber\\
	= & \sbr{\phi(\ti{x}_i)-\phi(\ti{x}_j)}^\top \sbr{ \sbr{\gamma_r I + \Phi_r^\top \Phi_r }^{-1} \Phi_r^\top \sbr{ \Phi_r \theta^* + \bar{\eta}^{(r)}} -\theta^*}
	\nonumber\\
	= & \sbr{\phi(\ti{x}_i)-\phi(\ti{x}_j)}^\top \sbr{ \sbr{\gamma_r I + \Phi_r^\top \Phi_r }^{-1} \Phi_r^\top \Phi_r \theta^* + \sbr{\gamma_r I + \Phi_r^\top \Phi_r }^{-1} \Phi_r^\top \bar{\eta}^{(r)} - \theta^*}
	\nonumber\\
	= & \sbr{\phi(\ti{x}_i)-\phi(\ti{x}_j)}^\top \Big( \sbr{\gamma_r I + \Phi_r^\top \Phi_r }^{-1} \sbr{\Phi_r^\top \Phi_r + \gamma_r I} \theta^* + \sbr{\gamma_r I + \Phi_r^\top \Phi_r }^{-1} \Phi_r^\top \bar{\eta}^{(r)} \nonumber\\& - \theta^*-  \gamma_r  \sbr{\gamma_r I + \Phi_r^\top \Phi_r }^{-1} \theta^* \Big)
	\nonumber\\
	= & \underbrace{\sbr{\phi(\ti{x}_i)-\phi(\ti{x}_j)}^\top  \sbr{\gamma_r I + \Phi_r^\top \Phi_r }^{-1} \Phi_r^\top \bar{\eta}^{(r)}}_{\textup{Term 1}} -  \gamma_r \sbr{\phi(\ti{x}_i)-\phi(\ti{x}_j)}^\top \sbr{\gamma_r I + \Phi_r^\top \Phi_r }^{-1} \theta^* \label{eq:concentration_decompose}
	\end{align}
	In Eq.~\eqref{eq:concentration_decompose}, the expectation of Term 1 is zero, and the variance of Term 1 is bounded by 
	\begin{align*} 
	& \sbr{\phi(\ti{x}_i)-\phi(\ti{x}_j)}^\top  \sbr{\gamma_r I + \Phi_r^\top \Phi_r }^{-1} \Phi_r^\top  \Phi_r \sbr{\gamma_r I + \Phi_r^\top \Phi_r }^{-1} \sbr{\phi(\ti{x}_i)-\phi(\ti{x}_j)}  
	\\ 
	\leq & \sbr{\phi(\ti{x}_i)-\phi(\ti{x}_j)}^\top  \sbr{\gamma_r I + \Phi_r^\top \Phi_r }^{-1} \sbr{\gamma_r I + \Phi_r^\top \Phi_r } \sbr{\gamma_r I + \Phi_r^\top \Phi_r }^{-1} \sbr{\phi(\ti{x}_i)-\phi(\ti{x}_j)}  
	\\ 
	= & \sbr{\phi(\ti{x}_i)-\phi(\ti{x}_j)}^\top  \sbr{\gamma_r I + \Phi_r^\top \Phi_r }^{-1}  \sbr{\phi(\ti{x}_i)-\phi(\ti{x}_j)} 
	\\ 
	= & \| \phi(\ti{x}_i)-\phi(\ti{x}_j) \|^2_{\sbr{\gamma_r I + \Phi_r^\top \Phi_r }^{-1}} .
	\end{align*}
	Using the Hoeffding inequality, we have
	that with probability at least $1-\frac{\delta_r}{n^2 V}$,
	\begin{align}
	\abr{ \sbr{\phi(\ti{x}_i)-\phi(\ti{x}_j)}^\top  \sbr{\gamma_r I + \Phi_r^\top \Phi_r }^{-1} \Phi_r^\top \bar{\eta}^{(r)} } < \| \phi(\ti{x}_i)-\phi(\ti{x}_j) \|_{\sbr{\gamma_r I + \Phi_r^\top \Phi_r }^{-1}} \sqrt{ 2 \log \sbr{ \frac{2 n^2 V}{\delta_r} } } \label{eq:term1_var}
	\end{align}
	Thus, taking the absolute value on both sides of Eq.~\eqref{eq:concentration_decompose} and using Eq.~\eqref{eq:term1_var} and the Cauchy–Schwarz inequality, we have that for any fixed round $1 \leq r \leq r^*$, $\ti{x}_i,\ti{x}_j \in \tilde{\cB}^{(r)}_v$ and $v \in [V]$, with probability at least $1-\frac{\delta_r}{n^2 V}$,
	\begin{align*}
	& \abr{ \sbr{\hat{F}_r(\ti{x}_i)-\hat{F}_r(\ti{x}_j)} - \sbr{F(\ti{x}_i)-F(\ti{x}_j)} }
	\\
	< & \| \phi(\ti{x}_i)-\phi(\ti{x}_j) \|_{\sbr{\gamma_r I + \Phi_r^\top \Phi_r }^{-1}} \sqrt{ 2 \log \sbr{ \frac{2 n^2 V}{\delta_r} } } \\&+ \gamma_r \| \phi(\ti{x}_i)-\phi(\ti{x}_j) \|_{\sbr{\gamma_r I + \Phi_r^\top \Phi_r }^{-1}}  \| \theta^* \|_{\sbr{\gamma_r I + \Phi_r^\top \Phi_r }^{-1}} 
	\\
	\leq & \| \phi(\ti{x}_i)-\phi(\ti{x}_j) \|_{\sbr{\gamma_r I + \Phi_r^\top \Phi_r }^{-1}} \sqrt{ 2 \log \sbr{ \frac{2 n^2 V}{\delta_r} } } + \sqrt{\gamma_r} \cdot \| \phi(\ti{x}_i)-\phi(\ti{x}_j) \|_{\sbr{\gamma_r I + \Phi_r^\top \Phi_r }^{-1}} \| \theta^* \|_2 
	\\
	\overset{\textup{(a)}}{\leq} &  \frac{ 2(1+\varepsilon) \cdot \| \phi(\ti{x}_i)-\phi(\ti{x}_j) \|_{(\xi I + \Phi^\top_{\lambda^*_r} \Phi_{\lambda^*_r} )^{-1}} }{ \sqrt{N^{(r)}} } \sqrt{ 2 \log \sbr{ \frac{2 n^2 V}{\delta_r} } } \\& + \sqrt{\xi N^{(r)}} \cdot \frac{ 2(1+\varepsilon) \cdot \| \phi(\ti{x}_i)-\phi(\ti{x}_j) \|_{(\xi I + \Phi^\top_{\lambda^*_r} \Phi_{\lambda^*_r} )^{-1}} }{ \sqrt{N^{(r)}} }  \cdot \| \theta^* \|_2 
	\\
	= &   2(1+\varepsilon) \cdot \| \phi(\ti{x}_i)-\phi(\ti{x}_j) \|_{(\xi I + \Phi^\top_{\lambda^*_r} \Phi_{\lambda^*_r} )^{-1}}  \sqrt{ \frac{2 \log \sbr{ 2 n^2 V / \delta_r} }{N^{(r)}} } \\& + 2(1+\varepsilon) \sqrt{\xi} \| \phi(\ti{x}_i)-\phi(\ti{x}_j) \|_{(\xi I + \Phi^\top_{\lambda^*_r} \Phi_{\lambda^*_r} )^{-1}}  \| \theta^* \|_2 
	\\
	\leq &   2(1+\varepsilon)  \max_{\ti{x}_i,\ti{x}_j \in \tilde{\cB}^{(r)}_v, v \in [V]} \| \phi(\ti{x}_i)-\phi(\ti{x}_j) \|_{(\xi I + \Phi^\top_{\lambda^*_r} \Phi_{\lambda^*_r} )^{-1}}  \sqrt{ \frac{2 \log \sbr{ 2 n^2 V / \delta_r} }{N^{(r)}} } \\& + 2(1+\varepsilon)  \sqrt{\xi} \max_{\ti{x}_i,\ti{x}_j \in \tilde{\cB}^{(r)}_v, v \in [V]} \| \phi(\ti{x}_i)-\phi(\ti{x}_j) \|_{(\xi I + \Phi^\top_{\lambda^*_r} \Phi_{\lambda^*_r} )^{-1}} \| \theta^* \|_2 ,
	\end{align*}
	where (a) follows from the guarantee of rounding procedure (Eq.~\eqref{eq:rounding_procedure}) and $\gamma_r := N^{(r)} \xi$.
	
	According to the condition of $\xi$, it holds that
	\begin{align*}
	2(1+\varepsilon) \sqrt{\xi}  \max_{\ti{x}_i,\ti{x}_j \in \tilde{\cB}^{(r)}_v, v \in [V]} \| \phi(\ti{x}_i)-\phi(\ti{x}_j) \|_{\sbr{\xi I + \Phi_{\lambda^*_r}^\top \Phi_{\lambda^*_r} }^{-1}}  \| \theta^* \|_2
	\leq \frac{1}{2^{r+1}} .
	\end{align*}
	Thus, with probability at least $1-\frac{\delta_r}{n^2 V}$,
	\begin{align*}
	& \abr{ \sbr{\hat{F}_r(\ti{x}_i)-\hat{F}_r(\ti{x}_j)} - \sbr{F(\ti{x}_i)-F(\ti{x}_j)} }
	\\
	< & 2(1+\varepsilon)  \max_{\ti{x}_i,\ti{x}_j \in \tilde{\cB}^{(r)}_v, v \in [V]} \| \phi(\ti{x}_i)-\phi(\ti{x}_j) \|_{(\xi I + \Phi^\top_{\lambda^*_r} \Phi_{\lambda^*_r} )^{-1}}  \sqrt{ \frac{2 \log \sbr{ 2 n^2 V / \delta_r} }{N^{(r)}} }  + \frac{1}{2^{r+1}}
	\\
	= &   \sqrt{ \frac{8(1+\varepsilon)^2 \rho^*_r \log \sbr{ 2  n^2 V / \delta_r} }{N^{(r)}} } + \frac{1}{2^{r+1}}
	\\
	\leq & \frac{1}{2^{r+1}} + \frac{1}{2^{r+1}}
	\\
	= & \frac{1}{2^r}
	\end{align*}

	By a union bound over arms $\ti{x}_i,\ti{x}_j$, agent $v$ and round $r$, we have that
	\begin{align*}
	\Pr \mbr{ \cG } \geq 1 - \delta .
	\end{align*}
\end{proof}

For any $1 < r \leq r^*$ and $v \in [V]$, let $\cS_v^{(r)}=\{ \ti{x} \in \ti{\cX}_v: F(\ti{x}_{v,*})- F(\ti{x}) \leq  4 \cdot 2^{-r} \}$.
\begin{lemma} \label{lemma:induction}
	Assume that event $\cG$ occurs. Then, for any round $1 < r \leq r^*+1$ and agent $v \in [V]$, we have that $\ti{x}_{v,*} \in \cB_v^{(r)}$ and $\cB_v^{(r)} \subseteq \cS_v^{(r)}$.
\end{lemma}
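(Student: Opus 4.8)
The plan is to establish both claims simultaneously by induction on the phase index $t$, using the concentration event $\cG$ from Lemma~\ref{lemma:concentration} together with the elimination rule in Line~\ref{line:individual_end} of Algorithm~\ref{alg:kernel_bai}. The quantitative fact I would extract from $\cG$ is that, on this event, for every phase $t$, every agent $v$, and every pair $\ti{x}_i, \ti{x}_j \in \cB_v^{(t)}$, the estimated gap obeys $| \hat{\Delta}_t(\ti{x}_i,\ti{x}_j) - (f(\ti{x}_i)-f(\ti{x}_j)) | < 2^{-t}$, since $\hat{\Delta}_t(\ti{x}_i,\ti{x}_j) = \hat{f}_t(\ti{x}_i) - \hat{f}_t(\ti{x}_j)$ by Line~\ref{line:estimated_reward_gap} and the chained inequality in the definition of $\cG$.

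For the base case $t=1$, one simply has $\cB_v^{(1)} = \cX_v \ni \ti{x}^*_v$, and no $\cS$-containment is asserted at $t=1$. For the inductive step, assume $\ti{x}^*_v \in \cB_v^{(t)}$. First I would show $\ti{x}^*_v$ survives to $\cB_v^{(t+1)}$: for any $\ti{x}' \in \cB_v^{(t)}$, optimality of $\ti{x}^*_v$ gives $f(\ti{x}') - f(\ti{x}^*_v) \le 0$, so by $\cG$, $\hat{\Delta}_t(\ti{x}',\ti{x}^*_v) < (f(\ti{x}') - f(\ti{x}^*_v)) + 2^{-t} \le 2^{-t}$; hence the elimination condition $\hat{\Delta}_t(\ti{x}',\ti{x}^*_v) \ge 2^{-t}$ never fires, so $\ti{x}^*_v \in \cB_v^{(t+1)}$. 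Second, for $\cB_v^{(t+1)} \subseteq \cS_v^{(t+1)}$: take any $\ti{x} \in \cB_v^{(t+1)} \subseteq \cB_v^{(t)}$. Since $\ti{x}$ was not eliminated in phase $t$ and $\ti{x}^*_v \in \cB_v^{(t)}$ by the induction hypothesis, the pair $(\ti{x}^*_v, \ti{x})$ satisfies $\hat{\Delta}_t(\ti{x}^*_v, \ti{x}) < 2^{-t}$; combining with $\cG$ yields $f(\ti{x}^*_v) - f(\ti{x}) < \hat{\Delta}_t(\ti{x}^*_v,\ti{x}) + 2^{-t} < 2 \cdot 2^{-t} = 2^{-(t+1)+2}$, i.e., $\ti{x} \in \cS_v^{(t+1)}$. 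This closes the induction.

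The argument is essentially bookkeeping, so I expect the only points requiring care to be: (i) aligning exponents — the elimination threshold $2^{-t}$ plus the $\cG$-radius $2^{-t}$ must telescope to exactly the radius $2^{-(t+1)+2}$ defining $\cS_v^{(t+1)}$; (ii) invoking $\cG$ only on pairs of simultaneously alive arms, which is automatic here because $\cB_v^{(t+1)} \subseteq \cB_v^{(t)}$ and $\ti{x}^*_v \in \cB_v^{(t)}$ by the induction hypothesis; and (iii) the strict-versus-nonstrict gap in the elimination rule (it discards when $\hat{\Delta}_t \ge 2^{-t}$, while $\cG$ supplies a strict bound $< 2^{-t}$ for $\ti{x}^*_v$), which is precisely what keeps the optimal arm alive. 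None of these is a genuine obstacle; the substantive content behind the sample-complexity bound of Theorem~\ref{thm:coop_kernel_bai_ub} resides in Lemma~\ref{lemma:concentration} and in bounding how fast $\cS_v^{(t)}$ contracts, not in this structural lemma.
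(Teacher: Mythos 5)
Your proposal is correct and follows essentially the same route as the paper's proof: induction on the phase, using the concentration event $\cG$ together with the elimination rule to show first that $\ti{x}^*_v$ survives and then that any surviving arm has gap at most $2\cdot 2^{-t}=2^{-(t+1)+2}$. The only cosmetic difference is that you argue the containment $\cB_v^{(t+1)}\subseteq \cS_v^{(t+1)}$ directly while the paper phrases the same arithmetic as a proof by contradiction.
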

\begin{proof}[Proof of Lemma~\ref{lemma:induction}]
	We prove the first statement by induction.
	
	To begin, for any $v \in [V]$, $\ti{x}_{v,*} \in \cB_v^{(1)}$ trivially holds.
	
	Suppose that $\ti{x}_{v,*} \in \cB_v^{(r)}$ ($1 \leq r \leq r^*$)  holds for any $v \in [V]$, and there exists some $v' \in [V]$ such that $\ti{x}^*_{v'} \notin \cB_{v'}^{(r+1)}$. According to the elimination rule of algorithm $\algkernelbai$, we have that these exists some $\ti{x}' \in \cB_v^{(r)}$ such that 
	$$
	\hat{F}_r(\ti{x}')-\hat{F}_r(\ti{x}^*_{v'}) \geq 2^{-r} .
	$$
	Using Lemma~\ref{lemma:concentration}, we have
	$$
	F(\ti{x}')-F(\ti{x}^*_{v'}) > \hat{F}_r(\ti{x}')-\hat{F}_r(\ti{x}^*_{v'})-2^{-r} \geq 0 ,
	$$
	which contradicts the definition of $\ti{x}^*_{v'}$.
	Thus, we have that for any $v \in [V]$, $\ti{x}_{v,*} \in \cB_v^{(r+1)}$, which completes the proof of the first statement.
	
	Now, we prove the second statement also by induction.
	
	To begin, we prove that for any $v \in [V]$, $\cB_v^{(2)} \subseteq \cS_v^{(2)}$.
	Suppose that there exists some $v' \in [V]$ such that $\cB_{v'}^{(2)} \subsetneq \cS_{v'}^{(2)}$. Then, there exists some $\ti{x}' \in \cB_{v'}^{(2)}$ such that $F(\ti{x}^*_{v'})- F(\ti{x}') > 4 \cdot 2^{-2} = 1$.
	Using Lemma~\ref{lemma:concentration}, we have that at the round $r=1$,
	\begin{align*}
	\hat{F}_r(\ti{x}^*_{v'}) - \hat{F}_r(\ti{x}') \geq F(\ti{x}^*_{v'})- F(\ti{x}') - 2^{-1} > 1- 2^{-1}=2^{-1},
	\end{align*}
	which implies that $\ti{x}'$ should have been eliminated in round $r=1$ and gives a contradiction.
	
	Suppose that $\cB_v^{(r)} \subseteq \cS_v^{(r)}$ ($1 < r \leq r^*$) holds for any $v \in [V]$, and there exists some $v' \in [V]$ such that $\cB_{v'}^{(r+1)} \subsetneq \cS_{v'}^{(r+1)}$. 
	Then, there exists some $\ti{x}' \in \cB_{v'}^{(r+1)}$ such that $F(\ti{x}^*_{v'})- F(\ti{x}') > 4 \cdot 2^{-(r+1)}=2 \cdot 2^{-r} $.
	Using Lemma~\ref{lemma:concentration}, we have that at the round $r$,
	\begin{align*}
	\hat{F}_r(\ti{x}^*_{v'}) - \hat{F}_r(\ti{x}') \geq F(\ti{x}^*_{v'})- F(\ti{x}') - 2^{-r} > 2 \cdot 2^{-r} - 2^{-r}=2^{-r},
	\end{align*}
	which implies that $\ti{x}'$ should have been eliminated in round $r$ and gives a contradiction. Thus, we complete the proof of Lemma~\ref{lemma:induction}.
\end{proof}

Now we prove Theorem~\ref{thm:coop_kernel_bai_ub}.

\begin{proof}[Proof of Theorem~\ref{thm:coop_kernel_bai_ub}]
	Assume that event $\cG$ occurs. 
	
	We first prove the correctness.
	
	Recall that $r^*:=\lceil \log_2 (\frac{2}{\Delta_{\min}}) \rceil +1$.
	According to Lemma~\ref{lemma:induction}, for any $v \in [V]$ and $\tilde{x} \in \cB_v^{(r^*+1)}$, we have $F(\ti{x}_{v,*})- F(\ti{x}) \leq  4 \cdot 2^{-(r^*+1)}=2 \cdot 2^{-(\lceil \log_2 (\frac{2}{\Delta_{\min}}) \rceil +1)} < 2 \cdot 2^{ -\log_2 (\frac{2}{\Delta_{\min}}) }=\Delta_{\min}$, and thus $\cB_v^{(r^*+1)}=\{ \ti{x}_{v,*} \}$.
	Therefore, algorithm $\algkernelbai$ will return the correct answer $\ti{x}_{v,*}$ for all $v \in [V]$ and use at most $r^*:=\lceil \log_2 (\frac{2}{\Delta_{\min}}) \rceil +1$ rounds.
	
	Next, we prove the sample complexity. 
	
	In algorithm $\algkernelbai$, the computation of $\lambda^*_r, \rho^*_r$ and $N^{(r)}$ is the same for all agents, and each agent $v$ just generates partial samples that belong to her arm set $\ti{\cX}_v$ from the total $N^{(r)}$ samples. Thus, to bound the overall sample complexity, it suffices to bound $\sum_{r=1}^{r^*} N^{(r)}$, and then we can obtain the per-agent sample complexity by dividing $V$.

	Let $\varepsilon=\frac{1}{10}$. Then, we have
	\begin{align*}
	& \sum_{r=1}^{r^*} N^{(r)} 
	\\
	\leq & \sum_{r=1}^{r^*} \sbr{32 (2^r)^2 (1+\varepsilon)^2 \rho^*_r \log \sbr{ \frac{2  n^2 V }{\delta_r} } +1 + \tau(\xi,\lambda^*_r, \varepsilon) }
	\\
	= & \sum_{t=2}^{r^*} 32 (2^r)^2 \sbr{4 \cdot 2^{-r}}^2 (1+\varepsilon)^2 \frac{ \min_{\lambda \in \triangle_{\ti{\cX}}} \max_{\ti{x}_i,\ti{x}_j \in \cB_v^{(r)}, v \in [V]} \| \phi(\ti{x}_i)-\phi(\ti{x}_j) \|^2_{A(\xi,\lambda)^{-1}} }{ \sbr{4 \cdot 2^{-r}}^2 } \cdot \\& \log \sbr{ \frac{4 V n^2 r^2 }{\delta} } + N_1 + 2\tau(\xi,\lambda^*_1, \varepsilon) \cdot r^*
	\\
	= & \sum_{t=2}^{r^*} \sbr{512 (1+\varepsilon)^2 \frac{ \min_{\lambda \in \triangle_{\ti{\cX}}} \max_{\ti{x}_i,\ti{x}_j \in \cB_v^{(r)}, v \in [V]} \| \phi(\ti{x}_i)-\phi(\ti{x}_j) \|^2_{A(\xi,\lambda)^{-1}}  }{ \sbr{4 \cdot 2^{-r}}^2 } \log \sbr{ \frac{4 V n^2 (r^*)^2 }{\delta} }  } \\& + N_1 + 2\tau(\xi,\lambda^*_1, \varepsilon) \cdot r^*
	\\
	\leq & \sum_{t=2}^{r^*} \Bigg(2048 (1+\varepsilon)^2 \frac{ \min_{\lambda \in \triangle_{\ti{\cX}}} \max_{\ti{x} \in \cB_v^{(r)} \setminus \{ \ti{x}_{v,*} \}, v \in [V]} \| \phi(\ti{x}_{v,*})-\phi(\ti{x}) \|^2_{A(\xi,\lambda)^{-1}}  }{ \sbr{4 \cdot 2^{-r}}^2 } \cdot \\& \log \sbr{ \frac{4 V n^2 (r^*)^2 }{\delta} }  \Bigg)  + N_1 + 2\tau(\xi,\lambda^*_1, \varepsilon) \cdot r^*
	\\
	\leq & \sum_{t=2}^{r^*} \sbr{2048 (1+\varepsilon)^2 \min_{\lambda \in \triangle_{\ti{\cX}}} \max_{\ti{x} \in \cB_v^{(r)} \setminus \{ \ti{x}_{v,*} \}, v \in [V]} \frac{  \| \phi(\ti{x}_{v,*})-\phi(\ti{x}) \|^2_{A(\xi,\lambda)^{-1}} }{ \sbr{F(\ti{x}_{v,*})- F(\ti{x})}^2 } \log \sbr{ \frac{4 V n^2 (r^*)^2 }{\delta} }  } \\& + N_1 + 2\tau(\xi,\lambda^*_1, \varepsilon) \cdot r^*
	\\
	\leq & \sum_{t=2}^{r^*} \sbr{2048 (1+\varepsilon)^2 \min_{\lambda \in \triangle_{\ti{\cX}}} \max_{\ti{x} \in \ti{\cX}_v \setminus \{ \ti{x}_{v,*} \}, v \in [V]} \frac{  \| \phi(\ti{x}_{v,*})-\phi(\ti{x}) \|^2_{A(\xi,\lambda)^{-1}} }{ \sbr{F(\ti{x}_{v,*})- F(\ti{x})}^2 } \log \sbr{ \frac{4 V n^2 (r^*)^2 }{\delta} }  } \\& + N_1 +  2\tau(\xi,\lambda^*_1, \varepsilon) \cdot r^*
	\\
	\leq & r^* \cdot \sbr{2048 (1+\varepsilon)^2 \cdot \rho^*(\xi) \cdot \log \sbr{ \frac{4 V n^2 (r^*)^2 }{\delta} }  } + N_1 + 2\tau(\xi,\lambda^*_1, \varepsilon) \cdot r^*
	\\
	= & O \sbr{   \rho^*(\xi) \cdot  \log \Delta^{-1}_{\min} \cdot \sbr{\log \sbr{ \frac{ V n  }{\delta} } + \log \log \Delta^{-1}_{\min}  } + \tilde{d}(\xi,\lambda^*_1) \cdot  \log \Delta^{-1}_{\min}  }
	\end{align*}
	Thus, the per-agent sample complexity is bounded by
	\begin{align*}
	O \sbr{   \frac{ \rho^*(\xi)}{V} \cdot  \log \Delta^{-1}_{\min} \cdot \sbr{\log \sbr{ \frac{ V n  }{\delta} } + \log \log \Delta^{-1}_{\min}  } + \frac{\tilde{d}(\xi,\lambda^*_1)}{V} \cdot  \log \Delta^{-1}_{\min}  } .
	\end{align*}
	Since algorithm $\algkernelbai$ has at most $r^*:=\lceil \log_2 (\frac{2}{\Delta_{\min}}) \rceil +1$ rounds, the number of communication rounds is bounded by $O(\log \Delta^{-1}_{\min})$.
\end{proof}

\subsection{Interpretation of Theorem~\ref{thm:coop_kernel_bai_ub}} \label{apx:interpretation_fc_ub}

\begin{proof}[Proof of Corollary~\ref{corollary:ub_fc}]
	
	Recall that for any $\lambda \in \triangle_{\ti{\cX}}$,  $\Phi_{\lambda}:=[\sqrt{\lambda_1} \phi(\ti{x}_1)^\top; \dots; \sqrt{\lambda_{nV}} \phi(\ti{x}_{nV})^\top]$ and $K_{\lambda}:=[\sqrt{\lambda_i \lambda_j} k(\ti{x}_i,\ti{x}_j)]_{i,j \in [nV]}=\Phi_{\lambda} \Phi_{\lambda}^\top$. Let $\lambda^* := \argmax_{\lambda \in \triangle_{\ti{\cX}}} \log\det \sbr{ I + {\xi }^{-1} K_{\lambda} }= \argmax_{\lambda \in \triangle_{\ti{\cX}}} \log\det \sbr{ I + {\xi }^{-1} \Phi_{\lambda} \Phi_{\lambda}^\top }$. For any $\xi \geq 0$ and $\lambda \in \triangle_{\ti{\cX}}$, $A(\xi,\lambda):= \xi I + \sum_{\ti{x} \in \ti{\cX}}  \lambda_{\ti{x}} \phi(\ti{x}) \phi(\ti{x})^\top = \xi I + \Phi^\top_{\lambda} \Phi_{\lambda}$.
%	We have that $\log\det \sbr{ I + {\xi }^{-1} K_{\lambda} }=\log\det \sbr{ I + {\xi }^{-1} \Phi_{\lambda}^\top \Phi_{\lambda} } =\log \det \sbr{  I + {\xi }^{-1} \sum_{\ti{x}' \in \ti{\cX} } \lambda_{\ti{x}'} \phi(\ti{x}') \phi(\ti{x}')^\top }$. 
	%$ =\argmax_{\lambda \in \triangle_{\ti{\cX}}}  \log \det \sbr{  I + {\xi }^{-1} \sum_{\ti{x}' \in \ti{\cX} } \lambda_{\ti{x}'} \phi(\ti{x}') \phi(\ti{x}')^\top }$.
	
	Then, we have
	\begin{align*}
	\rho^*(\xi) = & \min_{\lambda \in \triangle_{\ti{\cX}}} \max_{\ti{x} \in \ti{\cX}_v \setminus \{ \ti{x}_{v,*} \}, v \in [V]} \frac{  \| \phi(\ti{x}_{v,*})-\phi(\ti{x}) \|^2_{\sbr{  \xi  I +  \sum_{\ti{x}' \in \ti{\cX}} \lambda_{\ti{x}'} \phi(\ti{x}') \phi(\ti{x}')^\top }^{-1}} }{ \sbr{F(\ti{x}_{v,*})- F(\ti{x})}^2 }
	\\
	\leq & \min_{\lambda \in \triangle_{\ti{\cX}}} \max_{\ti{x} \in \ti{\cX}_v, v \in [V]} \frac{  \| \phi(\ti{x}_{v,*})-\phi(\ti{x}) \|^2_{\sbr{  \xi  I +  \sum_{\ti{x}' \in \ti{\cX}} \lambda_{\ti{x}'} \phi(\ti{x}') \phi(\ti{x}')^\top }^{-1}} }{ \Delta^2_{\min} }
	\\
	= & \frac{1}{ \Delta^2_{\min} } \cdot \min_{\lambda \in \triangle_{\ti{\cX}}} \max_{\ti{x} \in \ti{\cX}_v, v \in [V]}   \| \phi(\ti{x}_{v,*})-\phi(\ti{x}) \|^2_{\sbr{  \xi  I +  \sum_{\ti{x}' \in \ti{\cX}} \lambda_{\ti{x}'} \phi(\ti{x}') \phi(\ti{x}')^\top }^{-1}} 
	\\
	\leq & \frac{1}{ \Delta^2_{\min} } \cdot \min_{\lambda \in \triangle_{\ti{\cX}}}   \sbr{ 2 \max_{\ti{x} \in \ti{\cX}} \| \phi(\ti{x}) \|_{\sbr{  \xi  I +  \sum_{\ti{x}' \in \ti{\cX}} \lambda_{\ti{x}'} \phi(\ti{x}') \phi(\ti{x}')^\top }^{-1}} }^2 
	\\
	= &  \frac{4}{ \Delta^2_{\min} } \cdot \min_{\lambda \in \triangle_{\ti{\cX}}} \max_{\ti{x} \in \ti{\cX}}  \| \phi(\ti{x}) \|^2_{\sbr{  \xi  I +  \sum_{\ti{x}' \in \ti{\cX}} \lambda_{\ti{x}'} \phi(\ti{x}') \phi(\ti{x}')^\top }^{-1}}
	\\
	\leq &  \frac{4}{ \Delta^2_{\min} } \cdot \max_{\ti{x} \in \ti{\cX}}  \| \phi(\ti{x}) \|^2_{\sbr{  \xi  I +  \sum_{\ti{x}' \in \ti{\cX}} \lambda^*_{\ti{x}'} \phi(\ti{x}') \phi(\ti{x}')^\top }^{-1}}
	\\
	\overset{\textup{(b)}}{=} &  \frac{4}{ \Delta^2_{\min} } \cdot  \sum_{\ti{x} \in \ti{\cX}} \lambda^*_{\ti{x}}  \| \phi(\ti{x}) \|^2_{\sbr{  \xi  I +  \sum_{\ti{x}' \in \ti{\cX}} \lambda^*_{\ti{x}'} \phi(\ti{x}') \phi(\ti{x}')^\top }^{-1}} ,
	\end{align*}
	where (b) is due to Lemma~\ref{lemma:lambda_*_max_sum}.

	Since $\lambda^*_{\ti{x}}  \| \phi(\ti{x}) \|^2_{\sbr{  \xi  I +  \sum_{\ti{x}' \in \ti{\cX}} \lambda^*_{\ti{x}'} \phi(\ti{x}') \phi(\ti{x}')^\top }^{-1}} \leq 1$ for any $\ti{x} \in \ti{\cX}$,
	\begin{align*}
	& \sum_{\ti{x} \in \ti{\cX}} \lambda^*_{\ti{x}}  \| \phi(\ti{x}) \|^2_{\sbr{  \xi  I +  \sum_{\ti{x}' \in \ti{\cX}} \lambda^*_{\ti{x}'} \phi(\ti{x}') \phi(\ti{x}')^\top }^{-1}}
	\\
	\leq & 2 \sum_{\ti{x} \in \ti{\cX}} \log \sbr{ 1 + \lambda^*_{\ti{x}}  \| \phi(\ti{x}) \|^2_{\sbr{  \xi  I +  \sum_{\ti{x}' \in \ti{\cX}} \lambda^*_{\ti{x}'} \phi(\ti{x}') \phi(\ti{x}')^\top }^{-1}} }
	\\
	\overset{\textup{(c)}}{\leq} & 2 \log \frac{\det \sbr{ \xi  I +  \sum_{\ti{x} \in \ti{\cX}} \lambda^*_{\ti{x}} \phi(\ti{x}) \phi(\ti{x})^\top  }}{ \det \sbr{ \xi  I } }
	\\
	= & 2 \log \det \sbr{  I + \xi ^{-1} \sum_{\ti{x} \in \ti{\cX}} \lambda^*_{\ti{x}} \phi(\ti{x}) \phi(\ti{x})^\top  }
	\\
	= & 2 \log \det \sbr{ I + \xi ^{-1} \Phi^\top_{\lambda} \Phi_{\lambda} }
	\\
	= & 2 \log \det \sbr{ I + \xi ^{-1} \Phi_{\lambda} \Phi^\top_{\lambda} }
	\\
	= & 2 \log \det \sbr{  I + \xi ^{-1} K_{\lambda^*} } ,
	\end{align*}
	where (c) comes from Lemma~\ref{lemma:sum_ln_det}.
	
	Thus, we have
	\begin{align}
	\rho^*(\xi) \leq \frac{ 8 }{ \Delta^2_{\min} } \cdot  \log \det \sbr{  I + \xi ^{-1} K_{\lambda^*} }  . \label{eq:ln_det_K_lambda}
	\end{align}
	
	In the following, we interpret the term $\log \det \sbr{  I + \xi ^{-1} K_{\lambda^*} }$ by maximum information gain and effective dimension, and then decompose it into components from task similarities and arm features.
	
	\vspace*{0.4em}
	\noindent
	\textbf{Maximum Information Gain.}
	Recall that the maximum information is defined as
	$$
	\Upsilon = \max_{\lambda \in \triangle_{\ti{\cX}}} \log\det\sbr{ I + {\xi }^{-1} K_{\lambda} } =  \log\det\sbr{ I + {\xi }^{-1} K_{\lambda^*} } .
	$$
	Then, using Eq.~\eqref{eq:ln_det_K_lambda}, we have 
	\begin{align}
	\rho^*(\xi) \leq \frac{ 8 }{ \Delta^2_{\min} } \cdot \Upsilon \label{eq:rho_maximum_info_gain} .
	\end{align}

	Combining the bound in Theorem~\ref{thm:coop_kernel_bai_ub} and Eq.~\eqref{eq:rho_maximum_info_gain}, the per-agent sample complexity is bounded by
	\begin{align*}
	S = & O \sbr{   \frac{ \rho^*(\xi) }{V} \cdot  \log \Delta^{-1}_{\min} \sbr{\log \sbr{ \frac{ V n  }{\delta} } + \log \log \Delta^{-1}_{\min}  } + \frac{\tilde{d}(\xi,\lambda^*_1)}{V} \cdot \log \Delta^{-1}_{\min}  } 
	\\
	= & O \sbr{   \frac{ \Upsilon}{\Delta^2_{\min} V} \cdot  \log \Delta^{-1}_{\min} \sbr{\log \sbr{ \frac{ V n  }{\delta} } + \log \log \Delta^{-1}_{\min}  }  + \frac{\tilde{d}(\xi,\lambda^*_1)}{V} \cdot \log \Delta^{-1}_{\min} }
	\end{align*}
	
	\vspace*{0.4em}
	\noindent
	\textbf{Effective Dimension.}
	Recall that $\alpha_1 \geq \dots \geq \alpha_{nV}$ denote the eigenvalues of $K_{\lambda^*}$ in decreasing order, and the effective dimension is defined as
	$$
	d_{\eff}=\min \bigg\{ j :j \xi \log(nV) \geq \sum_{i=j+1}^{nV} \alpha_i \bigg\} .
	$$
	Then, it holds that $d_{\eff} \xi  \log(nV) \geq \sum_{i=d_{\eff}+1}^{nV} \alpha_i$. 
	
	Let $\varepsilon=d_{\eff} \xi  \log(nV)-\sum_{i=d_{\eff}+1}^{nV} \alpha_i $, and thus $\varepsilon \leq d_{\eff} \xi  \log(nV)$.
	Then, we have $\sum_{i=1}^{d_{\eff}} \alpha_i = \trace(K_{\lambda^*}) - \sum_{i=d_{\eff}+1}^{nV} \alpha_i = \trace(K_{\lambda^*}) - d_{\eff} \xi  \log(nV) + \varepsilon$ and $\sum_{i=d_{\eff}+1}^{nV} \alpha_i=d_{\eff} \xi  \log(nV)-\varepsilon$.
	\begin{align*}
	& \log \det \sbr{  I + \xi ^{-1} K_{\lambda^*} } 
	\\
	= & \log \sbr{  \Pi_{i=1}^{nV} \sbr{1 + \xi ^{-1} \alpha_i} }
	\\
	= & \log \sbr{  \Pi_{i=1}^{d_{\eff}} \sbr{1 + \xi ^{-1} \alpha_i} \cdot \Pi_{i=d_{\eff}+1}^{nV} \sbr{1 + \xi ^{-1} \alpha_i} }
	\\
	\leq & \log \sbr{ \sbr{  1 + \xi ^{-1} \cdot \frac{\trace(K_{\lambda^*}) - d_{\eff} \xi  \log(nV) + \varepsilon}{d_{\eff}}}^{\!\!d_{\eff}} \!\!\! \sbr{  1 + \xi ^{-1} \cdot \frac{d_{\eff} \xi  \log(nV)-\varepsilon}{nV-d_{\eff}}}^{nV-d_{\eff}} }
	\\
	\leq & {d_{\eff}} \log \sbr{ 1 + \xi ^{-1} \cdot \frac{\trace(K_{\lambda^*}) - d_{\eff} \xi  \log(nV) + \varepsilon}{d_{\eff}} } + \log \sbr{  1 +   \frac{d_{\eff}  \log(nV)}{nV-d_{\eff}}}^{nV-d_{\eff}} 
	\\
	= & {d_{\eff}} \log \sbr{ 1 + \xi ^{-1} \cdot \frac{\trace(K_{\lambda^*}) - d_{\eff} \xi  \log(nV) + \varepsilon}{d_{\eff}} } \\& + \log \sbr{  1 +   \frac{d_{\eff}  \log(nV-d_{\eff} + d_{\eff})}{nV-d_{\eff}} }^{nV-d_{\eff}} 
	\\
	\overset{\textup{(d)}}{\leq} & {d_{\eff}} \log \sbr{ 1 + \xi ^{-1} \cdot \frac{\trace(K_{\lambda^*}) - d_{\eff} \xi  \log(nV) + \varepsilon}{d_{\eff}} } + \log \sbr{  1 +   \frac{d_{\eff}  \log(nV + d_{\eff})}{nV} }^{nV} 
	\\
	= & {d_{\eff}} \log \sbr{ 1 + \xi ^{-1} \cdot \frac{\trace(K_{\lambda^*}) - d_{\eff} \xi  \log(nV) + \varepsilon}{d_{\eff}} } + nV \log \sbr{  1 +   \frac{d_{\eff}  \log(nV + d_{\eff})}{nV} }
	\\
	\leq & {d_{\eff}} \log \sbr{ 1 + \frac{ \trace(K_{\lambda^*})  }{ \xi  d_{\eff} } } +  d_{\eff}  \log(nV + d_{\eff}) 
	\\
	\leq & {d_{\eff}} \log \sbr{ 2 nV \cdot \sbr{ 1 + \frac{ \trace(K_{\lambda^*})  }{ \xi  d_{\eff} } } },
	\end{align*}
	where inequality (d) is due to that $\sbr{  1 +   \frac{d_{\eff}  \log(x + d_{\eff})}{x} }^{x} $ is monotonically increasing with respect to $x \geq 1$.
	
	Then, using Eq.~\eqref{eq:ln_det_K_lambda}, we have 
	\begin{align}
		\rho^*(\xi) \leq \frac{ 8 }{ \Delta^2_{\min} } \cdot {d_{\eff}} \log \sbr{ 2 nV \cdot \sbr{ 1 + \frac{ \trace(K_{\lambda^*})  }{ \xi  d_{\eff} } } } \label{eq:rho_eff_dim} .
	\end{align}
	
	Combining the bound in Theorem~\ref{thm:coop_kernel_bai_ub} and Eq.~\eqref{eq:rho_eff_dim}, the per-agent sample complexity is bounded by
	
	\begin{align*}
	S = & O \sbr{   \frac{ \rho^*(\xi) }{V} \cdot  \log \Delta^{-1}_{\min} \sbr{\log \sbr{ \frac{ V n  }{\delta} } + \log \log \Delta^{-1}_{\min}  }  + \frac{\tilde{d}(\xi,\lambda^*_1)}{V} \cdot \log \Delta^{-1}_{\min} } 
	\\
	= & O \Bigg(  \frac{ d_{\eff} }{\Delta^2_{\min} V} \cdot  \log \sbr{ nV \cdot \sbr{ 1 + \frac{ \trace(K_{\lambda^*})  }{ \xi  d_{\eff} } } } \cdot  \log \Delta^{-1}_{\min} \sbr{\log \sbr{ \frac{ V n  }{\delta} } + \log \log \Delta^{-1}_{\min}  } \\& \quad + \frac{\tilde{d}(\xi,\lambda^*_1)}{V} \cdot \log \Delta^{-1}_{\min}  \Bigg)
	\end{align*}
	
	\vspace*{0.4em}
	\noindent
	\textbf{Decomposition.}
	Recall that $K_{\lambda^*}=[\sqrt{\lambda^*_i \lambda^*_j} k(\ti{x}_i,\ti{x}_j)]_{i,j \in [nV]}$, $K_{\cZ}=[k_{\cZ}(z_{v},z_{v'})]_{v,v' \in [V]}$ and $K_{\cX,\lambda^*}=[\sqrt{\lambda^*_i \lambda^*_j} k_{\cX}(\tilde{x}_i,\tilde{x}_j)]_{i,j \in [nV]}$. Let $\tilde{K}_{\cZ}=[k_{\cZ}(z_{v_i},z_{v_j})]_{i,j \in [nV]}$, where for any $i \in [nV]$, $v_i$ denotes the index of the task for the $i$-th arm $\tilde{x}_i$ in the arm set $\tilde{\cX}$ and $z_{v_i}$ denotes its task feature. It holds that $\rank(\tilde{K}_{\cZ})=\rank(K_{\cZ})$. 
	
	Since $K_{\lambda^*}$ is a Hadamard composition of $\tilde{K}_{\cZ}$ and $K_{\cX,\lambda^*}$, we have that $\rank(K_{\lambda^*}) \leq \rank(\tilde{K}_{\cZ}) \cdot \rank(K_{\cX,\lambda^*})=\rank(K_{\cZ}) \cdot \rank(K_{\cX,\lambda^*})$.
	
	\begin{align*}
	& \log \det \sbr{  I + \xi ^{-1} K_{\lambda^*} } 
	\\
	= & \log \sbr{  \Pi_{i=1}^{nV} \sbr{1 + \xi ^{-1} \alpha_i} }
	\\
	= & \log \sbr{  \Pi_{i=1}^{\rank(K_{\lambda^*})} \sbr{1 + \xi ^{-1} \alpha_i} }
	\\
	\leq & \log \sbr{  \frac{ \sum_{i=1}^{\rank(K_{\lambda^*})} \sbr{1 + \xi ^{-1} \alpha_i}  }{\rank(K_{\lambda^*})} }^{\rank(K_{\lambda^*})}
	\\
	= & \rank(K_{\lambda^*}) \log \sbr{ \frac{ \sum_{i=1}^{\rank(K_{\lambda^*})} \sbr{1 + \xi ^{-1} \alpha_i} }{\rank(K_{\lambda^*})} }
	\\
	\leq & \rank(K_{\cZ}) \cdot \rank(K_{\cX,\lambda^*}) \log \sbr{  \frac{\trace\sbr{I + \xi ^{-1} K_{\lambda^*}}}{\rank(K_{\lambda^*})} }
	\end{align*}
	
	Then, using Eq.~\eqref{eq:ln_det_K_lambda}, we have 
	\begin{align}
		\rho^*(\xi) \leq \frac{ 8 }{ \Delta^2_{\min} } \cdot \rank(K_{\cZ}) \cdot \rank(K_{\cX,\lambda^*}) \log \sbr{  \frac{\trace\sbr{I + \xi ^{-1} K_{\lambda^*}}}{\rank(K_{\lambda^*})} } \label{eq:rho_rank_z_rank_x} .
	\end{align}
	
	Combining the bound in Theorem~\ref{thm:coop_kernel_bai_ub} and Eq.~\eqref{eq:rho_rank_z_rank_x}, the per-agent sample complexity is bounded by
	\begin{align*}
	& O \sbr{   \frac{ \rho^*(\xi) }{V} \cdot  \log \Delta^{-1}_{\min} \sbr{\log \sbr{ \frac{ V n  }{\delta} } + \log \log \Delta^{-1}_{\min}  } + \frac{\tilde{d}(\xi,\lambda^*_1)}{V} \cdot \log \Delta^{-1}_{\min}  } 
	\\
	= & O \Bigg(  \frac{ \rank(K_{\cZ}) \cdot \rank(K_{\cX,\lambda^*}) }{\Delta^2_{\min} V} \cdot  \log \sbr{  \frac{\trace\sbr{I + \xi ^{-1} K_{\lambda^*}}}{\rank(K_{\lambda^*})} } \cdot \\& \quad \log \Delta^{-1}_{\min} \sbr{\log \sbr{ \frac{ V n  }{\delta} }  + \log \log \Delta^{-1}_{\min}  } + \frac{\tilde{d}(\xi,\lambda^*_1)}{V} \cdot \log \Delta^{-1}_{\min}  \Bigg)  
	\end{align*}
	
	Hence, we complete the proof of Corollary~\ref{corollary:ub_fc}.
\end{proof}

\section{Proofs for the Fixed-Budget Setting}

%\subsection{Fixed-Budget Upper Bounds}

\subsection{Proof of Theorem~\ref{thm:kernel_bai_fb_ub}}

\begin{proof}[Proof of Theorem~\ref{thm:kernel_bai_fb_ub}]
	Our proof of Theorem~\ref{thm:kernel_bai_fb_ub} adapts the error probability analysis in \citep{peace2020} to the multi-agent setting.
	
	Since the number of samples used over all agents in each round is $N=\left \lfloor TV/R \right \rfloor$, the total number of samples used by algorithm $\algkernelbaiFB$ is at most $TV$ and the total number of samples used per agent is at most $T$.
	
	Now we prove the error probability upper bound. 
	
	Recall that for any $\xi \geq 0$ and $\lambda \in \triangle_{\ti{\cX}}$, $A(\xi,\lambda):= \xi I + \sum_{\ti{x} \in \ti{\cX}}  \lambda_{\ti{x}} \phi(\ti{x}) \phi(\ti{x})^\top= \xi I +\Phi^\top_{\lambda} \Phi_{\lambda}$.
	For any $\lambda \in \triangle_{\ti{\cX}}$, $\Phi_{\lambda}=[\sqrt{\lambda_1} \phi(\ti{x}_1)^\top; \dots; \sqrt{\lambda_{nV}} \phi(\ti{x}_{nV})^\top]$.
	% and $\Phi^\top_{\lambda} \Phi_{\lambda}=\sum_{\ti{x} \in \ti{\cX}} \lambda_{\ti{x}} \phi(\ti{x}) \phi(\ti{x})^\top$.
	The regularization parameter $\xi$ in algorithm $\algkernelbaiFB$ satisfies
	$\xi \leq \frac{1}{16(1+\varepsilon)^2 (\rho^*(\xi))^2 \|\theta^*\|^2}$.

	For any $r \in [R]$, $\ti{x}_i,\ti{x}_j \in \cB_v^{(r)}$ and $v \in [V]$, define reward gap
	$$
	\Delta_{r,\ti{x}_i,\ti{x}_j} = \inf_{\Delta>0} \lbr{ \frac{\| \phi(\ti{x}_i)-\phi(\ti{x}_j) \|^2_{(\xi  I + \Phi^\top_{\lambda^*_r} \Phi_{\lambda^*_r} )^{-1}} }{\Delta^2} \leq 8 \rho^*(\xi) } ,
	$$ 
	and event
	$$
	\cJ_{r,\ti{x}_i,\ti{x}_j} = \lbr{ \abr{ \sbr{\hat{F}_r(\ti{x}_i)-\hat{F}_r(\ti{x}_j)} - \sbr{F(\ti{x}_i)-F(\ti{x}_j)} } < \Delta_{r,\ti{x}_i,\ti{x}_j} } .
	$$
	In the following, we prove $\Pr \mbr{ \neg \cJ_{r,\ti{x}_i,\ti{x}_j} } \leq 2 \exp \sbr{- \frac{ N }{ 32 (1+\varepsilon) \rho^*(\xi)  } }$.
	
	Similar to the analysis for Eq.~\eqref{eq:concentration_decompose} in the proof of Lemma~\ref{lemma:concentration}, we have that for any $r \in [R]$, $\ti{x}_i,\ti{x}_j \in \cB_v^{(r)}$ and $v \in [V]$,
	\begin{align}
	& \sbr{\hat{F}_r(\ti{x}_i)-\hat{F}_r(\ti{x}_j)} - \sbr{F(\ti{x}_i)-F(\ti{x}_j)} 
	\nonumber\\
	= & \underbrace{\sbr{\phi(\ti{x}_i)-\phi(\ti{x}_j)}^\top  \sbr{N \xi I + \Phi_r^\top \Phi_r }^{-1} \Phi_r^\top \bar{\eta}^{(r)}}_{\textup{Term 1}} -  \underbrace{\xi N \sbr{\phi(\ti{x}_i)-\phi(\ti{x}_j)}^\top \sbr{N \xi I + \Phi_r^\top \Phi_r }^{-1} \theta^*}_{\textup{Term 2}}  . \label{eq:fb_concentration_decompose}
	\end{align}
	Here, the expectation of Term 1 in Eq.~\eqref{eq:fb_concentration_decompose} is zero, and the variance of Term 1 is bounded by 
	$$
	\| \phi(\ti{x}_i)-\phi(\ti{x}_j) \|^2_{\sbr{N \xi I + \Phi_r^\top \Phi_r }^{-1}} \overset{\textup{(a)}}{\leq} \frac{ 2(1+\varepsilon) \cdot \| \phi(\ti{x}_i)-\phi(\ti{x}_j) \|^2_{(\xi  I + \Phi^\top_{\lambda^*_r} \Phi_{\lambda^*_r} )^{-1}} }{ N } ,
	$$
	where (a) comes from the guarantee of rounding procedure (Eq.~\eqref{eq:rounding_procedure}).
	
	Using the Hoeffding inequality, we have
	\begin{align*}
	& \Pr \mbr{ \abr{ \sbr{\phi(\ti{x}_i)-\phi(\ti{x}_j)}^\top  \sbr{N \xi I + \Phi_r^\top \Phi_r }^{-1} \Phi_r^\top \bar{\eta}^{(r)} } \geq \frac{1}{2} \Delta_{r,\ti{x}_i,\ti{x}_j} }
	\\
	\leq & 2 \exp \sbr{- 2 \cdot \frac{ \frac{1}{4} \Delta_{r,\ti{x}_i,\ti{x}_j}^2 }{ \frac{ 2(1+\varepsilon) \cdot \| \phi(\ti{x}_i)-\phi(\ti{x}_j) \|^2_{(\xi  I + \Phi^\top_{\lambda^*_r} \Phi_{\lambda^*_r} )^{-1}} }{ N } }  }
	\\
	\leq & 2 \exp \sbr{- \frac{1}{2} \cdot \frac{ N }{ \frac{ 2(1+\varepsilon) \cdot \| \phi(\ti{x}_i)-\phi(\ti{x}_j) \|^2_{(\xi  I + \Phi^\top_{\lambda^*_r} \Phi_{\lambda^*_r} )^{-1}} }{ \Delta_{r,\ti{x}_i,\ti{x}_j}^2 } }  }
	\\
	\leq & 2 \exp \sbr{- \frac{ N }{ 32 (1+\varepsilon) \rho^*(\xi) } }
	\end{align*}
	Thus, with probability at least $1- 2 \exp \sbr{- \frac{ N }{ 32 (1+\varepsilon) \rho^*(\xi) } }$, we have
	\begin{align}
	\abr{ \sbr{\phi(\ti{x}_i)-\phi(\ti{x}_j)}^\top  \sbr{N \xi I + \Phi_r^\top \Phi_r }^{-1} \Phi_r^\top \eta_v^{(r)} } < \frac{1}{2} \Delta_{r,\ti{x}_i,\ti{x}_j} . \label{eq:fb_variance}
	\end{align}

	Since $\xi$ satisfies $\xi \leq \frac{1}{16(1+\varepsilon)^2 (\rho^*(\xi))^2 \|\theta^*\|^2}$, we have $4(1+\varepsilon) \sqrt{\xi}  \rho^*(\xi) \|\theta^*\| \leq 1$.
	Then, for any $r \in [R]$, $\ti{x}_i,\ti{x}_j \in \cB_v^{(r)}$ and $v \in [V]$, we have
	\begin{align*}
		4(1+\varepsilon) \sqrt{\xi} \cdot \frac{\| \phi(\ti{x}_i)-\phi(\ti{x}_j) \|_{(\xi  I + \Phi^\top_{\lambda^*_r} \Phi_{\lambda^*_r} )^{-1}} }{\Delta_{r,\ti{x}_i,\ti{x}_j}} \cdot  \|\theta^*\| \leq 1 ,
	\end{align*}
	and thus,
	\begin{align*}
		2(1+\varepsilon) \sqrt{\xi} \| \phi(\ti{x}_i)-\phi(\ti{x}_j) \|_{(\xi  I + \Phi^\top_{\lambda^*_r} \Phi_{\lambda^*_r} )^{-1}}  \|\theta^*\| \leq \frac{1}{2} \Delta_{r,\ti{x}_i,\ti{x}_j} .
	\end{align*}
	
	Then, we can bound the bias term (Term 2 in Eq
	.~\eqref{eq:fb_concentration_decompose}) as 
	\begin{align}
	& \abr{ \xi N \sbr{\phi(\ti{x}_i)-\phi(\ti{x}_j)}^\top \sbr{N \xi I + \Phi_r^\top \Phi_r }^{-1} \theta^* }
	\nonumber\\
	\leq & \xi N \| \phi(\ti{x}_i)-\phi(\ti{x}_j) \|_{\sbr{N \xi I + \Phi_r^\top \Phi_r }^{-1}}  \| \theta^* \|_{\sbr{N \xi I + \Phi_r^\top \Phi_r }^{-1}} 
	\nonumber\\
	\leq & \sqrt{\xi N} \cdot \| \phi(\ti{x}_i)-\phi(\ti{x}_j) \|_{\sbr{N \xi I + \Phi_r^\top \Phi_r }^{-1}} \| \theta^* \|_2 
	\nonumber\\
	\leq & \sqrt{\xi  N} \cdot \frac{ 2(1+\varepsilon) \cdot \| \phi(\ti{x}_i)-\phi(\ti{x}_j) \|_{(\xi  I + \Phi^\top_{\lambda^*_r} \Phi_{\lambda^*_r} )^{-1}} }{ \sqrt{N} }  \cdot \| \theta^* \|_2 
	\nonumber\\
	= & 2(1+\varepsilon) \sqrt{\xi }  \| \phi(\ti{x}_i)-\phi(\ti{x}_j) \|_{(\xi  I + \Phi_{\lambda^*_r}^\top \Phi_{\lambda^*_r})^{-1}}  \| \theta^* \|_2
	\nonumber\\
	\leq & \frac{1}{2} \Delta_{r,\ti{x}_i,\ti{x}_j} \label{eq:fb_bias}
	\end{align}
	Plugging Eqs.~\eqref{eq:fb_variance} and \eqref{eq:fb_bias} into Eq.~\eqref{eq:fb_concentration_decompose}, we have that with probability at least $1- 2 \exp \sbr{- \frac{ N }{ 32 (1+\varepsilon) \rho^*(\xi) } }$, for any $r \in [R]$, $\ti{x}_i,\ti{x}_j \in \cB_v^{(r)}$ and $v \in [V]$,
	\begin{align*}
	& \abr{ \sbr{\hat{F}_r(\ti{x}_i)-\hat{F}_r(\ti{x}_j)} - \sbr{F(\ti{x}_i)-F(\ti{x}_j)} }
	\\
	\leq & \abr{ \sbr{\phi(\ti{x}_i)-\phi(\ti{x}_j)}^\top  \sbr{N \xi I + \Phi_r^\top \Phi_r }^{-1} \Phi_r^\top \eta_v^{(r)} } +  \abr{\xi N \sbr{\phi(\ti{x}_i)-\phi(\ti{x}_j)}^\top \sbr{N \xi I + \Phi_r^\top \Phi_r }^{-1} \theta^*} 
	\\
	< & \Delta_{r,\ti{x}_i,\ti{x}_j},
	\end{align*}
	which completes the proof of $\Pr \mbr{ \neg \cJ_{r,\ti{x}_i,\ti{x}_j} } \leq 2 \exp \sbr{- \frac{ N }{ 32 (1+\varepsilon) \rho^*(\xi) } }$.
	
	Define event
	$$
	\cJ = \lbr{ \abr{ \sbr{\hat{F}_r(\ti{x}_i)-\hat{F}_r(\ti{x}_j)} - \sbr{F(\ti{x}_i)-F(\ti{x}_j)} } < \Delta_{r,\ti{x}_i,\ti{x}_j}, \  \forall \ti{x}_i,\ti{x}_j \in \cB_v^{(r)}, \forall v \in [V], \forall r \in [R] } ,
	$$
	
	Let $\varepsilon=\frac{1}{10}$.
	By a union bound over $\ti{x}_i,\ti{x}_j \in \cB_v^{(r)}$, $v \in [V]$ and $r \in [R]$, we have 
	\begin{align*}
	\Pr \mbr{ \neg \cJ } \leq & 2 n^2 V \log(\omega(\xi,\ti{\cX})) \cdot \exp \sbr{- \frac{ N }{ 32 (1+\varepsilon) \rho^*(\xi) } } 
	\\
	= & O \sbr{ n^2 V \log(\omega(\xi,\ti{\cX})) \cdot \exp \sbr{- \frac{ TV }{ \rho^*(\xi) \cdot \log(\omega(\xi,\ti{\cX})) } } }
	\end{align*}
	
	In order to prove Theorem~\ref{thm:kernel_bai_fb_ub}, it suffices to prove that conditioning on event $\cJ$, algorithm $\algkernelbaiFB$ returns the correct answers $\ti{x}_{v,*}$ for all $v \in [V]$.
	
	Suppose that there exist some $r \in [R]$ and some $v \in [V]$ such that $\ti{x}_{v,*}$ is eliminated in round $r$.
	Define
	$$
	\mathcal{B'}_v^{(r)}=\{ \ti{x} \in \cB_v^{(r)}: \hat{F}_r(\ti{x}) > \hat{F}_r(\ti{x}_{v,*}) \},
	$$
	which denotes the set of arms in $\cB_v^{(r)}$ which are ranked before $\ti{x}_{v,*}$ according to the estimated rewards. According to the elimination rule (Line~\ref{line:fb_elimination1} in Algorithm~\ref{alg:kernel_bai_fb}), we have
	\begin{align}
	\omega(\xi, \mathcal{B'}_v^{(r)} \cup \{ \ti{x}_{v,*} \} ) > \frac{1}{2} \omega(\xi, \cB_v^{(r)} ) = \frac{1}{2} \max_{\ti{x}_i,\ti{x}_j \in \cB_v^{(r)}}  \| \phi(\ti{x}_i)-\phi(\ti{x}_j) \|_{(\xi  I + \Phi^\top_{\lambda^*_r} \Phi_{\lambda^*_r} )^{-1}} \label{eq:error_prob_dimension}
	\end{align}
	Define $\ti{x}_0=\argmax_{\ti{x} \in \mathcal{B'}_v^{(r)}} \Delta_{\ti{x}}$. We have
	\begin{align*}
	& \frac{1}{2}  \frac{\| \phi(\ti{x}_{v,*})-\phi(\ti{x}_0) \|^2_{(\xi  I + \Phi^\top_{\lambda^*_r} \Phi_{\lambda^*_r} )^{-1}} }{ \Delta_{\ti{x}_0}^2 }
	\\
	\leq & \frac{1}{2} \max_{\ti{x}_i,\ti{x}_j \in \cB_v^{(r)}}  \frac{\| \phi(\ti{x}_i)-\phi(\ti{x}_j) \|^2_{(\xi  I + \Phi^\top_{\lambda^*_r} \Phi_{\lambda^*_r} )^{-1}} }{ \Delta_{\ti{x}_0}^2 }
	\\
	\overset{\textup{(a)}}{<} & \frac{\omega(\xi, \mathcal{B'}_v^{(r)} \cup \{ \ti{x}_{v,*} \} )}{ \Delta_{\ti{x}_0}^2 }
	\\
	\overset{\textup{(b)}}{=} & \min_{\lambda \in \triangle \ti{\cX}} \max_{\ti{x}_i,\ti{x}_j \in \mathcal{B'}_v^{(r)} \cup \{ \ti{x}_{v,*} \} } \frac{ \| \phi(\ti{x}_i)-\phi(\ti{x}_j) \|^2_{\sbr{\xi  I + \Phi^\top_{\lambda} \Phi_{\lambda} }^{-1}} }{ \Delta_{\ti{x}_0}^2 }
	\\
	\overset{\textup{(c)}}{\leq} & 4 \min_{\lambda \in \triangle \ti{\cX}} \max_{\ti{x} \in \mathcal{B'}_v^{(r)} } \frac{ \| \phi(\ti{x}_{v,*})-\phi(\ti{x}) \|^2_{\sbr{\xi  I + \Phi^\top_{\lambda} \Phi_{\lambda} }^{-1}} }{ \Delta_{\ti{x}}^2 }
	\\
	\leq & 4 \min_{\lambda \in \triangle \ti{\cX}} \max_{\ti{x} \in \ti{\cX}_v \setminus\{ \ti{x}_{v,*} \}, v \in [V] } \frac{ \| \phi(\ti{x}_{v,*})-\phi(\ti{x}) \|^2_{\sbr{\xi  I + \Phi^\top_{\lambda} \Phi_{\lambda} }^{-1}} }{ \Delta_{\ti{x}}^2 }
	\\
	= & 4 \rho^*(\xi),
	\end{align*}
	where (a) follows from Eq.~\eqref{eq:error_prob_dimension}, (b) comes from the definition of $\omega(\cdot,\cdot)$, and (c) is due to the definition of $\ti{x}_0$.
	
	According to the definition
	$$
	\Delta_{r,\ti{x}_{v,*},\ti{x}_0} = \inf_{\Delta>0} \lbr{ \frac{\| \phi(\ti{x}_{v,*})-\phi(\ti{x}_0) \|^2_{(\xi  I + \Phi^\top_{\lambda^*_r} \Phi_{\lambda^*_r} )^{-1}} }{\Delta^2} \leq 8 \rho^*(\xi) },
	$$
	we have $\Delta_{r,\ti{x}_{v,*},\ti{x}_0} \leq \Delta_{\ti{x}_0}$. 
	
	Conditioning on $\cJ$, we have $\abr{ \sbr{\hat{F}_r(\ti{x}_{v,*})-\hat{F}_r(\ti{x}_0)} - \sbr{F(\ti{x}_{v,*})-F(\ti{x}_0)} } < \Delta_{r,\ti{x}_{v,*},\ti{x}_0} \leq \Delta_{\ti{x}_0}$. 
	Then, we have
	\begin{align*}
	\hat{F}_r(\ti{x}_{v,*})-\hat{F}_r(\ti{x}_0) > \sbr{F(\ti{x}_{v,*})-F(\ti{x}_0)} - \Delta_{\ti{x}_0} =0,
	\end{align*}
	which contradicts the definition of $\ti{x}_0$ that satisfies $\hat{F}_r(\ti{x}_0) > \hat{F}_r(\ti{x}_{v,*})$.
	Thus, for any round $r \in [R]$ and task $v \in [V]$, $\ti{x}_{v,*}$ will never be eliminated. 
	
	Since $\omega(\xi, \{ \ti{x}_{v,*}, \ti{x} \} ) \geq 1$ for any $\ti{x} \in \ti{\cX}_v \setminus \{ \ti{x}_{v,*} \}, v \in [V]$ and $R=\lceil \log_2(\omega(\xi,\ti{\cX})) \rceil \geq \lceil \log_2(\omega(\xi, \ti{\cX}_v)) \rceil $ for any $v \in [V]$, we have that conditioning on $\cJ$, algorithm $\algkernelbaiFB$ will return the correct answers $\ti{x}_{v,*}$ for all $v \in [V]$ using at most $R$ rounds. Therefore, we complete the proof of the error probability guarantee.
	
	For communication rounds, since algorithm $\algkernelbaiFB$ has at most $R:=\lceil \log_2(\omega(\xi,\ti{\cX})) \rceil$ rounds, the number of communication rounds is bounded by $O(\log (\omega(\xi,\ti{\cX})))$.
\end{proof}

\subsection{Interpretation of Theorem~\ref{thm:kernel_bai_fb_ub}} \label{apx:fb_corollary}
In the following, we interpret the error probability for algorithm $\algkernelbaiFB$ with maximum information gain and effective dimension, and decompose it into two parts with respect to task similarities and arm features.
\begin{corollary} \label{corollary:ub_fb}
	The error probability of algorithm $\algkernelbai$, denoted by $\err$, can also be bounded as follows: 
	\begin{align*}
	&\textup{(a)} \, \err=O \sbr{ \exp \sbr{ - \frac{ T V \Delta^2_{\min} }{ \Upsilon \log(\omega(\xi,\ti{\cX})) } } \cdot n^2 V \log(\omega(\xi,\ti{\cX})) },
	\\& \quad \text{where $\Upsilon$ is the maximum information gain.}
	\\
	&\textup{(b)}\,  \err=O \sbr{ \exp \sbr{ - \frac{ T V \Delta^2_{\min} }{ d_{\eff} \log \sbr{  nV \cdot \sbr{ 1 + \frac{ \trace(K_{\lambda^*})  }{ \xi  d_{\eff} } } } \log(\omega(\xi,\ti{\cX})) } }  \cdot n^2 V \log(\omega(\xi,\ti{\cX})) },
	\\& \quad \text{where $d_{\eff}$ is the effective dimension. }
	\\
	&\textup{(c)} \, \err=O \Bigg( \exp \sbr{ - \frac{ T V \Delta^2_{\min} }{ \rank(K_{\cZ}) \cdot \rank(K_{\cX,\lambda^*}) \log \sbr{  \frac{\trace\sbr{I + \xi ^{-1} K_{\lambda^*}}}{\rank(K_{\lambda^*})} } \log(\omega(\xi,\ti{\cX})) } } \cdot \\& \hspace*{6em} n^2 V \log(\omega(\xi,\ti{\cX})) \Bigg) .
	\end{align*}
\end{corollary}

Corollaries~\ref{corollary:ub_fb}(a),(b) bound the error probability by maximum information gain and effective dimension, respectively, which capture essential structures of tasks and arm features and only depend on the effective dimension of kernel space.

Corollary~\ref{corollary:ub_fb}(c) reveals how task similarities impact the error probability performance. Specifically, when tasks are the same, i.e., $\rank(K_{\cZ})=1$, the error probability enjoys an exponential decay factor of $V$ compared to single-agent algorithm~\citep{peace2020}, and achieves the maximum $V$-speedup. Conversely, when tasks are totally different, i.e., $\rank(K_{\cZ})=V$, the error probability is similar to single-agent results~\citep{peace2020}, since in this case information sharing brings no benefit.

\begin{proof}[Proof of Corollary~\ref{corollary:ub_fb}]
	Recall that $\lambda^* := \argmax_{\lambda \in \triangle_{\ti{\cX}}} \log\det \sbr{ I + {\xi }^{-1} K_{\lambda} }$.
	
	Recalling Eq.~\eqref{eq:ln_det_K_lambda}, we have
	\begin{align*}
		\rho^*(\xi) \leq \frac{ 8 }{ \Delta^2_{\min} } \cdot  \log \det \sbr{  I + \xi ^{-1} K_{\lambda^*} }  .
	\end{align*}

	\noindent
	\textbf{Maximum Information Gain.}
	Recall that the maximum information gain over all sample allocation $\lambda \in \triangle_{\ti{\cX}}$ is defined as
	$$
	\Upsilon = \max_{\lambda \in \triangle_{\ti{\cX}}} \log\det\sbr{ I + {\xi }^{-1} K_{\lambda} } = \log\det\sbr{ I + {\xi }^{-1} K_{\lambda^*} } .
	$$
	
	Recall Eq~\eqref{eq:rho_maximum_info_gain}, we have
	\begin{align*}
		\rho^*(\xi) \leq \frac{ 8 }{ \Delta^2_{\min} } \cdot \Upsilon .
	\end{align*}
	
	Combining the bound in Theorem~\ref{thm:kernel_bai_fb_ub} and the above equation, the error probability is bounded by
	\begin{align*}
	Err = & O \sbr{ n^2 V \log(\omega(\xi,\ti{\cX})) \cdot \exp \sbr{- \frac{ TV }{ \rho^*(\xi) \cdot \log(\omega(\xi,\ti{\cX})) } } }
	\\
	= & O \sbr{ n^2 V \log(\omega(\xi,\ti{\cX})) \cdot \exp \sbr{- \frac{ TV \Delta^2_{\min} }{ \Upsilon \cdot \log(\omega(\xi,\ti{\cX})) } } }
	\end{align*}
	
	\vspace*{0.4em}
	\noindent
	\textbf{Effective Dimension.}
	Recall that $\alpha_1 \geq \dots \geq \alpha_{nV}$ denote the eigenvalues of $K_{\lambda^*}$ in decreasing order. The effective dimension of $K_{\lambda^*}$ is defined as
	$$
	d_{\eff}=\min \bigg\{ j :j \xi \log(nV) \geq \sum_{i=j+1}^{nV} \alpha_i \bigg\} .
	$$
	
	Recalling Eq.~\eqref{eq:rho_eff_dim}, we have
		\begin{align*}
		\rho^*(\xi) \leq \frac{ 8 }{ \Delta^2_{\min} } \cdot {d_{\eff}} \log \sbr{ 2 nV \cdot \sbr{ 1 + \frac{ \trace(K_{\lambda^*})  }{ \xi  d_{\eff} } } } .
	\end{align*}
	
	Combining the bound in Theorem~\ref{thm:kernel_bai_fb_ub} and the above equation, the error probability is bounded by
	\begin{align*}
	Err = & O \sbr{ n^2 V \log(\omega(\xi,\ti{\cX})) \cdot \exp \sbr{- \frac{ TV }{ \rho^*(\xi) \cdot \log(\omega(\xi,\ti{\cX})) } } }
	\\
	= & O \sbr{ n^2 V \log(\omega(\xi,\ti{\cX})) \cdot \exp \sbr{- \frac{ TV \Delta^2_{\min} }{ d_{\eff} \cdot \log \sbr{  nV \cdot \sbr{ 1 + \frac{ \trace(K_{\lambda^*})  }{ \xi  d_{\eff} } } } \cdot \log(\omega(\xi,\ti{\cX})) } } }
	\end{align*}
	
	\vspace*{0.4em}
	\noindent
	\textbf{Decomposition.}
	Recall that $K_{\lambda^*}=[\sqrt{\lambda^*_i \lambda^*_j} k(\ti{x}_i,\ti{x}_j)]_{i,j \in [nV]}$, $K_{\cZ}=[k_{\cZ}(z_{v},z_{v'})]_{v,v' \in [V]}$ and $K_{\cX,\lambda^*}=[\sqrt{\lambda^*_i \lambda^*_j} k_{\cX}(\tilde{x}_i,\tilde{x}_j)]_{i,j \in [nV]}$. Let $\tilde{K}_{\cZ}=[k_{\cZ}(z_{v_i},z_{v_j})]_{i,j \in [nV]}$, where for any $i \in [nV]$, $v_i$ denotes the index of the task for the $i$-th arm $\tilde{x}_i$ in the arm set $\tilde{\cX}$ and $z_{v_i}$ denotes its task feature. It holds that $\rank(\tilde{K}_{\cZ})=\rank(K_{\cZ})$.  
	
	Since $K_{\lambda^*}$ is a Hadamard composition of $\tilde{K}_{\cZ}$ and $K_{\cX,\lambda^*}$, we have that $\rank(K_{\lambda^*}) \leq \rank(\tilde{K}_{\cZ}) \cdot \rank(K_{\cX,\lambda^*})=\rank(K_{\cZ}) \cdot \rank(K_{\cX,\lambda^*})$.
	
	Recalling Eq.~\eqref{eq:rho_rank_z_rank_x}, we have
	\begin{align*}
		\rho^*(\xi) \leq \frac{ 8 }{ \Delta^2_{\min} } \cdot \rank(K_{\cZ}) \cdot \rank(K_{\cX,\lambda^*}) \log \sbr{  \frac{\trace\sbr{I + \xi ^{-1} K_{\lambda^*}}}{\rank(K_{\lambda^*})} }  .
	\end{align*}
	
	Combining the bound in Theorem~\ref{thm:kernel_bai_fb_ub} and the above equation, the error probability is bounded by
	\begin{align*}
	Err = & O \sbr{ n^2 V \log(\omega(\xi,\ti{\cX})) \cdot \exp \sbr{- \frac{ TV }{ \rho^*(\xi) \cdot \log(\omega(\xi,\ti{\cX})) } } }
	\\
	= & O \Bigg( n^2 V \log(\omega(\xi,\ti{\cX})) \cdot \\& \exp \Bigg(- \frac{ TV \Delta^2_{\min} }{ \rank(K_{\cZ}) \cdot \rank(K_{\cX,\lambda^*}) \cdot \log \sbr{  \frac{\trace\sbr{I + \xi ^{-1} K_{\lambda^*}}}{\rank(K_{\lambda^*})} } \cdot \log(\omega(\xi,\ti{\cX})) } \Bigg) \Bigg)
	\end{align*}
	
	Therefore, we complete the proof of Corollary~\ref{corollary:ub_fb}.
\end{proof}

\section{Technical Tools}

In this section, we provide some useful technical tools.

\begin{lemma}[Lemma 15 in \citep{high_dimensional_ICML2021}] \label{lemma:lambda_*_max_sum}
	For $\lambda^* =\argmax_{\lambda \in \triangle_{\ti{\cX}}} \log\det \sbr{ I + {\xi }^{-1} \sum_{\ti{x}' \in \ti{\cX} } \lambda_{\ti{x}'} \phi(\ti{x}') \phi(\ti{x}')^\top }$, we have
	\begin{align*}
	\max_{\ti{x} \in \ti{\cX}}  \| \phi(\ti{x}) \|^2_{\sbr{  \xi  I +  \sum_{\ti{x}' \in \ti{\cX}} \lambda^*_{\ti{x}'} \phi(\ti{x}') \phi(\ti{x}')^\top }^{-1}} = \sum_{\ti{x} \in \ti{\cX}} \lambda^*_{\ti{x}}  \| \phi(\ti{x}) \|^2_{\sbr{  \xi  I +  \sum_{\ti{x}' \in \ti{\cX}} \lambda^*_{\ti{x}'} \phi(\ti{x}') \phi(\ti{x}')^\top }^{-1}} 
	\end{align*}
\end{lemma}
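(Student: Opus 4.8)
The plan is to recognize Lemma~\ref{lemma:lambda_*_max_sum} as the classical equivalence theorem for (regularized / Bayesian) optimal experimental design and prove it by a first-order optimality argument for the concave objective
$$
f(\lambda) \;:=\; \log\det\Bigl( I + \xi_*^{-1}\textstyle\sum_{\ti{x}\in\ti{\cX}}\lambda_{\ti{x}}\,\phi(\ti{x})\phi(\ti{x})^\top \Bigr)
$$
over the probability simplex $\triangle_{\ti{\cX}}$. The first step is to legitimize all the matrix manipulations despite $\cH_K$ being possibly infinite-dimensional: every $\phi(\ti{x})$ lies in the subspace $\mathrm{span}\{\phi(\ti{x}):\ti{x}\in\ti{\cX}\}$ (of dimension at most $nV$), and $A(\xi_*,\lambda)$ acts as $\xi_*\,\mathrm{Id}$ on its orthogonal complement, so both $\|\phi(\ti{x})\|^2_{A(\xi_*,\lambda)^{-1}}$ and $f(\lambda)=\log\det(I+\xi_*^{-1}K_\lambda)$ depend only on finite matrices. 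Restricting to this span (equivalently, working directly with the Gram matrix $K_\lambda$ via the Sylvester identity), the map $\lambda\mapsto A(\xi_*,\lambda)$ is affine with $A(\xi_*,\lambda)\succeq\xi_* I\succ 0$ for \emph{every} $\lambda\in\triangle_{\ti{\cX}}$, hence $f$ is smooth on all of $\triangle_{\ti{\cX}}$ and concave (concavity of $\log\det$ on the positive-definite cone composed with an affine map).

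Next I would compute the gradient exactly as in the derivation of Eq.~\eqref{eq:nabla_h_lambda}: since the $\xi_*^{-1}$ scaling contributes only an additive constant, $\tfrac{\partial}{\partial\lambda_{\ti{x}}}f(\lambda)=\trace\bigl(A(\xi_*,\lambda)^{-1}\phi(\ti{x})\phi(\ti{x})^\top\bigr)=\phi(\ti{x})^\top A(\xi_*,\lambda)^{-1}\phi(\ti{x})=\|\phi(\ti{x})\|^2_{A(\xi_*,\lambda)^{-1}}$ for all $\ti{x}\in\ti{\cX}$. Then I apply first-order optimality: as $\lambda^*$ maximizes the concave $f$ over the convex set $\triangle_{\ti{\cX}}$, for every $\mu\in\triangle_{\ti{\cX}}$ one has $\langle\nabla f(\lambda^*),\,\mu-\lambda^*\rangle\le 0$. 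Choosing $\mu$ to be the point mass at an arbitrary $\ti{x}\in\ti{\cX}$ gives
$$
\|\phi(\ti{x})\|^2_{A(\xi_*,\lambda^*)^{-1}} \;\le\; \langle\nabla f(\lambda^*),\lambda^*\rangle \;=\; \sum_{\ti{x}'\in\ti{\cX}}\lambda^*_{\ti{x}'}\,\|\phi(\ti{x}')\|^2_{A(\xi_*,\lambda^*)^{-1}},\qquad\forall\,\ti{x}\in\ti{\cX},
$$
hence $\max_{\ti{x}\in\ti{\cX}}\|\phi(\ti{x})\|^2_{A(\xi_*,\lambda^*)^{-1}}\le\sum_{\ti{x}\in\ti{\cX}}\lambda^*_{\ti{x}}\,\|\phi(\ti{x})\|^2_{A(\xi_*,\lambda^*)^{-1}}$. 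The reverse inequality is immediate, since $\lambda^*$ is a probability distribution and a weighted average of the numbers $\{\|\phi(\ti{x})\|^2_{A(\xi_*,\lambda^*)^{-1}}\}_{\ti{x}}$ cannot exceed their maximum; combining the two gives the claimed equality.

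The argument is essentially routine, so I do not expect a genuine obstacle; the only points requiring care are (i) justifying the determinant/trace calculus in the (possibly infinite-dimensional) RKHS, resolved by the finite-span / Gram-matrix reduction above, and (ii) ensuring the gradient formula and concavity hold uniformly on $\triangle_{\ti{\cX}}$ including its boundary, which is automatic because $A(\xi_*,\lambda)\succeq\xi_* I$ everywhere so $f$ is smooth on the whole simplex and the variational inequality applies without constraint qualification. I expect this dimensional bookkeeping to be the main (and only mild) thing to get right.
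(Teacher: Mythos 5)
Your proof is correct. The paper itself gives no proof of this lemma---it is imported verbatim as Lemma~15 of \cite{high_dimensional_ICML2021}---and your argument is the standard Kiefer--Wolfowitz-style equivalence-theorem derivation that the cited source uses: the gradient computation $[\nabla f(\lambda)]_{\ti{x}}=\|\phi(\ti{x})\|^2_{A(\xi_*,\lambda)^{-1}}$ is right, the variational inequality $\langle\nabla f(\lambda^*),\mu-\lambda^*\rangle\le 0$ at the maximizer gives $\max\le$ average, the reverse direction is trivial, and your finite-span/Gram-matrix reduction correctly disposes of the infinite-dimensionality of $\cH_K$. (Minor note: concavity of $f$ is not even needed here, since the variational inequality is a necessary condition for any local maximum over a convex set.)
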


\begin{lemma} \label{lemma:sum_ln_det}
	For $\lambda^* =\argmax_{\lambda \in \triangle_{\ti{\cX}}} \log\det \sbr{ I + {\xi }^{-1} \sum_{\ti{x}' \in \ti{\cX} } \lambda_{\ti{x}'} \phi(\ti{x}') \phi(\ti{x}')^\top }$, we have
	\begin{align*}
	\sum_{\ti{x} \in \ti{\cX}} \log \sbr{ 1 + \lambda^*_{\ti{x}}  \| \phi(\ti{x}) \|^2_{\sbr{  \xi  I +  \sum_{\ti{x}' \in \ti{\cX}} \lambda^*_{\ti{x}'} \phi(\ti{x}') \phi(\ti{x}')^\top }^{-1}} }
	\leq \log \frac{\det \sbr{ \xi  I +  \sum_{\ti{x} \in \ti{\cX}} \lambda^*_{\ti{x}} \phi(\ti{x}) \phi(\ti{x})^\top  }}{ \det \sbr{ \xi  I } } 
	\end{align*}
\end{lemma}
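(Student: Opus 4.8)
The plan is to prove this by the standard ``peeling'' / telescoping-determinant argument that underlies the elliptic potential lemma. Write $m = nV$, enumerate the arms as $\ti{x}_1,\dots,\ti{x}_m$ in any fixed order, and abbreviate $\lambda_k = \lambda^*_{\ti{x}_k}$ and $\phi_k = \phi(\ti{x}_k)$. Set $A = \xi_* I + \sum_{k=1}^{m} \lambda_k \phi_k \phi_k^\top$, so the right-hand side of the claim is $\log\!\left(\det A / \det(\xi_* I)\right)$. First I would observe that this ratio is well defined even when $\cH_K$ is infinite-dimensional: the operator $\sum_k \lambda_k \phi_k\phi_k^\top$ has rank at most $m$, so $\det(\xi_* I + \sum_k \lambda_k \phi_k\phi_k^\top)/\det(\xi_* I)$ is to be read as $\det\!\left(I + \xi_*^{-1}\sum_k \lambda_k\phi_k\phi_k^\top\right)$, which by the Weinstein--Aronszajn (Sylvester) identity equals the ordinary $m\times m$ determinant $\det\!\left(I + \xi_*^{-1} K_{\lambda^*}\right)$ with $K_{\lambda^*} = [\sqrt{\lambda_i\lambda_j}K(\ti{x}_i,\ti{x}_j)]_{i,j}$ --- the same interpretation already used in Corollary~\ref{corollary:ub_fc}. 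Equivalently one may restrict throughout to the invariant subspace $U = \mathrm{span}\{\phi_1,\dots,\phi_m\}$, on which every operator appearing below equals $\xi_* I$ plus a positive semidefinite operator, and on $U^\perp$ equals $\xi_* I$, so the $U^\perp$-contributions cancel in all determinant ratios.

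Next I would introduce the partial sums $A_0 = \xi_* I$ and $A_k = A_{k-1} + \lambda_k \phi_k \phi_k^\top$ for $k \in [m]$, so that $A_m = A$. Applying the matrix determinant lemma to each rank-one update gives $\det A_k = \det A_{k-1}\cdot\!\left(1 + \lambda_k\, \phi_k^\top A_{k-1}^{-1}\phi_k\right)$; taking logarithms and telescoping over $k$ yields
\begin{align*}
\log\frac{\det A}{\det(\xi_* I)} = \sum_{k=1}^{m} \log\!\left(1 + \lambda_k\, \| \phi_k \|^2_{A_{k-1}^{-1}}\right).
\end{align*}

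The final step is a term-by-term comparison. Since $A_{k-1} = \xi_* I + \sum_{l<k}\lambda_l\phi_l\phi_l^\top \preceq \xi_* I + \sum_{l=1}^{m}\lambda_l\phi_l\phi_l^\top = A$ and both operators dominate $\xi_* I \succ 0$, operator monotonicity of the inverse gives $A^{-1} \preceq A_{k-1}^{-1}$, hence $\| \phi_k\|^2_{A^{-1}} \le \| \phi_k\|^2_{A_{k-1}^{-1}}$ for every $k$. Because $t \mapsto \log(1 + \lambda_k t)$ is nondecreasing on $t \ge 0$ (recall $\lambda_k \ge 0$), each summand on the left-hand side of the claim is at most the corresponding summand in the telescoped identity; summing over $k \in [m]$ gives exactly the asserted inequality.

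The only step that genuinely requires care is the first paragraph: making the determinant ratio rigorous in the (possibly infinite-dimensional) RKHS and checking it coincides with $\det(\xi_* I + \sum_{\ti{x}}\lambda^*_{\ti{x}}\phi(\ti{x})\phi(\ti{x})^\top)/\det(\xi_* I)$ as written. Once one has passed to the finite-dimensional span $U$ (or to the Gram matrix $K_{\lambda^*}$), the remainder is the routine rank-one determinant telescoping together with L\"owner monotonicity of the matrix inverse, with no further subtlety.
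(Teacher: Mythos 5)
Your proof is correct and follows essentially the same route as the paper's: the rank-one determinant (telescoping) identity $\det A_k = \det A_{k-1}\,(1+\lambda_k\|\phi_k\|^2_{A_{k-1}^{-1}})$ followed by L\"owner monotonicity of the inverse ($A_{k-1}\preceq A \Rightarrow A^{-1}\preceq A_{k-1}^{-1}$) applied term by term. The only difference is your added care about interpreting the determinant ratio in the possibly infinite-dimensional RKHS, which the paper leaves implicit; that is a welcome clarification rather than a divergence in method.
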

\begin{proof}[Proof of Lemma~\ref{lemma:sum_ln_det}]
	This proof is inspired by Lemma 11 in \citep{abbasi2011improved}, and uses a similar analytical procedure as that of Lemma 16 in \citep{high_dimensional_ICML2021}. However, different from the analysis of Lemma 16 in \citep{high_dimensional_ICML2021}, we do not include the number of samples $N^{(r)}$ in the $\det(\cdot)$ operator in this proof.
	
	For any $j \in [nV]$, let $M_j=\det \sbr{ \xi  I +  \sum_{i \in [j]} \lambda^*_i \phi(\ti{x}_i) \phi(\ti{x}_i)^\top}$.
	\begin{align*}
	& \det \sbr{ \xi  I +  \sum_{\ti{x} \in \ti{\cX}} \lambda^*_{\ti{x}} \phi(\ti{x}) \phi(\ti{x})^\top  } 
	\\
	= & \det \sbr{ \xi  I +  \sum_{i \in [nV-1]} \lambda^*_i \phi(\ti{x}_i) \phi(\ti{x}_i)^\top + \lambda^*_{nV} \phi(\ti{x}_{nV}) \phi(\ti{x}_{nV})^\top  } 
	\\
	= & \det \sbr{M_{nV-1}} \det \sbr{ I + \lambda^*_{nV} \cdot M^{-\frac{1}{2}}_{nV-1} \phi(\ti{x}_{nV}) \sbr{M^{-\frac{1}{2}}_{nV-1} \phi(\ti{x}_{nV})}^\top } 
	\\
	= & \det \sbr{M_{nV-1}} \det \sbr{ I + \lambda^*_{nV} \cdot \phi(\ti{x}_{nV})^\top M^{-1}_{nV-1}  \phi(\ti{x}_{nV}) } 
	\\
	= & \det \sbr{M_{nV-1}}  \sbr{ 1 + \lambda^*_{nV}  \|\phi(\ti{x}_{nV})\|^2_{M^{-1}_{nV-1}}  } 
	\\
	= & \det \sbr{\xi  I} \prod_{i=1}^{nV}  \sbr{ 1 + \lambda^*_{i}  \|\phi(\ti{x}_{i})\|^2_{M^{-1}_{i-1}}  } 
	\end{align*}
	Thus,
	\begin{align*}
	\frac{\det \sbr{ \xi  I +  \sum_{\ti{x} \in \ti{\cX}} \lambda^*_{\ti{x}} \phi(\ti{x}) \phi(\ti{x})^\top  } }{\det \sbr{\xi  I} }
	=\prod_{i=1}^{nV} \sbr{ 1 + \lambda^*_{i}  \|\phi(\ti{x}_{i})\|^2_{M^{-1}_{i-1}}  }  
	\end{align*}
	Taking logarithm on both sides, we have
	\begin{align*}
	& \log \frac{\det \sbr{ \xi  I +  \sum_{\ti{x} \in \ti{\cX}} \lambda^*_{\ti{x}} \phi(\ti{x}) \phi(\ti{x})^\top  } }{\det \sbr{\xi  I} }
	\\
	= & \sum_{i=1}^{nV} \log \sbr{ 1 + \lambda^*_{i}  \|\phi(\ti{x}_{i})\|^2_{M^{-1}_{i-1}}  } 
	\\
	\geq & \sum_{i=1}^{nV} \log \sbr{ 1 + \lambda^*_{i}  \|\phi(\ti{x}_{i})\|^2_{M^{-1}_{nV}}  } ,
	\end{align*}
	which completes the proof of Lemma~\ref{lemma:sum_ln_det}.
\end{proof}

%\begin{lemma}[Pinsker's inequality] \label{lemma:pinsker_inequality}
%	If $P$ and $Q$ are two probability distributions on a measurable space $(X, \Sigma)$, then for any measurable event $A \in \Sigma$, it holds that 
%	\begin{align*}
%	\abr{P(A)-Q(A)} \leq \sqrt{\frac{1}{2} \kl(P \| Q)}.
%	\end{align*}
%	%	where $\kl(P,Q)=\sum_{x \in X} P(x) \log(\frac{P(x)}{Q(x)})$ is the Kullback-Leibler divergence.
%\end{lemma}
%
%\begin{lemma}[Lemma~29 in \citep{tao2019collaborative}] \label{lemma:prod}
%	For any $\gamma_1, \dots, \gamma_K \in [0,1]$ and $x \geq 0$, it holds that
%	\begin{align*}
%	\prod_{i=1}^{K} \max \{1-\gamma_i-\gamma_i x, \  0\} \geq 	\prod_{i=1}^{K} (1-\gamma_i) - x
%	\end{align*}
%\end{lemma}

	}
	
\end{document}